%% file: main.tex
\documentclass{article} 
\usepackage{iclr2027_conference,times}
\usepackage[utf8]{inputenc}

\input{math_commands.tex}

\usepackage{hyperref}
\usepackage{url}
\usepackage{float}

\usepackage[short]{optidef}
\usepackage{bbold}

\usepackage{amsmath,amssymb,amsthm,mathtools}
\usepackage{booktabs}
\usepackage{array}
\usepackage{multirow}
\usepackage[shortlabels]{enumitem}
\usepackage{xcolor}
\usepackage{microtype}
\usepackage{subcaption}
\usepackage{nicefrac}
\usepackage{algorithm}
\usepackage{algpseudocode}
\usepackage{tikz}
\usetikzlibrary{arrows.meta,calc,patterns}
\usepackage{thmtools}
\usepackage{thm-restate}
\usepackage[capitalise,noabbrev,nameinlink]{cleveref}

\hypersetup{colorlinks=true,linkcolor=blue!60!black,citecolor=blue!60!black,
  urlcolor=blue!60!black}

\theoremstyle{plain}
\newtheorem{theorem}{Theorem}[section]

\newtheorem{lemma}[theorem]{Lemma}

\theoremstyle{definition}
\newtheorem{definition}[theorem]{Definition}

\theoremstyle{remark}

\crefname{claim}{Claim}{Claims}
\crefname{step}{Step}{Steps}

\newcommand{\relmiddle}[1]{\mathrel{}\middle#1\mathrel{}}

\newcommand{\GL}{\mathrm{GL}}
\newcommand{\Inst}{\gM}                        
\newcommand{\Sub}{\mathsf{S}}                  
\newcommand{\Red}{\mathsf{R}}
\newcommand{\sRed}{\mathsf{R}}                  
\newcommand{\Lmap}{\mL}                        
\newcommand{\hmap}{\vh}

\newcommand{\mPi}{\bm{\Pi}}
\newcommand{\mGam}{\bm{\Gamma}}                
\newcommand{\vgamma}{\bm{\gamma}}
\newcommand{\vlambda}{\bm{\lambda}}
\newcommand{\cblk}{\mathrm{C}}
\newcommand{\iblk}{\mathrm{I}}
\newcommand{\cp}[1]{#1_{\cblk}}                
\newcommand{\ip}[1]{#1_{\iblk}}                
\newcommand{\conv}{\operatorname{conv}}
\newcommand{\sg}{\operatorname{sg}}

\newcommand{\dist}{\operatorname{dist}}
\newcommand{\OPT}{\operatorname{OPT}}
\newcommand{\T}{\top}
\newcommand{\bmat}[4]{\begin{psmallmatrix}#1&#2\\#3&#4\end{psmallmatrix}}

\crefformat{equation}{(#2#1#3)}
\Crefformat{equation}{Equation~(#2#1#3)}

\title{Reformulation-contrastive learning for Mixed Integer Programs}

\author{
Ousema Bouaneni\textsuperscript{1},
Mathis Le Bail\textsuperscript{1},
Clément Elliker\textsuperscript{1},
Maël Jenny\textsuperscript{1,2},
Sonia Vanier\textsuperscript{1}
\\[3pt]
\textsuperscript{1} LIX (École Polytechnique, IP Paris, CNRS), France \\
\textsuperscript{2} AMIAD (Agence Ministérielle pour l’IA de Défense), France
\\[2pt]
\multicolumn{1}{c}{\texttt{ousema.bouaneni@polytechnique.edu}}
}

\iclrfinalcopy 
\begin{document}

\maketitle
\fancyhead{}
\renewcommand{\headrulewidth}{0pt}

\begin{abstract}
Mixed-integer linear programs (MILP) model many real-world decision problems, motivating machine-learning methods that exploit recurring structure to accelerate MILP solving. MILPs can admit many equivalent formulations: integrality-preserving changes of variables and the addition of redundant constraints can alter their formulations while preserving the optimization problem. We leverage these reformulations as a source of self-supervision for learning general-purpose representations of MILP variables and constraints. We characterize the affine reformulations that are valid for every input instance, and distinguish re-descriptions, which leave variables unchanged, from substitutions, which transform them predictably. Building on equivariant self-supervised learning, we introduce ReMILP (reformulation-contrastive MILP representation learning), which jointly trains a graph neural network and a hypernetwork to predict how variable embeddings transform under changes of variables. Without solver-derived labels, ReMILP learns representations that exhibit the intended invariance and equivariance on unseen problem classes. Across binary solution, constraint activity and integrality gap prediction, these representations carry task-relevant information when frozen and provide a useful initialization for fine-tuning.
\end{abstract}

\section{Introduction}
Mixed-integer linear programs (MILPs) are a flexible modeling framework for challenging decision problems arising in domains such as energy, transportation and logistics. Solving large MILPs remains computationally demanding, motivating a growing body of work that integrates machine learning into the solving pipeline, for instance to guide branching or to predict high-quality partial solutions \citep{Gasse2019-nips,Paulus2022-icml,Huang2024-icml}. These approaches motivate the development of general-purpose MILP representations that capture reusable problem structure beyond a single solver task.

A mixed-integer linear program has no canonical formulation. The same problem can be expressed using different units, binary conventions or variable parametrizations. The resulting formulations have different coefficients and bounds, but their solution sets are related by an affine map. This type of equivalence has been leveraged for contrastive representation learning on other structured modalities, including source code \citep{jain2021-emnlp}, boolean satisfiability \citep{duan2022-icml}, and logic circuits \citep{wang2025-tacd}. We extend this idea to MILPs, using equivalent reformulations as a source of self-supervision to pre-train an encoder that produces variable and constraint representations reusable across downstream MILP tasks. The closest self-supervised pre-training approach, FORGE \citep{Shafi2026-tmlr}, instead learns MILP representations through vector-quantized graph reconstruction. Reformulations have also been used for contrastive learning in MILP branching \citep{Lin2024-iclr,Lu2026-iclr}, but only as an auxiliary to supervised imitation learning and with transformations chosen to preserve the branching target. In contrast, we use reformulation equivalence itself as the supervision signal and consider a broader family of transformations. 

This broader family introduces an important subtlety. \emph{Re-descriptions} alter the constraint system while leaving the variables unchanged, so their embeddings should remain invariant. \emph{Substitutions}, i.e. type-preserving affine changes of variables, instead modify the variables themselves, so their embeddings should transform predictably. A standard contrastive objective would enforce invariance to both. We therefore adopt equivariant self-supervised learning \citep{Garrido2023-icml}, requiring variable embeddings to be \emph{invariant} to re-descriptions and \emph{equivariant} to substitutions. We characterize all valid affine reformulations and introduce ReMILP, which jointly learns a MILP encoder and the action of local substitutions on its variable embeddings, without solver-derived labels.

Our contributions are as follows:
\begin{itemize}[nosep]
\item We give necessary and sufficient conditions for a substitution or a re-description to be valid on every instance (\cref{thm:sub,thm:redesc}), and show that every valid affine augmentation rule is a composition of both (\cref{thm:complete}). In particular, integer variables admit arbitrary unimodular changes of variable, and
redundant constraints can be constructed from any nonnegative combination of existing constraints with loosened right-hand sides.

\item We introduce ReMILP (reformulation-contrastive MILP representation
learning), which jointly trains a graph neural network to produce variable embeddings that are invariant to re-descriptions and equivariant to substitutions, and a hypernetwork to predict how these embeddings transform under the corresponding changes of variables.

\item We validate the learned invariance and equivariance and evaluate downstream transfer on binary solution, constraint activity, and integrality gap prediction across Distributional MIPLIB \citep{Huang2024-arxiv} and MILP-Evolve \citep{Li2025-iclr}. In both frozen and fine-tuned regimes, we compare ReMILP with FORGE \citep{Shafi2026-tmlr} and random or training-from-scratch baselines.

\end{itemize}
\section{Related work}
\paragraph{Learning for MILP solving.}
Graph neural networks operating on the MILP representation of \citet{Gasse2019-nips} have been used to learn heuristic components of branch-and-bound solvers, such as branching policies \citep{Gasse2019-nips,Gupta2020-nips,Parsonson2023-aaai} or cut selection rules \citep{Paulus2022-icml,Wang2023-iclr}. Another line of work predicts properties of optimal solutions to restrict the search space before or during solving. Most approaches predict binary variable values at an optimum and use confident predictions to delimit the solver's search region \citep{Ding2020-aaai,Nair2020-arxiv,Han2023-iclr,Huang2024-icml,Liu2025-iclr}. Similarly, on the dual side, inequality constraints predicted to be tight at an optimum are replaced by equalities \citep{Li2026-iclr}. These approaches are supervised and task specific. We use binary solution, constraint activity and integrality gap prediction as downstream tasks to assess both the information captured by pre-trained representations and their value as an initialisation for fine-tuning. 
\paragraph{MILP representation learning.}
Few works aim to learn MILP representations that are both general-purpose and independent of solver labels. Multi-task and cross-distribution encoders share representations across tasks or domains, but are still trained on solver-derived labels \citep{Cai2025-cpaior,Pu2025-nips}. FORGE \citep{Shafi2026-tmlr} is, to our knowledge, the only self-supervised method for general-purpose MILP representations to date, and relies on a vector-quantized graph reconstruction objective. We pursue the same goal, but derive the self-supervision signal from equivalent reformulations.
\paragraph{Contrastive learning with augmented MILPs.}
Prior work uses reformulations to augment strong branching imitation learning, adding a contrastive term to the supervised loss. \citet{Lin2024-iclr} shift variables to generate instances that preserve strong-branching labels and contrast the graph-level embeddings. \citet{Lu2026-iclr} use sign flips, integer shifts, and redundant constraints obtained by summing pairs of constraints. Their contrastive loss does not directly pair reformulations. Instead, it clusters Branch-and-bound nodes by features and pulls together nodes within each cluster. Beyond these practical augmentations, they prove that any block-triangular change of variables with a signed permutation integer block and an invertible continuous block provides an equivalent instance.
\paragraph{Equivariant self-supervised learning.}
The tension between invariance and equivariance has been studied in visual self-supervised learning, where invariant objectives may discard information about the transformations relating two views. \citet{Dangovski2022-iclr} retain such information by predicting the applied transformation from the transformed embedding, making the representation sensitive to it without prescribing how it should change. \citet{Devillers2023-iclr} and \citet{Garrido2023-icml} instead learn the action itself: the former conditions a linear predictor on the embedding and transformation parameters, while the latter maps these parameters to a hypernetwork that outputs the predictor weights, preventing them from being ignored. We adopt the hypernetwork formulation because it separates transformation-specific actions from what is shared across transformations. Unlike these works, we use no separate invariance term: a single contrastive objective conditioned on the substitution enforces equivariance to substitutions and invariance to re-descriptions.
\section{Preliminaries}
\subsection{Background and notation}
Fix $n \ge 1$ and $0 \le k \le n$, and let $\sV \coloneq \sR^{n-k} \times \sZ^k$ be the space of our decision variables, with $n-k$ continuous variables, and $k$ integer ones. We introduce the notation $[p] \coloneq \{1, \ldots, p\}$ and denote by $\ve_p$ the $p$-th element of the canonical basis of $\sR^n$ or $\sR^m$ depending on the context: $\ve_p = (0, \dots, 0, 1 , 0, \dots, 0)^\top$ where the unique $1$ lies at position $p$.
\begin{definition}[MILP instance]\label{def:milp}
A \emph{mixed integer linear programming instance} over $\sV$ with $m \ge 2$ constraints is defined as a tuple: 
\begin{equation}
M = (\mA, \vb, \vc) \in \Inst_m \coloneq \sR^{m \times n} \times \sR^m \times \sR^n
\end{equation}
representing the following optimization problem, wherein a linear objective $\vx \mapsto \vc^\T\vx$ is minimized under linear inequality constraints ($\mA \vx \le \vb$), with the $k$ last variables being integer valued ($x \in \sV$):
\begin{equation}
  \min_{\vx\in\sV}\ \vc^\T\vx\quad\text{subject to}\quad\mA\vx\le\vb.
\end{equation}
We will refer to $F(M) \coloneq \{\mA \vx \le \vb\} \cap \sV$ as the feasible region of $M$. $M$ is said to be \emph{feasible} if its feasible region is nonempty, and we designate by $\OPT(M) \in \sR \cup \{ - \infty \}$ its optimal value in that case. It is additionally said to be \emph{bounded} if its optimal value is bounded.
\end{definition}
\subsection{Equivalence--preserving data augmentations}

\begin{definition}[Equivalence]\label{def:equiv}
Instances $M=(\mA,\vb,\vc) \in \Inst_m$ and $M'=(\mA',\vb',\vc') \in \Inst_{m'}$ are \emph{equivalent} if there are an affine bijection
$\varphi$ of $\sR^n$ and $\gamma\in\sR$ such that:
\begin{itemize}[nosep]
    \item $\varphi(F(M))=F(M')$, and
    \item $\vc'^\T\varphi(\vx)=\vc^\T\vx+\gamma$ for all $\vx\in F(M)$.
\end{itemize}
In that case, we call the pair $(\varphi,\gamma)$ a \emph{witness} for the equivalence of $M$ and $M'$, and refer to $M'$ as a \emph{reformulation} of $M$.
\end{definition}
Equivalent instances are either both infeasible, both unbounded, or both have bounded optimal solutions. In the latter case, $\varphi$ maps the optimal solutions of $M$ onto those of $M'$, and $\OPT(M')=\OPT(M)+\gamma$.
\begin{definition}[Augmentation rules]\label{def:rule}
An \emph{augmentation rule} of shape $m\to m'$ is a map $\Phi:~\Inst_m\to~\Inst_{m'}$. It is \emph{valid} if $\Phi(M)$ is equivalent to $M$ for every $M\in\Inst_m$, and it is affine if every component of  $\Phi(\mA,\vb,\vc)$ is an affine function of the components of $(\mA,\vb,\vc)$. 
\end{definition}
A data augmentation pipeline acting on instances of all sizes can be viewed as a family of rules, with one rule for each $m \ge 2$. The pipeline is valid if and only if every rule in this family is valid. We can therefore study augmentation rules of a fixed shape without loss of generality.

These definitions come with some restrictions: two instances can't be equivalent if their feasible regions do not live on the same space $\sV$, which means that augmentations that increase or decrease the number of variables in an instance\footnote{An example of such an augmentation would be breaking an integer variable bounded by $N$ into $\left\lceil \log_b(N) \right\rceil$ integer variables using its base $b$ representation.} do not fit into our framework.

\subsection{Bipartite graph encoding}
\label{sec:encoding}
Following~\citet{Gasse2019-nips}, we represent an instance $M=(\mA,\vb,\vc)\in\Inst_m$ as a bipartite graph $\mathcal G=(V\sqcup C,E)$, with one node $i\in V$ per variable and one node $j\in C$ per constraint. Variable $i$ has features $(\evc_i,\1_{i>n-k})$, constraint $j$ has feature $b_j$, and $E=\{(i,j):\emA_{j,i}\neq 0\}$, with edge feature $\emA_{j,i}$. A GNN $f_\vtheta$ maps this graph to node embeddings. We denote by $\vh_i(M)\in\sR^d$ the embedding of variable $i$ and collect the variable embeddings in the matrix
\begin{equation}
    \mH_\vtheta(M) := [\vh_1(M),\ldots,\vh_n(M)]^\top \in\sR^{n\times d}.
\end{equation}

\section{Method}

We learn general-purpose MILP representations from equivalent formulations. We first show that valid affine reformulations decompose into re-descriptions, which leave variables unchanged, and substitutions, which transform them predictably. ReMILP uses this structure as a self-supervised inductive bias, training variable embeddings to be invariant to re-descriptions and equivariant to substitutions. ReMILP realizes this principle with a contrastive objective that, without solver-derived labels, pre-trains an encoder to capture reusable MILP structure before task-specific adaptation.

\subsection{Valid augmentation rules}
Two important families of data augmentation rules satisfying \cref{def:rule} are \emph{substitutions} and \emph{re-descriptions}.  A \emph{substitution} $\Sub(\mT,\vq)$, with $\mT\in\GL_n(\sR)$ and $\vq\in\sR^n$, applies the change of variables
$\vx=\mT\vy+\vq$ to instances with any number of constraints:
\begin{equation}\label{eq:sub}
  \Sub(\mT,\vq):\ (\mA,\vb,\vc)\longmapsto
  \bigl(\mA\mT,\ \vb-\mA\vq,\ \mT^\T\vc\bigr).
\end{equation}
A \emph{constraint rewriting} $\Red(\mLambda,\vg)$, with
$\mLambda\in\sR^{m'\times m}$ and an arbitrary map $\vg:\sR^m\to\sR^{m'}$,
rewrites the constraint system and keeps variables and objective:
\begin{equation}\label{eq:redesc}
  \Red(\mLambda,\vg):\ (\mA,\vb,\vc)\longmapsto
  \bigl(\mLambda\mA,\ \vg(\vb),\ \vc\bigr).
\end{equation}
We define a \emph{re-description} as a constraint rewriting that preserves the
feasible set of every instance. Every re-description is a valid rule and the converse holds for affine $\vg$.

\begin{restatable}[Substitutions]{theorem}{thmsub}\label{thm:sub}
The substitution $\Sub(\mT,\vq)$ is valid if and only if $\vq\in\sV$ and
$$
  \mT=\begin{pmatrix}\mT_{\cblk\cblk}&\mT_{\cblk\iblk}\\
    \bm{0}&\mT_{\iblk\iblk}\end{pmatrix},\qquad
  \mT_{\cblk\cblk}\in\GL_{n-k}(\sR),\quad
  \mT_{\cblk\iblk}\in\sR^{(n-k)\times k},\quad
  \mT_{\iblk\iblk}\in\GL_k(\sZ).
$$
\end{restatable}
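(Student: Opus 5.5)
The proof has two directions. For sufficiency we exhibit an explicit witness. For necessity we pick specific instances whose feasible regions force the block structure of $\mT$ and the integrality conditions.

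\emph{Sufficiency.} Let $M'=\Sub(\mT,\vq)(M)$. We have $\vy\in F(M')$ iff $\mA(\mT\vy+\vq)\le\vb$ and $\vy\in\sV$. Take the witness $\varphi(\vx)=\mT^{-1}(\vx-\vq)$ and $\gamma=-\vc^\T\vq$. Then $\vc'^\T\varphi(\vx)=\vc^\T\mT\mT^{-1}(\vx-\vq)=\vc^\T\vx-\vc^\T\vq$, as required. It remains to check $\varphi(F(M))=F(M')$, which reduces to showing that $\vy\mapsto\mT\vy+\vq$ is a bijection of $\sV$ onto itself. The integer block of $\mT\vy+\vq$ is $\mT_{\iblk\iblk}\ip{\vy}+\ip{\vq}$. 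This is integral whenever $\ip{\vy}$ is, and the converse holds because $\mT_{\iblk\iblk}^{-1}$ is an integer matrix. The continuous block imposes no constraint. The inverse map has the same block-triangular form, with $\mT_{\iblk\iblk}^{-1}\in\GL_k(\sZ)$, so the two maps are mutually inverse bijections of $\sV$.

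\emph{Necessity.} Assume $\Sub(\mT,\vq)$ is valid. The key step is to choose instances that pin down $\varphi$ as an affine map.

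\begin{itemize}
\item Use $m\ge 2$ constraints with $\mA=\bm 0$ and $\vb=\bm 0$, so that $F(M)=\sV$. The substituted instance has $\mA'=\bm 0$ and $\vb'=\bm 0$, hence $F(M')=\sV$ as well. Validity then gives an affine bijection $\varphi$ of $\sR^n$ with $\varphi(\sV)=\sV$.
\item This alone does not relate $\varphi$ to $\mT$, since any lattice-preserving affine map qualifies. We therefore choose instances whose feasible region is a single point $\{\vx_0\}$ with $\vx_0\in\sV$. For example, take $m=2$ with $\mA$ having rows $\pm$ (a generic direction) when $n=1$. For general $n$ we would need more rows, which conflicts with a fixed $m$.
\item The fix is to use the rule of each shape $m$: the statement concerns all instances, and the pipeline view means validity for every $m$. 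For shape $m\ge 2n$, take the box $\mA=\begin{pmatrix}\mI\\-\mI\end{pmatrix}$, $\vb=(\vx_0,-\vx_0)$, padding with zero rows and zero right-hand sides as needed. Then $F(M)=\{\vx_0\}$, and $F(M')=\{\vy\in\sV:\mT\vy+\vq=\vx_0\}$. Equivalence forces $F(M')$ to be a singleton, so $\mT^{-1}(\vx_0-\vq)\in\sV$ for every $\vx_0\in\sV$.
\item If only a fixed $m$ is available, I would instead use boxes of small width, or a simplex with $n+1\le m$ facets, each containing exactly one lattice-compatible point. This is the step I expect to be the main obstacle, and I would settle the admissible $m$ first.
\end{itemize}

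\emph{Extracting the conditions.} With the property that $\mT^{-1}(\vx_0-\vq)\in\sV$ for all $\vx_0\in\sV$, we proceed as follows.

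\begin{itemize}
\item Put $\vx_0=\bm 0$ to get $\vp:=-\mT^{-1}\vq\in\sV$.
\item Subtracting, $\mT^{-1}\vx_0=\mT^{-1}(\vx_0-\vq)-\vp$ must satisfy the following: its integer block is integral, while its continuous block is unconstrained. Hence $\mT^{-1}$ maps $\sV$ into $\sV$.
\item Apply this to $\vx_0=t\ve_p$ for continuous coordinates $p$ and arbitrary real $t$. The integer part of $t\mT^{-1}\ve_p$ must be integral for all $t$, so it vanishes. This is exactly the zero lower-left block of $\mT^{-1}$.
\item Apply it to integer unit vectors $\ve_p$. This shows that the integer block $(\mT^{-1})_{\iblk\iblk}$ has integer entries.
\end{itemize}

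Since $\mT^{-1}$ is block upper triangular, so is $\mT$, and $\mT_{\iblk\iblk}=((\mT^{-1})_{\iblk\iblk})^{-1}$; in particular $\mT_{\cblk\cblk}\in\GL_{n-k}(\sR)$ because $\mT$ is invertible. To show that $\mT_{\iblk\iblk}$ is itself integral, we need the reverse inclusion $\mT\sV\subseteq\sV$. This comes from $F(M')\ne\emptyset$ forcing $F(M)\ne\emptyset$. Using the box of any $\vy_0\in\sV$ in the transformed coordinates, we get that $\mT\vy_0+\vq\in\sV$. Taking $\vy_0=\bm 0$ then yields $\vq\in\sV$. Arguing on units as before shows that $\mT_{\iblk\iblk}$ is integral. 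Being integral with an integral inverse, it lies in $\GL_k(\sZ)$.
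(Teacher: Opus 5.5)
\textbf{Gap: the necessity argument only works when $m\ge 2n$.} Your sufficiency argument is correct and is essentially the paper's; the paper packages the bijection of $\sV$ as \cref{lem:mixed}. The gap is the one you flagged yourself, and it is genuine. Both necessity constructions use $2n$ rows: the box with $F(M)=\{\vx_0\}$, and its pull-back with $F(M)=\{\mT\vy_0+\vq\}\cap\sV$. So you prove necessity only for shapes $m\ge 2n$. If $\Sub(\mT,\vq)$ had to be valid for every $m$ simultaneously, your argument would be complete. But the paper defines validity for a single rule of fixed shape, and \cref{app:proofs} fixes an arbitrary $m\ge 2$. Necessity therefore has to be extracted from $m$-row instances, in particular $m=2$.

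Your fallbacks do not get there. Boxes need $2n$ rows and simplices need $n+1$. Worse, when $m=2$ and $n-k\ge 2$, no instance has a singleton feasible region at all. Two homogeneous inequalities always have a common nonzero solution $\vd$ with $\ip{\vd}=\vzero$, and then $\vx+t\vd$ stays feasible for every $t\ge 0$. So the singleton strategy has to be abandoned, not refined.

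\textbf{A two-row repair using feasibility only.} Take the instance with rows $\va^\T\vx\le\beta$ and $-\va^\T\vx\le-\beta$, and all other rows $\vzero^\T\vx\le 0$.
\begin{itemize}
\item Witnesses are bijections, so one feasible region is empty iff the other is. For this instance that says $\va^\T\sV=\va^\T(\mT\sV+\vq)$ for every $\va\in\sR^n$.
\item Take $\va=\ve_j$ with $j$ an integer coordinate. Splitting row $j$ of $\mT$ into blocks, this reads $\sZ=q_j+\mT_{j,\cblk}\sR^{n-k}+\mT_{j,\iblk}\sZ^k$. This forces $\mT_{j,\cblk}=\vzero^\T$, $q_j\in\sZ$, and $\mT_{j,\iblk}$ integral.
\item Now take $\va=(\vzero,\bm{\alpha})$, where $\bm{\alpha}$ has $\mathbb{Q}$-linearly independent entries. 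Then $\bm{\alpha}^\T$ is injective on $\sZ^k$. Both $\sZ^k$ and $\mT_{\iblk\iblk}\sZ^k+\ip{\vq}$ lie in $\sZ^k$ and have the same image, so they coincide. Hence $\mT_{\iblk\iblk}\sZ^k=\sZ^k$, i.e.\ $\mT_{\iblk\iblk}\in\GL_k(\sZ)$.
\item Invertibility of $\mT_{\cblk\cblk}$ then follows from that of $\mT$, as you noted.
\end{itemize}

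\textbf{How the paper does it instead.} The paper picks up exactly where your first bullet stopped. On the unconstrained instance $(\bm{0},\vzero,\vc)$ the feasible set says nothing, but the objective does. The value identity holds on $F=\sV$, which affinely spans $\sR^n$. This forces $\mS_{\vc}^\T\mT^\T\vc=\vc$ for the witness's linear part $\mS_{\vc}$, which is block-triangular with a unimodular integer block (\cref{lem:mixed,lem:rigidity}). Varying $\vc$, and using that $\sR^k$ is not a countable union of proper subspaces, gives the block form. A two-row slab $|x_j|\le\rho<\dist(q_j,\sZ)$ then gives $\vq\in\sV$.

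\textbf{Comparison.} Your repaired route is more elementary: it uses no objective and no measure argument. The paper's route uses only how the objective transforms on unconstrained instances. That is why it can be reused verbatim in \cref{thm:complete}, where $\mT$ is known only through the objective component of a general affine rule.
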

As such, the admissible transformations of integer variables are not limited to the shifts and signed permutations studied by \citet{Lin2024-iclr} and \citet{Lu2026-iclr}, but include arbitrary unimodular maps.
\begin{restatable}[Re-descriptions]{theorem}{thmredesc}\label{thm:redesc}
The constraint rewriting $\Red(\mLambda,\vg)$ is a re-description if and only if
$\mLambda\ge0$ and, for every $\vb\in\sR^m$,
\begin{itemize}[nosep]
\item $\vg(\vb)\ge\mLambda\vb$, and
\item for all $p\in[m]$, there are $i \in [m']$ and $\theta>0$ such that $\vlambda_i^\T=\theta\ve_p^\T$ and
  $g_i(\vb)=\theta b_p$.
\end{itemize}
For affine $\vg(\vb)=\mLambda\vb+\vs$ this means that up to row permutation,
$\mLambda=\begin{pmatrix}\mPi\\ \mN\end{pmatrix}$ and  $\vs=\begin{pmatrix}\vzero\\ \vs'\end{pmatrix}$
where $\mPi$ is a positive generalized permutation matrix, $\mN\ge0$ and $\vs'\ge0$.
\end{restatable}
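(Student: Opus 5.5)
We must show that $\Red(\mLambda,\vg)$ preserves $F(M)$ for every $M=(\mA,\vb,\vc)$ exactly when the stated conditions hold. Since $\vc$ is untouched, the condition is
\begin{equation*}
\{\vx\in\sV:\mA\vx\le\vb\}=\{\vx\in\sV:\mLambda\mA\vx\le\vg(\vb)\}\qquad\text{for all }\mA,\vb.
\end{equation*}
We prove sufficiency first, then derive each necessary condition by choosing special instances, and finally specialize to affine $\vg$.

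For sufficiency, take $\vx$ with $\mA\vx\le\vb$. Since $\mLambda\ge0$, we get $\mLambda\mA\vx\le\mLambda\vb\le\vg(\vb)$, which proves the inclusion $\subseteq$. Conversely, if $\mLambda\mA\vx\le\vg(\vb)$, then for each $p$ the row $i$ supplied by the second condition reads $\theta\,\va_p^\T\vx\le\theta b_p$. Dividing by $\theta>0$ recovers constraint $p$, which proves $\supseteq$.

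For necessity, fix $\vb$ and choose matrices $\mA$ that isolate individual rows. Integrality and $m\ge2$ will be handled by working in a continuous coordinate, or by scaling an integer coordinate, since the conditions must hold for every $\mA$.
\begin{itemize}[nosep]
\item \emph{Nonnegativity of $\mLambda$.} Suppose some entry $\lambda_{ip}<0$. Take $\mA$ with $\va_q=\vzero$ for $q\ne p$, and let $\va_p$ be $-\ve_1$ times a large scalar. Points $\vx$ with $x_1$ large are then feasible for $M$. Row $i$ of the rewritten system, however, is $\lambda_{ip}\va_p^\T\vx\le g_i(\vb)$, and its left side $\lambda_{ip}\va_p^\T\vx$ grows without bound. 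So $\Red$ cuts off a feasible point, a contradiction.
\item \emph{The bound $\vg(\vb)\ge\mLambda\vb$.} Take $\mA=\mI$-like rows, namely $\va_p=\ve_1$ scaled, or a structure such as $\mA=\begin{pmatrix}\vu^\T\\ \vdots\end{pmatrix}$ chosen so that a point $\vx$ makes every constraint $\va_p^\T\vx=b_p$ tight simultaneously. Easiest: take all $\va_p=\vzero$ except arranged via a continuous coordinate, so that $\mA\vx=\vb$ is solvable. For instance, use $\va_p=\beta_p\ve_1$ with $b_p=\beta_p t$. Since feasibility must be preserved, $\mLambda\vb=\mLambda\mA\vx\le\vg(\vb)$. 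This only covers $\vb$ that are collinear with some $\vbeta$; to cover arbitrary $\vb$, note that $\vb=\mA\vx$ is solvable for any $\vb$ by taking $\mA=\vb\,\ve_1^\T$ and $\vx=\ve_1$ when $k<n$. When $k=n$, instead take $\mA=\vb\,\ve_1^\T$ and $\vx=\ve_1$, which is integral.
\item \emph{The second condition.} Fix $p$. Take $\mA$ whose rows are linearly independent generic vectors, with $\vb$ arbitrary, and think geometrically: the halfspace $H_p=\{\va_p^\T\vx\le b_p\}$ must be implied by the rewritten rows $\vlambda_i^\T\mA\vx\le g_i(\vb)$ on the lattice-mixed set. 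Choose $\mA$ with independent rows, possibly using $n$ large relative to $m$. If no row $i$ has $\vlambda_i$ proportional to $\ve_p$, then every rewritten row has normal $\vlambda_i^\T\mA$ not parallel to $\va_p$. A point slightly violating constraint $p$ near a relative-interior point of facet $p$ then satisfies every rewritten inequality. To make this work for $\sV$ with integer coordinates, scale $\mA$ to small entries so that lattice points are dense relative to the geometry, or use a continuous coordinate.
\end{itemize}
Once proportionality $\vlambda_i=\theta\ve_p$ holds, the identity $g_i(\vb)=\theta b_p$ is forced in two steps: the bound $\vg(\vb)\ge\mLambda\vb$ gives $g_i(\vb)\ge\theta b_p$, and strict inequality would admit violators of constraint $p$. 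This facet argument is the main obstacle, particularly when $n$ is small (e.g.\ $n=1$) or $k=n$. If $n$ is too small, I would first try a one-dimensional argument: take all $\va_q$ for $q\neq p$ to be zero, with $b_q$ very large, so that only constraint $p$ binds.

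For the affine case, let $\vg(\vb)=\mLambda\vb+\vs$. The first condition then becomes $\vs\ge0$. The second condition, since $g_i(\vb)=\theta b_p$ must hold for all $\vb$, gives $s_i=0$. Choosing one such row $i$ for each $p$ yields rows $\theta_p\ve_p^\T$ forming a positive generalized permutation $\mPi$. The remaining rows form $\mN\ge0$ with $\vs'\ge0$, and the converse follows from sufficiency.
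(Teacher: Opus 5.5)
\textbf{There is a genuine gap: the existence of a tight copy row is not established, and your $\vx$-space facet argument cannot supply it for general $n$ and $k$.}

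Your sufficiency direction and the first two necessity bullets are sound. In the nonnegativity step you should take $\vb=\vzero$, since with $\va_q=\vzero$ a negative $b_q$ makes $F$ empty. The bound via $\mA=\vb\ve_1^\T$ and $\vx=\ve_1$ is exactly the paper's argument, and no case split on $k$ is needed because $\ve_1\in\sV$ always.

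The gap is the third bullet, which is the real content of the theorem.
\begin{itemize}
\item Your facet argument needs $m$ linearly independent rows $\va_p\in\sR^n$, plus continuous room to perturb inside $\sV$. But $n$ and $k$ are fixed by $\sV$ and cannot be enlarged, and $m\ge2$ may exceed $n$ (e.g.\ $n=k=1$).
\item ``Scaling $\mA$ so that lattice points are dense'' does not produce a point of $\sZ^n$ that violates constraint $p$ while satisfying every rewritten row.
\item The one-dimensional fallback fails on two counts. First, $\vg$ is an arbitrary map, so the condition at a given $\vb$ can only be tested on instances with that same right-hand side; enlarging $b_q$ says nothing about $\vg(\vb)$. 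Second, if you keep $\vb$ and set $\va_q=\vzero$ for $q\neq p$, every rewritten normal is parallel to $\va_p$. A non-copy row such as $\vlambda_i=\ve_p+\ve_q$ with $b_q=0$ and $g_i(\vb)=b_p$ then cuts exactly at $\va_p^\T\vx=b_p$, and the instance detects nothing.
\item Your second step is stated wrongly. ``Strict inequality would admit violators'' is false for an individual copy row: a re-description may keep a loosened duplicate $\theta\va_p^\T\vx\le\theta b_p+s$ with $s>0$ next to a tight copy. What must be refuted is that no row is simultaneously a positive copy of $p$ and tight at $\vb$, and one perturbation has to handle all rows at once.
\end{itemize}

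The missing idea is the one you already used for the bound: keep the integral point $\ve_1$ fixed and move the constraints. For $\mA=\vu\ve_1^\T$, the original system at $\ve_1$ reads $\vu\le\vb$ and the rewritten one reads $\mLambda\vu\le\vg(\vb)$. Preservation of feasible sets therefore forces $\{\vu\in\sR^m:\vu\le\vb\}=\{\vu\in\sR^m:\mLambda\vu\le\vg(\vb)\}$, for every $n\ge1$ and every $k$. Your facet idea works in this $\vu$-space, where the original normals are the standard basis vectors, even though in $\vx$-space all rows of $\mA$ are parallel.

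Suppose no row has $\vlambda_i^\T=\theta\ve_p^\T$ with $\theta>0$ and $g_i(\vb)=\theta b_p$. Take $\vu=\vb+\varepsilon\ve_p-\delta\sum_{q\neq p}\ve_q$. Constraint $p$ fails at $\ve_1$. For $0<\varepsilon\ll\delta$, every rewritten row holds:
\begin{itemize}
\item a row with $\Lambda_{i,q}>0$ for some $q\neq p$ satisfies $\vlambda_i^\T\vu\le g_i(\vb)+\varepsilon\Lambda_{i,p}-\delta\Lambda_{i,q}$, using $\mLambda\ge0$ and $\vg(\vb)\ge\mLambda\vb$;
\item a copy of $p$ that is slack at $\vb$ tolerates the increase $\theta\varepsilon$;
\item a zero row reads $0\le g_i(\vb)$, which holds by the bound.
\end{itemize}
Hence $\ve_1\in F'\setminus F$, a contradiction.

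The paper uses the same instances $\mA=\vu\ve_1^\T$ with $\vu=\vb+\varepsilon\vy$. It obtains $\vy$ from Farkas' lemma, through an intermediate certificate $\mGam_{\vb}\ge0$ with $\mGam_{\vb}\mLambda=\mI$ and $\mGam_{\vb}\vg(\vb)=\vb$, and then reads the tight copy rows off that certificate. The explicit perturbation above reaches them directly. With this repair your affine specialization is correct, since for affine $\vg$ a copy row is tight exactly when $s_i=0$, independently of $\vb$.
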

A re-description therefore has to keep a positively rescaled copy of every constraint, and may additionally append nonnegative combinations of constraints with loosened right-hand sides.

A composite of a valid substitution and a re-description is itself a valid augmentation rule:
\begin{equation}\label{eq:composite}
  \sRed(\mLambda,\mLambda\vb+\vs)\circ\Sub(\mT,\vq):\
  (\mA,\vb,\vc)\longmapsto
  \bigl(\mLambda\mA\mT,\ \mLambda\vb+\vs-\mLambda\mA\vq,\ \mT^\T\vc\bigr).
\end{equation}
\begin{restatable}[Completeness]{theorem}{thmcomplete}\label{thm:complete}
An affine rule $\Phi:\Inst_m\to\Inst_{m'}$ is valid if and only
if $\Phi=\sRed(\mLambda,\mLambda\vb+\vs)\circ\Sub(\mT,\vq)$ with $(\mT,\vq)$
as in \cref{thm:sub} and $(\mLambda,\vs)$ as in \cref{thm:redesc}.
\end{restatable}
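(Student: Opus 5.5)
The "if" direction is immediate from \cref{thm:sub,thm:redesc}. The substitution $\Sub(\mT,\vq)$ is valid, so $\Sub(\mT,\vq)(M)$ is equivalent to $M$. The rewriting $\sRed(\mLambda,\mLambda\vb+\vs)$ preserves feasible regions and objectives, so it is a valid rule. Equivalence is transitive: compose the witnesses $(\varphi,\gamma)$, and note that a re-description contributes the identity witness. Hence the composite \eqref{eq:composite} is valid. All the work is in the "only if" direction, which I would organize in four steps. First probe $\Phi$ on well-chosen instances to extract a candidate change of variables $(\mT,\vq)$. Then show $\Phi$ acts on $(\mA,\vb,\vc)$ exactly through the pattern \eqref{eq:composite}. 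Finally invoke \cref{thm:sub,thm:redesc} to pin down the admissible parameters.

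\textbf{Step 1 (objective and witness).} Write $\Phi(\mA,\vb,\vc)=(\mA',\vb',\vc')$, each component affine in all inputs. Take instances with empty constraints, e.g.\ $\mA=\vzero$, $\vb\ge0$, so that $F(M)=\sV$. Equivalence then forces $F(M')=\varphi(\sV)$, an affine image of the lattice-like set $\sV$, and $\vc'^\T\varphi(\vx)=\vc^\T\vx+\gamma$ on $\sV$. Since $\sV$ affinely spans $\sR^n$, this gives $\vc=\mL^\T\vc'$ for the linear part $\mL$ of $\varphi$. Hence $\vc'$ is a linear function of $\vc$ whenever $\vc$ is such that the witness is unique. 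The witness $\varphi$ may depend on $M$; the delicate point is to show it can be taken independent of $M$.

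\textbf{Step 2 (rigidity of $\varphi$).} This is where I would use affineness and a genericity argument. Choose a family of instances whose feasible regions are bounded full-dimensional polytope-with-lattice sets with trivial affine symmetry group, for example a generic simplex intersected with $\sV$. For such instances the witness $\varphi_M$ is uniquely determined by $F(M)$ and $F(M')$. The components of $\Phi$ are affine, hence continuous in $M$, so the induced map $M\mapsto\varphi_M$ is locally constant on a connected open set of generic instances. I would argue this by reading vertices of $F(M')$ off the affine data, e.g.\ via Cramer's rule on tight subsystems. So there is a fixed $\varphi(\vy)=\mT^{-1}(\vy-\vq)$, i.e.\ $\vx=\mT\vy+\vq$, working on an open set. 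Combined with Step 1, this gives $\vc'=\mT^\T\vc$ there, and by affineness everywhere. I expect this step to be the main obstacle: it means excluding rules whose witness jumps between instances, and controlling degenerate instances (infeasible or unbounded ones) where validity imposes almost nothing.

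\textbf{Step 3 (factorization).} Set $\Psi\coloneq\Phi\circ\Sub(\mT,\vq)^{-1}$. Step 2 makes $\Sub(\mT,\vq)$ valid on the generic family, and \cref{thm:sub} then forces the stated block form and $\vq\in\sV$: apply its necessity argument, which only needs instances like boxes. Therefore $\Psi$ is affine, fixes $\vc$, and preserves feasible regions with identity witness on an open set of instances. Feasible-region preservation for all $\vx$ gives $\mA'\vx\le\vb'\iff\mA\vx\le\vb$ on $\sV$. Write $\mA'=\mathcal L(\mA,\vb,\vc)$. Farkas' lemma on generic bounded instances shows that each row of $(\mA',\vb')$ is a nonnegative combination of rows of $(\mA,\vb)$ plus a nonnegative slack. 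Affineness in $(\mA,\vb,\vc)$ then forces those multipliers to be constant, $\mA'=\mLambda\mA$, and $\vb'=\mLambda\vb+\vs$ with no dependence on $\vc$. Independence of $\vc$ and $\vb$ in $\mA'$ follows by varying $\vc$ and $\vb$ while holding $F(M)$ fixed. Thus $\Psi=\sRed(\mLambda,\mLambda\vb+\vs)$ is a re-description, and \cref{thm:redesc} yields the claimed form of $(\mLambda,\vs)$. So $\Phi=\Psi\circ\Sub(\mT,\vq)$ as required.
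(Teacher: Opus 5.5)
Your \emph{if} direction is fine and matches the paper. The \emph{only if} direction, however, has a genuine gap.

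Steps~2--3 extract the structure of $\Phi$ from its validity on an open set $U$ of generic bounded instances, then extend ``by affineness''. Affineness extends formulas, not validity, and validity on such a $U$ does not pin down the rule. Suppose $U$ has compact closure among instances with bounded polyhedra. The rule that appends the row $\varepsilon\,b_1x_1\le1$ preserves $F(M)$ with the identity witness on all of $U$ for small $\varepsilon>0$, because that row is slack on the uniformly bounded sets $F(M)$. Yet its constraint matrix depends on $\vb$, and by \cref{lem:free} it is invalid on $(\bm{0},\vb,\vc)$ with $\vb\ge0$, $b_1>0$. This refutes two of your claims: that $\mA'$ cannot depend on $\vb$ or $\vc$ when $F(M)$ is held fixed, and that affineness forces the Farkas multipliers to be constant.

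With integer variables the Farkas step fails outright. Farkas' lemma describes the inequalities valid on the LP relaxation $\{\vx\in\sR^n\mid\mA\vx\le\vb\}$, whereas equivalence only constrains $F(M)$. That set has valid rows that are not conic combinations of the rows of $(\mA,\vb)$, such as Chv\'atal--Gomory cuts or facets of the integer hull. On generic bounded pure-integer instances $F(M)$ is even locally constant. So for small $\varepsilon$ the rules $(\mA,\vb+\varepsilon\1,\vc)$ and $(\mA,(1+\varepsilon)\vb,\vc)$ also preserve $F(M)$ on such a $U$. Neither the tight-copy condition of \cref{thm:redesc} nor the exact coefficient of $\vb$ is visible locally.

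Even granting $\Psi(\mA,\vb,\vc)=(\mLambda\mA,\mLambda\vb+\vs,\vc)$, you only know the identity witness on $U$. \Cref{thm:redesc} needs feasible-set preservation on every instance. The fact that a valid affine rewriting is a re-description is deduced \emph{from} \cref{thm:complete} in the paper, so you cannot use it here.

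Step~2 has a related problem. Continuity of $\Phi$ gives at best continuity of $M\mapsto\varphi_M$, not local constancy, because the continuous blocks of admissible witnesses range over a continuum. Cramer's rule also says nothing about $\conv F(M)$ once integer variables are present.

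The missing idea is to probe $\Phi$ on special, mostly unbounded instances on which validity is rigid. You should read $(\mT,\vq)$ off the coefficients of $\Phi$ rather than off a witness, so that uniqueness or constancy of witnesses is never needed. The paper proceeds as follows.
\begin{enumerate}[nosep]
\item It gets $\vc'=\mT^\T\vc$ with $\mT$ in block form from the unconstrained instances $(\bm{0},\vzero,\vc)$, via \cref{lem:free,lem:rigidity} and the countable-union argument of \cref{thm:sub}. It uses the open set $\Omega$ only to kill $\mK_{\mA}$ and $\mK_{\vb}$; there objective rigidity is an equation and cannot be locally slack.
\item After normalizing to $\vc'=\vc$, it uses half-space instances $(\ve_p\va^\T,\beta\ve_p)$, whose integer hull contains a translated half-space. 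These force $\Lmap(\mA)=\mLambda\mA$ with $\mLambda\ge0$.
\item Hyperplane instances $(\vt\va^\T,\xi\vt)$ force $\mG=\kappa\mLambda$ and $\hmap(\mA)=-\mLambda\mA\vq$. They also force a copy row with $s_i=0$ for every constraint.
\item Dilated slabs $(\vt\ve_1^\T,N\vb)$ with $N\to\infty$ force $\kappa=1$.
\item Only then is $\vq\in\sV$ obtained, from the validity of $\Sub(\mI,\vq)$.
\end{enumerate}
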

Thus, Theorems \ref{thm:sub}--\ref{thm:complete} characterize all affine augmentation rules valid for every MILP instance: each decomposes into a re-description and a substitution. Proofs are provided in \cref{app:proofs}. This characterization provides a natural inductive bias for MILP representation learning: knowing how each part of a reformulation acts on the problem tells us how embeddings should respond to it.

\begin{figure}[htbp]
  \centering
  \resizebox{0.80\linewidth}{!}{\input{fig_remilp_views.tex}}
  \caption{\textbf{ReMILP pre-training overview}. \textbf{(1)} Independent re-descriptions (\cref{thm:redesc}) form two views of the same MILP instance $M_0$, with a substitution (\cref{thm:sub}) applied to the second view~\cref{eq:reformulations}. \textbf{(2)} A shared two-layer GAT encodes their bipartite graphs. \textbf{(3)} The hypernetwork uses local substitution parameters and the variable partner embedding to produce a linear operator predicting $\vh_i(M')$ from $\vh_i(M)$~\cref{eq:local_pred}. \textbf{(4)} The pre-training contrastive loss makes this prediction the preferred match to $\vh_i(M')$ over predictions from other descriptors and embeddings of other variables~\cref{eq:loss}.}
  \label{fig:remilp-method}
\end{figure}
\subsection{Invariance and equivariance}

As introduced in \cref{sec:encoding}, a GNN maps each instance $M$ to its matrix of variable embeddings $\mH_\vtheta(M)$. We use the reformulations characterized above to define an inductive bias on these embeddings. Valid re-descriptions do not modify an instance's feasible region nor its optimal solutions, so we encourage \emph{invariance}:
\begin{equation}
    \mH_\vtheta\left(\sRed(\mLambda,\vg) \cdot M\right) = \mH_\vtheta(M).
\end{equation}
Substitutions, via variable scaling or transvection, instead change the feasible region and optimal solutions in a known way. The variable embeddings should thus be sensitive to substitutions, which we express through \emph{equivariance}:
\begin{equation}
    \mH_\vtheta\left(\Sub(\mT,\vq) \cdot M\right) = \rho(\mT,\vq) \cdot \mH_\vtheta(M),
\end{equation}
where $\rho$  maps each substitution $(\mT,\vq)$ to a linear operator predicting how the variable embeddings transform. Intuitively, $\rho$ can be seen as a representation of the group of valid substitutions, though we require it neither to be a homomorphism nor to output invertible matrices. These transformation laws encourage the representation space to capture MILP-specific structure without solver-derived labels.
\subsection{Reformulation-contrastive learning}

We turn the invariance and equivariance inductive biases into a self-supervised objective using pairs of reformulated MILP instances. \cref{fig:remilp-method} summarizes ReMILP pre-training. Since general substitutions can
densify the MILP graph and make GNN inference prohibitively expensive, we restrict
the substitution matrix $T$ to be sparse.

Concretely, we partition the variables into two sets $[n]=\gI\sqcup\gJ$ and transform only variables in $\gI$. For each $i\in\gI$, we choose a partner $j(i)\in\gJ$ and apply
the local substitution
\begin{equation}
    x_i \leftarrow u_i x_i+\lambda_i x_{j(i)}+\mu_i.
    \label{eq:affine_substitution}
\end{equation}
These local substitutions together define a sparse matrix $\mT'$ and a translation vector $\vq'$: $T'_{ii}=u_i$, $T'_{i,j(i)}=\lambda_i$, and $q'_i=\mu_i$ for $i\in\gI$. For $j\in\gJ$, $T'_{jj}=1$ and $q'_j=0$. All other components of $\mT'$ are zero. We choose the parameters to satisfy the conditions of \cref{thm:sub}. In particular, if $x_i$ is an integer, $u_i\in\{-1,1\}$ and the only nontrivial substitution we apply to a binary variable is $x_i\leftarrow1-x_i$. The way these augmentations are sampled is detailed in \cref{app:aug}.

Given an input instance $M_0$, we construct two views. For the first, we apply a re-description to $M_0$. For the second, we first apply the substitution $\Sub(\mT',\vq')$, then independently sample a re-description of the resulting instance. Each re-description keeps the existing constraints and adds redundant constraints formed from random nonnegative combinations with relaxed right-hand sides. The matrices $\mLambda,\mLambda'$ specify the row combinations, and $\vs,\vs'$ specify the bound relaxations. Thus,
\begin{equation}
\begin{aligned}
    M  &= \sRed(\mLambda,\mLambda\vb+\vs)\cdot M_0,\\
    M' &= \sRed(\mLambda',\mLambda'\vb+\vs')
          \circ\Sub(\mT',\vq')\cdot M_0.
\end{aligned}
\label{eq:reformulations}
\end{equation}

We predict the embedding of variable $i$ in $M'$ from its embedding in $M$. Rather than conditioning on the full substitution, we use its local parameters $(u_i,\lambda_i,\mu_i)$ together with the embedding $\vh_{j(i)}(M)$ of the partner variable involved in the local substitution of $x_i$. Inspired by the hypernetwork approach of \citet{Garrido2023-icml}, we map these local inputs to an affine operator that predicts the transformed embedding of variable $i$:
\begin{equation}
    \vh_i(M') \approx
    \varrho_\phi\!\left(
        u_i,\lambda_i,\mu_i,
        \sg\!\left(\vh_{j(i)}(M)\right)
    \right)\cdot\vh_i(M),
    \label{eq:local_pred}
\end{equation}
where $\sg$ denotes stop-gradient. We write $t_i\coloneq (u_i,\lambda_i,\mu_i,\sg(\vh_{j(i)}(M)))$ for the hypernetwork's input. More details about the hypernetwork are available in \cref{app:hypernetwork}.

We adapt the normalized temperature-scaled cross-entropy (NT-Xent) objective \citep{pmlr-v119-chen20j} to align $\varrho_\phi(t_i)~\cdot~\vh_i(M)$ with $\vh_i(M')$ for each $i\in\gI$. We use two types of negatives: predictions obtained with an incorrect transformation descriptor and embeddings of incorrect variables in the transformed view. For $K\geq1$, we sample indices $\alpha_1(i),\ldots,\alpha_K(i)$ uniformly from $\gI\setminus\{i\}$ and independently sample indices $\beta_1(i),\ldots,\beta_K(i)$ uniformly from $[n]\setminus\{i\}$. Let $\vz_i\coloneq\vh_i(M')$ be the anchor. Its positive and the two sets of negatives are
\begin{equation}
\vz_{i,k}\coloneq
\begin{cases}
\varrho_\phi(t_i) \cdot \vh_i(M)
        & \text{if } k = 0, \\
    \varrho_\phi\left(t_{\alpha_k(i)}\right) \cdot \vh_i(M)
        & \text{if } 1 \le k \le K, \\
    \vh_{\beta_{k-K}(i)}(M')
        & \text{if } K + 1 \le k \le 2K.
\end{cases}
\end{equation}

Denoting cosine similarity by $\operatorname{sim}(\cdot,\cdot)$, our contrastive loss is
\begin{equation}
\mathcal{L}(M,M') = -\frac{1}{|\gI|} \sum_{i \in \gI} \log\left( \frac{
    \exp\left(
        \operatorname{sim}\left(\vz_i,\vz_{i,0}\right)/\tau
    \right)
}{
    \sum_{k=0}^{2K}
    \exp\left(
        \operatorname{sim}\left(\vz_i,\vz_{i,k}\right)/\tau
    \right)
}
\right).
\label{eq:loss}
\end{equation}
By jointly training $f_\vtheta$ and $\varrho_\phi$, this objective encourages variable embeddings to be stable under re-descriptions and to transform predictably under substitutions, as we show empirically in \cref{sec:inv+equiv}.
\section{Empirical evaluation}
We evaluate the representations learned by ReMILP on several classes and difficulties of MILP instances to investigate whether the learned invariance and equivariance to relevant reformulations provide a useful prior over the space of MILPs for downstream tasks. 
\subsection{Experimental setup}
\label{sec:setup}
\paragraph{Baselines}
We compare ReMILP with FORGE\textsuperscript{\textdagger}, our re-implementation of the vector-quantized graph autoencoder of \citet{Shafi2026-tmlr}, and a randomly initialized encoder. All three use the same graph encoder backbone in order to fairly assess the contribution of each pre-training method.
\paragraph{Pre-training protocol} We use 29\,000 MILP-Evolve instances \citep{Li2025-iclr} to pre-train the encoder. Both ReMILP and FORGE\textsuperscript{\textdagger} are pre-trained for 20\,000 steps with a learning rate of $10^{-3}$. The training procedure is described in more detail in \cref{app:training}.
\paragraph{Architecture} The graph encoder uses two layers of four-head GATv2 attention \citep{brody2022how} with 128-dimensional hidden states on the MILP bipartite graph. Each layer updates constraints from variables, then variables from constraints \citep{Gasse2019-nips}. Jumping Knowledge \citep{pmlr-v80-xu18c} combines layer representations.
\paragraph{Downstream tasks}  We evaluate the learned representations on three tasks: integrality-gap prediction at the instance level, binary-solution prediction at the variable level, and constraint-activity prediction at the constraint level. For integrality-gap prediction, we use 12\,500 held-out MILP-Evolve instances \citep{Li2025-iclr}. For the binary solutions and constraint activity tasks, we use ten problem classes from the Distributional MIPLIB (D-MIPLIB) dataset \citep{Huang2024-arxiv}, comprising 23 instance families. We describe the datasets in more detail in \cref{app:datasets}.
\paragraph{Downstream training and tuning}
We use two-layer MLP prediction heads for all downstream tasks. For binary-solution and constraint-activity prediction, the head maps each variable or constraint embedding, respectively, to a node-level prediction. For integrality-gap prediction, it maps an instance embedding obtained by attention pooling over both node types to a scalar. Using task-specific labels, we consider two settings: \emph{frozen}, where only the prediction head is trained, and \emph{fine-tuned}, where the head and graph encoder are trained jointly. Further details are provided in \cref{app:details}.

\paragraph{Metrics} For the integrality-gap prediction task, the target is $\frac{|\mathrm{LP}-\mathrm{IP}|}{ \max(|\mathrm{LP}|,|\mathrm{IP}|) + 10^{-5}}$, where $\mathrm{LP}$ and $\mathrm{IP}$ are the respective optima of the LP relaxation and the MILP. We report test mean absolute error (MAE). For the node-level tasks, we compute a pool of high-quality solutions for each instance. For binary-solution prediction, the target for each binary variable is the weighted fraction of solutions in which it equals one, capturing how often it is selected among good solutions. For constraint-activity prediction, the target is the weighted fraction of solutions in which each constraint is binding. More details are provided in \cref{app:datasets}. We evaluate both tasks using the Kullback-Leibler (KL) divergence between these target probabilities and the predictions. To aggregate across instance families, we divide each model's test KL on an instance family by that of a baseline on the same family and take the geometric mean of these ratios. The baseline is the random encoder in the frozen setting and the supervised encoder trained from scratch in the fine-tuned setting.

\subsection{Results}
The introduction motivates ReMILP as a way to learn general-purpose MILP representations from equivalent formulations. We therefore ask whether the learned representations~(i) retain their invariance and equivariance on unseen classes,~(ii) carry task-relevant information when frozen, and~(iii) provide a useful initialization for fine-tuning.
\subsubsection{Invariance and equivariance on unseen classes}
\label{sec:inv+equiv}
We perform nearest-neighbor retrieval between each D-MIPLIB test instance and one
of its reformulations. For invariance, we apply a re-description to each test instance. For every variable, we use its original embedding as a query, rank all variable embeddings in the re-described instance by cosine similarity, and record the rank of the corresponding variable. For equivariance, we instead apply a substitution. For each substituted variable, we again rank all embeddings in the reformulated instance, but use as the query either its original embedding, corresponding to an invariant prediction, or the embedding predicted by ReMILP’s hypernetwork with input the applied substitution. We record the rank of the corresponding substituted variable. \Cref{tab:probes} reports the average Top-1 retrieval accuracy and mean reciprocal rank (MRR) across all 23 D-MIPLIB test splits.
\begin{table}[H]
  \centering
   \caption{
  \textbf{Invariance and equivariance properties.} Nearest-neighbor retrieval after re-description (invariance) or substitution (equivariance) on D-MIPLIB. For equivariance, “+ hyper” uses the hypernetwork prediction. Other rows use the original embedding. Mean ± sd over 3 seeds.
  }  
  \label{tab:probes}
  \begin{tabular}{@{}lcccc@{}}
    \toprule
    & \multicolumn{2}{c}{Invariance $\uparrow$} & \multicolumn{2}{c}{Equivariance $\uparrow$} \\
    \cmidrule(lr){2-3} \cmidrule(lr){4-5}
    Encoder & Top-1 & MRR & Top-1 & MRR \\
    \midrule
    Random                             & $0.289 \pm 0.009$ & $0.375 \pm 0.011$ & $0.091 \pm 0.011$ & $0.121 \pm 0.013$ \\
    FORGE\textsuperscript{\textdagger} & $0.120 \pm 0.040$ & $0.171 \pm 0.049$ & $0.017 \pm 0.003$ & $0.026 \pm 0.004$ \\
    ReMILP (ours)                             & $0.383 \pm 0.093$ & $0.454 \pm 0.089$ & $0.081 \pm 0.014$ & $0.105 \pm 0.016$ \\
    ReMILP+hyper (ours)            & -- & -- & $\mathbf{0.162 \pm 0.005}$ & $\mathbf{0.276 \pm 0.004}$ \\
    \bottomrule
  \end{tabular}
\end{table}
Under re-description, ReMILP achieves the strongest retrieval performance, indicating that its embeddings are more stable to changes in constraint description. It improves slightly over the random encoder, while FORGE\textsuperscript{\textdagger} performs worse than random. Under substitution, treating the embeddings as invariant performs poorly, whereas the hypernetwork substantially improves retrieval. Together, these results support the intended distinction on unseen classes: re-descriptions should preserve embeddings, while substitutions should transform them predictably.

\subsubsection{Task-relevant information in frozen representations} \label{sec:frozen}

We now ask whether pre-training produces representations that are directly useful for downstream prediction, before any task-specific adaptation of the encoder. We therefore evaluate the encoders in the frozen regime. For the node-level tasks, we report pure ILP and MILP families separately to examine whether this transfer depends on the presence of continuous variables. \Cref{tab:frozen,tab:frozengap} summarize the results.

Both pre-trained encoders carry more task-relevant information than the random encoder baseline, whose frozen embeddings are random projections of each node's two-hop neighborhood. This indicates that self-supervised pre-training extracts structure that is useful for downstream prediction even without adapting the encoder. Beyond this common benefit, ReMILP improves over FORGE\textsuperscript{\textdagger} on both binary solution prediction and integrality-gap prediction, with the largest margin on the latter: ReMILP achieves a 24.5\% lower error. Since FORGE\textsuperscript{\textdagger} uses its own pooler based on codeword histograms, we additionally evaluate its frozen instance-level representations using the same attention pooler as Random and ReMILP. Its performance remains essentially unchanged, which confirms that the observed difference is not an artifact of the pooling mechanism.

The node-level results reveal a further distinction across tasks and problem types. On binary solution prediction, ReMILP reduces the normalized KL divergence relative to FORGE\textsuperscript{\textdagger} by 3.4\% on pure ILPs and 8.9\% on MILP families. In contrast, the two encoders are indistinguishable within one standard deviation on constraint-activity prediction. These results are consistent with the structure of our pre-training objective: constraint embeddings are shaped only indirectly through message passing. The larger gain on MILPs is also consistent with richer substitutions for continuous variables. 

\begin{table}[htbp]
  \centering
  \caption{\textbf{Downstream node-level performance under frozen encoders}. Test KL divergence normalized by the corresponding random encoder result for each instance family and aggregated using the geometric mean. Values below \(1\) indicate improvement over the random encoder ($\downarrow$). }
  \begin{tabular}{@{}llcc@{}}
    \toprule
    Task & Type & ReMILP (ours) & FORGE\textsuperscript{\textdagger} \\
    \midrule
    \multirow{2}{*}{Binary solution} & ILP & $\mathbf{0.887 \pm 0.016}$ & $0.918 \pm 0.007$ \\
     & MILP & $\mathbf{0.688 \pm 0.050}$ & $0.755 \pm 0.012$ \\
    \midrule
    \multirow{2}{*}{Constraint activity} & ILP & $0.871 \pm 0.007$ & $0.882 \pm 0.018$ \\
     & MILP & $0.855 \pm 0.028$ & $0.863 \pm 0.024$ \\
    \bottomrule
  \end{tabular}
  \label{tab:frozen}
\end{table}

\subsubsection{Pre-trained encoders as an initialization} \label{sec:finetuned}

When the encoder is fine-tuned jointly with the head, \cref{fig:ft-nodes,fig:ft-gap} show two effects: faster convergence and, for some problem classes, better final performance. On pure ILP families, the benefit is primarily optimization speed. Both pre-trained encoders outperform training from scratch early in fine-tuning, but all three models are within $1\%$ of each other by the end of the fine-tuning budget. Pre-training thus provides a better starting point on these instances, without improving the final solution reached by supervised training. In contrast, on MILP families the advantage persists throughout the fine-tuning budget on both tasks. This effect is clearest for MILP binary solution prediction: ReMILP reaches the final performance of the encoder trained from scratch after only 6\,400 steps and continues improving beyond it, whereas FORGE\textsuperscript{\textdagger} reaches the same level later, after 13\,000 steps. Constraint activity follows the same pattern, with a smaller final gap.

Integrality-gap prediction provides a strong difference in the fine-tuning regime. ReMILP reaches the final MAE of the encoder trained from scratch after 19\,500 of the 100\,000 fine-tuning steps and ultimately improves on it by $16\%$, whereas FORGE\textsuperscript{\textdagger} converges to a higher value. Moreover, fine-tuning changes ReMILP only marginally relative to its frozen performance ($0.040 \pm 0.002$ to $0.038 \pm 0.001$), while training the encoder from a random initialization provides a substantial improvement over the corresponding frozen encoder. This suggests that much of the information required for integrality-gap prediction is already present in ReMILP's pre-trained representations, with fine-tuning primarily refining rather than constructing it. 

The role of pre-training depends on the downstream regime. On pure ILPs, it mainly accelerates supervised learning. On mixed-integer and instance-level tasks, ReMILP can additionally provide task-relevant structure that remains beneficial after full fine-tuning. 

\begin{figure}[h]
  \centering
  \begin{subfigure}[t]{0.24\linewidth}\centering
    \includegraphics[width=\linewidth]{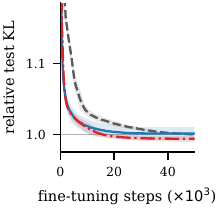}
    \caption{Solution, ILP}\end{subfigure}\hfill
  \begin{subfigure}[t]{0.24\linewidth}\centering
    \includegraphics[width=\linewidth]{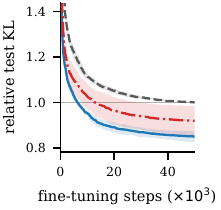}
    \caption{Solution, MILP}\end{subfigure}\hfill
  \begin{subfigure}[t]{0.24\linewidth}\centering
    \includegraphics[width=\linewidth]{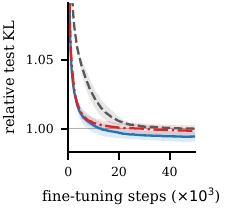}
    \caption{Activity, ILP}\end{subfigure}\hfill
  \begin{subfigure}[t]{0.24\linewidth}\centering
    \includegraphics[width=\linewidth]{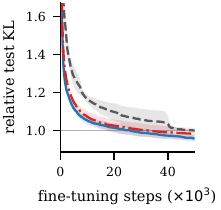}
    \caption{Activity, MILP}\end{subfigure}
  \par\vspace{0.2em}
  \includegraphics[width=0.55\linewidth]{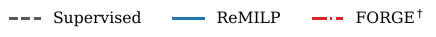}
  \caption{\textbf{Fine-tuning on binary solution and constraint activity prediction.} Test KL throughout fine-tuning, aggregated by geometric mean across the corresponding D-MIPLIB families and normalized by the final performance of the encoder trained from scratch (“Supervised”).}
      \label{fig:ft-nodes}
\end{figure}

\begin{figure}[htbp]
  \centering
  \begin{minipage}[c]{0.43\linewidth}
    \centering\small
    \captionof{table}{\textbf{Integrality gap under frozen encoders}. Test MAE on MILP-Evolve.}
    \label{tab:frozengap}
    \begin{tabular}{@{}lc@{}}
      \toprule
      Encoder & Test MAE $\downarrow$ \\
      \midrule
      Random & $0.063 \pm 0.008$ \\
      ReMILP (ours) & $\mathbf{0.040 \pm 0.002}$ \\
      FORGE\textsuperscript{\textdagger} & $0.053 \pm 0.007$ \\
      FORGE\textsuperscript{ \textdagger} + attn. & $0.054 \pm 0.002$ \\
      \bottomrule
    \end{tabular}
  \end{minipage}\hspace{0.035\linewidth}%
  \begin{minipage}[c]{0.35\linewidth}
    \centering
    \includegraphics[width=\linewidth]{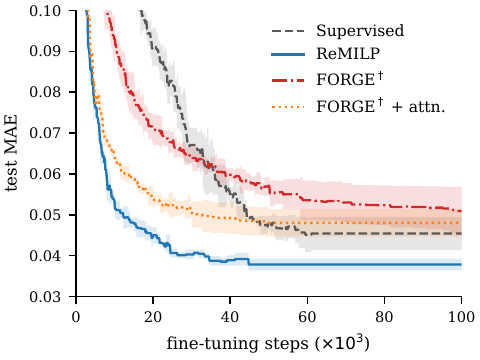}
    \captionof{figure}{\textbf{Fine-tuning on integrality gap.} Test MAE on MILP-Evolve}
    \label{fig:ft-gap}
  \end{minipage}
\end{figure}

\section{Conclusion}

We introduced a theory-driven approach to self-supervised representation learning for MILPs. We characterized all valid affine augmentation rules, showing that they decompose into re-descriptions and substitutions. This characterization extends prior MILP augmentations to richer variable transformations, including integer transvections, and provides a natural inductive bias for variable embeddings: invariance to re-description and equivariance to substitutions. ReMILP instantiates this principle by pre-training on equivalent reformulation pairs. A contrastive objective jointly trains the encoder and hypernetwork to predict how local substitutions transform variable embeddings, without solver-derived labels. Empirically, ReMILP's learned representations exhibit stronger invariance and equivariance than the baselines on unseen classes, retain task-relevant information when frozen, and provide a useful initialisation for fine-tuning. These results suggest that known reformulation behaviors can structure the latent space in ways that benefit several downstream MILP tasks. More broadly, this work points to a promising direction for MILP representation learning, where known problem structure shapes latent representations through equivariant self-supervision.

\clearpage
\subsection*{AI use statement}
In this work, we used generative AI tools in developing the proofs of \cref{thm:sub,thm:redesc,thm:complete}. Several key results were first proven by AI, and then verified and simplified by the authors. LLMs were additionally used in code generation, as well as drafting and editing parts of this paper. We have reviewed all AI-assisted work, and we take responsibility for the final content of this work, including text, claims or artifacts produced with the aid of generative AI.

\subsection*{Reproducibility statement}
We make the processed datasets we use in our experiments available at \url{https://huggingface.co/datasets/orailix/remilp-data}. The code and reproducible Pixi environment are available at
\url{https://github.com/orailix/remilp}. We also detail our implementation in \cref{sec:setup} and \cref{app:details}.

\subsection*{Acknowledgements}
This work received financial support from SNCF through the research chair “AI and optimization for mobility” with École Polytechnique. This work was granted access to the HPC resources of IDRIS under the allocation AD011017427 made by GENCI. Finally, we thank Alexi Canesse, Benoît Goupil, Mohamed Dhouib, Mahammed El Sharkawy, Martin Bonsergent-Brachet, Lucas Thil and Hossein Nick Zinat Matin for helpful discussions and feedback on this work.

\bibliography{iclr2027_conference}
\bibliographystyle{iclr2027_conference}


\clearpage 
\appendix
\section{Omitted proofs}
\label{app:proofs}

Throughout this section, $n$, $k$, $m'$ and $m\ge 2$ are fixed. Recall that $\ve_j$ is used to denote the $j$-th vector of the canonical basis of $\sR^n, \sR^m, \sR^k$ or $\sR^{n-k}$ depending on the context.

The last $k$ of any $n$ coordinates are the integer ones, and a subscript $\cblk$ or $\iblk$ selects the corresponding parts of a vector or matrix. For example, a vector $\vx\in\sR^n$ is decomposed in the following manner: $\vx=(\cp{\vx},\ip{\vx})$ with $\cp{\vx}\in\sR^{n-k}$ and $\ip{\vx}\in\sR^k$. For a matrix $\mA\in\sR^{m\times n}$, we decompose it as $\mA=[\cp{\mA}\ \ip{\mA}]$ where $\cp{\mA}\in\sR^{m\times(n-k)}$ and $\ip{\mA}\in\sR^{m\times k}$, so that $\mA\vx=\cp{\mA}\cp{\vx}+\ip{\mA}\ip{\vx}$. Similarly, an $n \times n$ matrix $\mT$ can be decomposed into four blocks:
$$
\bordermatrix{
         & n-k & k \cr
    n-k & \mT_{\cblk\cblk} & \mT_{\cblk\iblk} \cr
    k & \mT_{\iblk\cblk} & \mT_{\iblk\iblk} \cr
}
$$
With the block notation established, we start off by proving some key lemmas.
\subsection{Preliminaries}
\begin{lemma}[Affine self-maps of $\sV$]\label{lem:mixed}
Let $\psi(\vx)=\mS\vx+\vr$ be an affine map of $\sR^n$ with $\psi(\sV)=\sV$. Then $\mS_{\iblk\cblk}=\bm{0}$,
$\mS_{\cblk\cblk}\in\GL_{n-k}(\sR)$, $\mS_{\iblk\iblk}\in\GL_k(\sZ)$ and
$\vr\in\sV$. In particular $\psi$ is a bijection of $\sR^n$. Conversely, every $\psi$ of this
form satisfies $\psi(\sV)=\sV$, and so does $\psi^{-1}$.
\end{lemma}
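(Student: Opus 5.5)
The plan is to probe $\psi$ at well-chosen points of $\sV$ and let integrality force the block structure. First, $\vzero\in\sV$, so $\vr=\psi(\vzero)\in\sV$. Since $\sV$ is closed under subtraction, the linear part $\mS$ also maps $\sV$ into $\sV$: for $\vx\in\sV$ we have $\mS\vx=\psi(\vx)-\vr$, a difference of two elements of $\sV$.

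Next, I extract the blocks by testing on continuous directions. For $\vx=(\cp{\vx},\vzero)$ with $\cp{\vx}\in\sR^{n-k}$ arbitrary, the integer part of $\mS\vx$ is $\mS_{\iblk\cblk}\cp{\vx}$, and it must lie in $\sZ^k$ for all real $\cp{\vx}$. A linear map $\sR^{n-k}\to\sR^k$ with image inside the discrete set $\sZ^k$ must be zero. Concretely, $t\mapsto \mS_{\iblk\cblk}(t\ve_j)$ is continuous in $t$ and integer valued, hence constant, equal to its value $\vzero$ at $t=0$. So $\mS_{\iblk\cblk}=\bm 0$. Testing next on $\vx=(\vzero,\ve_j)$ with $\ve_j\in\sZ^k$ gives $\mS_{\iblk\iblk}\ve_j\in\sZ^k$, so $\mS_{\iblk\iblk}$ has integer entries.

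The main obstacle is invertibility, and specifically that $\mS_{\iblk\iblk}$ is unimodular rather than merely integral and nonsingular. Here surjectivity $\psi(\sV)=\sV$ is what I would use.
\begin{itemize}
\item \emph{Integer block.} Because $\mS$ is block upper triangular, the integer coordinates of $\psi(\vx)$ are $\mS_{\iblk\iblk}\ip{\vx}+\ip{\vr}$, independent of $\cp{\vx}$. Surjectivity then forces $\mS_{\iblk\iblk}\sZ^k+\ip{\vr}=\sZ^k$, and since $\ip{\vr}\in\sZ^k$ this gives $\mS_{\iblk\iblk}\sZ^k=\sZ^k$. So $\mS_{\iblk\iblk}$ is an integer matrix whose image lattice is all of $\sZ^k$. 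Hence each $\ve_j$ equals $\mS_{\iblk\iblk}\vw_j$ for some $\vw_j\in\sZ^k$. Setting $\mW=[\vw_1,\dots,\vw_k]$, an integer matrix, we get $\mS_{\iblk\iblk}\mW=\mI$. Thus $\det\mS_{\iblk\iblk}\cdot\det\mW=1$ with both determinants integers, so $\det\mS_{\iblk\iblk}=\pm1$ and $\mS_{\iblk\iblk}\in\GL_k(\sZ)$.
\item \emph{Continuous block.} Fix the integer part $\ip{\vx}=\vzero$. The points of $\sV$ with integer part $\ip{\vr}$ are $\sR^{n-k}\times\{\ip{\vr}\}$, and all of them must be hit. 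The only preimages with integer image part $\ip{\vr}$ have $\mS_{\iblk\iblk}\ip{\vx}=\vzero$, i.e.\ $\ip{\vx}=\vzero$ by invertibility. So their continuous parts $\mS_{\cblk\cblk}\cp{\vx}+\cp{\vr}$ must cover $\sR^{n-k}$. Therefore $\mS_{\cblk\cblk}$ is surjective, hence in $\GL_{n-k}(\sR)$.
\end{itemize}
A block triangular matrix with invertible diagonal blocks is invertible, so $\psi$ is a bijection of $\sR^n$.

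For the converse, take $\psi$ of the stated form. It maps $\sV$ into $\sV$: the integer part of $\psi(\vx)$ is $\mS_{\iblk\iblk}\ip{\vx}+\ip{\vr}\in\sZ^k$. Its inverse is $\psi^{-1}(\vy)=\mS^{-1}(\vy-\vr)$, and $\mS^{-1}$ is again block upper triangular with diagonal blocks $\mS_{\cblk\cblk}^{-1}\in\GL_{n-k}(\sR)$ and $\mS_{\iblk\iblk}^{-1}\in\GL_k(\sZ)$. Its translation is $-\mS^{-1}\vr$, whose integer part is $-\mS_{\iblk\iblk}^{-1}\ip{\vr}\in\sZ^k$, so the translation lies in $\sV$. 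Hence $\psi^{-1}$ has the same form and also maps $\sV$ into $\sV$. Together these inclusions give $\psi(\sV)=\sV$, and the same holds for $\psi^{-1}$.
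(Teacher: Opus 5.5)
Your proof is correct and follows essentially the same route as the paper's. Continuity forces $\mS_{\iblk\cblk}=\bm{0}$, and testing on $(\vzero,\ve_j)$ makes $\mS_{\iblk\iblk}$ integral. Surjectivity then gives $\mS_{\iblk\iblk}\sZ^k=\sZ^k$ and, after fixing the integer part, surjectivity of $\mS_{\cblk\cblk}$; the converse inverts the block upper triangular matrix. The only minor differences are that you read off $\vr\in\sV$ directly from $\psi(\vzero)$ and write out the determinant argument for unimodularity, which the paper leaves implicit.
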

\begin{proof}
For every $\cp{\vx}\in\sR^{n-k}$ the integer block of $\psi(\cp{\vx},\vzero)$,
namely $\mS_{\iblk\cblk}\cp{\vx}+\ip{\vr}$, lies in $\sZ^k$. A continuous map
from $\sR^{n-k}$ to $\sZ^k$ is constant,
so $\mS_{\iblk\cblk}=\bm{0}$ and $\ip{\vr}\in\sZ^k$. The integer block of
$\psi(\vzero,\ve_j)$ is $\mS_{\iblk\iblk}\ve_j+\ip{\vr}\in\sZ^k$, so
$\mS_{\iblk\iblk}$ is an integer matrix. For every $\vw\in\sV$ there is
$\vx\in\sV$ with $\psi(\vx)=\vw$, and then
$\ip{\vw}=\mS_{\iblk\iblk}\ip{\vx}+\ip{\vr}$. Hence
$\mS_{\iblk\iblk}\sZ^k=\sZ^k$, which proves that $\mS_{\iblk\iblk}$ is unimodular.
Fixing $\ip{\vw}$, which fixes
$\ip{\vx}=\mS_{\iblk\iblk}^{-1}(\ip{\vw}-\ip{\vr})$, and letting $\cp{\vw}$
range over $\sR^{n-k}$ in
$\cp{\vw}=\mS_{\cblk\cblk}\cp{\vx}+\mS_{\cblk\iblk}\ip{\vx}+\cp{\vr}$ shows
that $\mS_{\cblk\cblk}$ is surjective, hence invertible. Thus
$\det\mS=\det\mS_{\cblk\cblk}\det\mS_{\iblk\iblk}\neq0$ and $\vr\in\sV$.

Conversely, if $\mS$ and $\vr$ have the stated form, the integer block of
$\psi(\vx)$ is $\mS_{\iblk\iblk}\ip{\vx}+\ip{\vr}\in\sZ^k$ for $\vx\in\sV$, so
$\psi(\sV)\subseteq\sV$. The inverse $\psi^{-1}(\vy)=\mS^{-1}\vy-\mS^{-1}\vr$
has
$\mS^{-1}=\bmat{\mS_{\cblk\cblk}^{-1}}{-\mS_{\cblk\cblk}^{-1}\mS_{\cblk\iblk}\mS_{\iblk\iblk}^{-1}}{\bm{0}}{\mS_{\iblk\iblk}^{-1}}$,
of the same form because $\mS_{\iblk\iblk}^{-1}$ is again unimodular, and
$\mS^{-1}\vr\in\sV$. Hence also
$\psi^{-1}(\sV)\subseteq\sV$, and $\psi(\sV)=\sV$.
\end{proof}
\begin{lemma}[Objective rigidity]\label{lem:rigidity}
Let $(\varphi,\gamma)$ witness the equivalence of $M=(\mA,\vb,\vc)$ and
$M'=(\mA',\vb',\vc')$, with $\varphi(\vx)=\mS\vx+\vr$, and suppose that
$F(M)$ affinely spans $\sR^n$. Then $\mS^\T\vc'=\vc$. In particular, if
$\vc'=\vc$, then $\vc^\T\varphi(\vx)=\vc^\T(\vx+\vr)$ for all
$\vx\in\sR^n$.
\end{lemma}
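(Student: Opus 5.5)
The idea is to use that an affine function which vanishes on an affinely spanning set vanishes identically. By \cref{def:equiv}, for every $\vx\in F(M)$ we have
\[
\vc'^\T(\mS\vx+\vr)=\vc^\T\vx+\gamma .
\]
Equivalently, the affine function
\[
\ell(\vx)\coloneq(\mS^\T\vc'-\vc)^\T\vx+\vc'^\T\vr-\gamma
\]
vanishes on $F(M)$.

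I will show that $\ell$ vanishes on the affine hull of $F(M)$. Since $F(M)$ affinely spans $\sR^n$, there are points $\vx_0,\ldots,\vx_n\in F(M)$ such that the differences $\vx_1-\vx_0,\ldots,\vx_n-\vx_0$ span $\sR^n$. Subtracting $\ell(\vx_0)=0$ from $\ell(\vx_p)=0$ gives
\[
(\mS^\T\vc'-\vc)^\T(\vx_p-\vx_0)=0\quad\text{for every } p\in[n].
\]
The vector $\mS^\T\vc'-\vc$ is therefore orthogonal to a spanning set of $\sR^n$, so it is zero. This gives $\mS^\T\vc'=\vc$. As a by-product, evaluating at $\vx_0$ yields $\gamma=\vc'^\T\vr$.

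For the special case $\vc'=\vc$, the first part gives $\mS^\T\vc=\vc$. Then for every $\vx\in\sR^n$,
\[
\vc^\T\varphi(\vx)=\vc^\T\mS\vx+\vc^\T\vr=(\mS^\T\vc)^\T\vx+\vc^\T\vr=\vc^\T\vx+\vc^\T\vr=\vc^\T(\vx+\vr).
\]
This identity holds on all of $\sR^n$, not only on $F(M)$, because $\mS^\T\vc=\vc$ is an identity of vectors.

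The main point needing care is the extraction of $n+1$ affinely independent points from the hypothesis "affinely spans". This is standard: the affine hull of a set is generated by its points, so a maximal affinely independent subset of $F(M)$ has $n+1$ elements. Everything else is routine linear algebra. The assumption $m\ge2$ and the structure of $\sV$ are not needed here. Integrality of $F(M)$ matters only through the spanning hypothesis, which later proofs will have to verify for suitably chosen instances, for example a large box containing $n+1$ affinely independent points of $\sV$.
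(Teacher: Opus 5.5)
Your proof is correct and takes the same route as the paper: the affine functions $\vx\mapsto\vc'^\T(\mS\vx+\vr)$ and $\vx\mapsto\vc^\T\vx+\gamma$ agree on the affinely spanning set $F(M)$, hence everywhere, which forces $\mS^\T\vc'=\vc$. The only difference is that you spell out the ``agree on an affinely spanning set, hence everywhere'' step using $n+1$ affinely independent points, which the paper leaves implicit.
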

\begin{proof}
The affine functions $\vx\mapsto\vc'^\T(\mS\vx+\vr)$ and
$\vx\mapsto\vc^\T\vx+\gamma$ agree on $F(M)$, hence everywhere, since $F(M)$ affinely spans $\sR^n$.
In particular, $\mS^\T\vc'=\vc$. If
$\vc'=\vc$ then $\vc^\T\mS\vx=\vc^\T\vx$ for all $\vx$.
\end{proof}
\begin{lemma}[Unconstrained feasible sets]\label{lem:free}
Let $\varphi$ be an affine bijection of $\sR^n$, and let
$\mA'\in\sR^{m'\times n}$, $\vb'\in\sR^{m'}$ be such that $\mA'\vy\le\vb'$ for
every $\vy\in\varphi(\sV)$. Then $\mA'=\bm{0}$ and $\vb'\ge0$. In particular, if an instance
$M'$ is equivalent to an instance whose feasible set is $\sV$, then
$F(M')=\sV$.
\end{lemma}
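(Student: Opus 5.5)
The plan is to work row by row. The set $\varphi(\sV)$ is an affine image of $\sV$ under a bijection $\varphi(\vx)=\mS\vx+\vr$ with $\mS$ invertible. Fix a row $\va^\T$ of $\mA'$ with right-hand side $b'_i$. The hypothesis says that the affine function $\vx\mapsto \va^\T(\mS\vx+\vr)=(\mS^\T\va)^\T\vx+\va^\T\vr$ is bounded above by $b'_i$ on all of $\sV$. I will first show that $\vw\coloneq\mS^\T\va$ must vanish. Invertibility of $\mS$ then gives $\va=\vzero$, so $\mA'=\bm{0}$.

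To show $\vw=\vzero$, I use that $\sV$ contains $t\ve_p$ for every coordinate $p\in[n]$ and every $t\in\sZ$. The continuous coordinates even allow every $t\in\sR$, but integers suffice. Evaluating at $\vx=t\ve_p$ gives $t\,w_p+\va^\T\vr\le b'_i$ for all $t\in\sZ$. Letting $t\to\pm\infty$ forces $w_p=0$, and this holds for every $p$.

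Once the row is zero, the inequality reads $0\le b'_i$, evaluated at any point of the nonempty set $\varphi(\sV)$, for instance $\varphi(\vzero)$. Hence $\vb'\ge0$.

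For the ``in particular'' part, let $M$ have $F(M)=\sV$, and let $(\varphi,\gamma)$ witness equivalence with $M'=(\mA',\vb',\vc')$. Then $\varphi(\sV)=F(M')\subseteq\{\mA'\vy\le\vb'\}$, so the first part gives $\mA'=\bm{0}$ and $\vb'\ge0$. Therefore $\{\mA'\vy\le\vb'\}=\sR^n$ and $F(M')=\sR^n\cap\sV=\sV$.

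There is no real obstacle here. The only point needing care is that the unboundedness argument must use directions actually contained in $\sV$, namely the integer multiples of the basis vectors, rather than arbitrary real directions. This is what makes the argument work even when every coordinate is integer ($k=n$).
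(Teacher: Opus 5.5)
Your proof is correct, and it takes a slightly different and more elementary route than the paper's.

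\textbf{The paper's route.} It argues globally rather than row by row. The polyhedron $P'=\{\vy \mid \mA'\vy\le\vb'\}$ is convex and contains $\varphi(\sV)$. Hence it contains $\conv\varphi(\sV)=\varphi(\conv\sV)=\varphi(\sR^n)=\sR^n$, and a polyhedron equal to $\sR^n$ cannot have a nonzero row.

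\textbf{Your route.} You pull each row back through $\varphi$ to an affine function bounded above on $\sV$. You kill its linear part $\mS^\T\va$ by testing it on the lattice rays $\sZ\ve_p\subseteq\sV$, and then you invoke invertibility of $\mS$.

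\textbf{What each buys.} Your version avoids the fact $\conv\sV=\sR^n$, which the paper takes for granted here although it needs a small unit-cube argument. It also makes explicit where bijectivity of $\varphi$ enters, namely invertibility of $\mS^\T$; in the paper this is hidden in $\varphi(\sR^n)=\sR^n$. The paper's version is shorter. It also sets up the convex-hull reasoning, pushing $\conv F$ forward through the witness, that the paper reuses in \cref{lem:normals}. The ``in particular'' part is handled identically in both.
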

\begin{proof}
The polyhedron $P'=\{\vy \in \sR^n \mid \mA'\vy\le\vb'\}$ is convex and contains
$\varphi(\sV)$, so it contains
$\conv\varphi(\sV)=\varphi(\conv\sV)=\varphi(\sR^n)=\sR^n$. A nonzero row of $\mA'$ would make $P'$
a proper subset of $\sR^n$, so $\mA'=\bm{0}$, and then $\vzero\le\vb'$. For the
second statement,
$F(M')=P'\cap\sV=\sV$.
\end{proof}
\subsection{Proof of \texorpdfstring{\cref{thm:sub}}{Theorem 4.1}}
\label{app:sub}
\thmsub*

\begin{proof}
Write $\psi(\vx):=\mT^{-1}(\vx-\vq)$ and $P:=\{\vx\in\sR^n \mid \mA\vx\le\vb\}$.
For every instance, substituting $\vx=\mT\vy+\vq$ gives
\begin{equation}\label{eq:sub-feasible}
  F\bigl(\Sub(\mT,\vq)\cdot M\bigr)=\{\vy\in\sV \mid \mA(\mT\vy+\vq)\le\vb\}
  =\psi(P)\cap\sV.
\end{equation}

\paragraph{Sufficiency.} By \cref{lem:mixed}, the map $\vy\mapsto\mT\vy+\vq$ sends
$\sV$ onto $\sV$, and so does its inverse $\psi$. Since $\psi$ is a bijection,
$\psi(F(M))=\psi(P\cap\sV)=\psi(P)\cap\psi(\sV)=\psi(P)\cap\sV$, which is
$F(\Sub(\mT,\vq)\cdot M)$ by \cref{eq:sub-feasible}. As for the objective value map, we have the following identity:
$$(\mT^\T\vc)^\T\psi(\vx)=\vc^\T(\vx-\vq)=\vc^\T\vx-\vc^\T\vq$$

\paragraph{Necessity of the block form of $\mT$.} For $\vc\in\sR^n$ the unconstrained
instance $M_{\vc}=(\bm{0},\vzero,\vc)$ and its image
$(\bm{0},\vzero,\mT^\T\vc)$ both have feasible set $\sV$. Let
$(\varphi_{\vc},\gamma_{\vc})$ be a witness and $\mS_{\vc}$ the linear part of
$\varphi_{\vc}$. Then $\varphi_{\vc}(\sV)=\sV$, so $\mS_{\vc}$ is block upper
triangular with a unimodular integer block by \cref{lem:mixed}, and
$\mS_{\vc}^\T\mT^\T\vc=\vc$ by \cref{lem:rigidity}. As $\vc$ ranges over
$\sR^n$ so does $\vd:=\mT^\T\vc$; hence for every $\vd\in\sR^n$ there is a block
upper triangular $\mS$ with unimodular integer block such that
$\mS^\T\vd=\mT^{-\T}\vd$. For such $\mS$ and $\vd=(\vzero,\ip{\vd})$ we have
$\mS^\T\vd=(\vzero,\mS_{\iblk\iblk}^\T\ip{\vd})$. Writing
$\mT^{-\T}=\bmat{\mX}{\mY}{\mZ}{\mW}$, we obtain $\mY\ip{\vd}=\vzero$ for all
$\ip{\vd}$, so $\mY=\bm{0}$, and
\[
  \sR^k=\bigcup_{\mU\in\GL_k(\sZ)}\ker\bigl(\mW-\mU^\T\bigr).
\]
A proper linear subspace has Lebesgue measure zero, so a countable union of
them is not $\sR^k$ and one of these kernels is all of $\sR^k$, that is, $\mW=\mU^\T$ for some $\mU\in\GL_k(\sZ)$. Therefore
$\mT^{-1}=\bmat{\mX^\T}{\mZ^\T}{\bm{0}}{\mU}$, with $\mX$ invertible because
$\mT$ is, and by \cref{lem:mixed} its inverse $\mT$ has the same form, with
integer block $\mU^{-1}\in\GL_k(\sZ)$.

\paragraph{Necessity of $\vq\in\sV$.} Let $j$ be an integer coordinate and
$\rho\in(0,1)$. Consider the instance whose first two constraints are
$x_j\le\rho$ and $-x_j\le\rho$, with all further constraints (if $m>2$) equal
to $\vzero^\T\vx\le0$. Its feasible set contains $\vzero$. By
\cref{eq:sub-feasible} its image has feasible set
$\{\vy\in\sV \mid |(\mT\vy)_j+q_j|\le\rho\}$. By the block form, $(\mT\vy)_j$ is an
integer combination of the entries of $\ip{\vy}$, hence an integer. If
$q_j\notin\sZ$ and $\rho<\dist(q_j,\sZ)$, the image is infeasible, and no
bijection maps a nonempty set onto the empty set. Hence $\ip{\vq}\in\sZ^k$,
that is, $\vq\in\sV$.
\end{proof}

\subsection{Proof of \texorpdfstring{\cref{thm:redesc}}{Theorem 4.2}}
\label{app:redesc}

We start off by proving a different necessary and sufficient condition from the one stated in \cref{thm:redesc}, which is expressed in terms of the existence of a map from the new constraint system back to the old one, and then we establish the equivalence of the two.

\begin{lemma}\label{lem:redesc-cert}
The constraint rewriting $\Red(\mLambda,\vg)$ is a re-description if and
only if $\mLambda\ge0$ and,
for every $\vb\in\sR^m$, $\vg(\vb)\ge\mLambda\vb$ and there is
$\mGam_{\vb}\in\sR^{m\times m'}$ with $\mGam_{\vb}\ge0$,
$\mGam_{\vb}\mLambda=\mI_m$ and $\mGam_{\vb}\vg(\vb)=\vb$.
\end{lemma}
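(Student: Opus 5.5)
The proof splits into sufficiency, which is a direct computation, and necessity, which I would obtain by testing the rewriting on carefully chosen instances. Fix $\vb$ throughout and write $\vg=\vg(\vb)$.

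\emph{Sufficiency.} Assume $\mLambda\ge0$, $\vg(\vb)\ge\mLambda\vb$, and that $\mGam_{\vb}\ge0$ satisfies $\mGam_{\vb}\mLambda=\mI_m$ and $\mGam_{\vb}\vg(\vb)=\vb$. Let $\vx\in\sV$.
\begin{itemize}[nosep]
\item If $\mA\vx\le\vb$, then since $\mLambda\ge0$ we get $\mLambda\mA\vx\le\mLambda\vb\le\vg(\vb)$.
\item If $\mLambda\mA\vx\le\vg(\vb)$, then multiplying by $\mGam_{\vb}\ge0$ gives $\mA\vx=\mGam_{\vb}\mLambda\mA\vx\le\mGam_{\vb}\vg(\vb)=\vb$.
\end{itemize}
So the feasible sets of $(\mA,\vb,\vc)$ and $\Red(\mLambda,\vg)\cdot(\mA,\vb,\vc)$ coincide for every $\mA$. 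The objective is unchanged, so the identity map with $\gamma=0$ is a witness.

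\emph{Necessity of $\vg(\vb)\ge\mLambda\vb$.} Take the rank-one matrix $\mA=\vb\,\ve_1^\T$. The point $\vx=\ve_1$ lies in $\sV$ whether coordinate $1$ is integer or continuous, and it satisfies $\mA\vx=\vb\le\vb$. It must therefore be feasible for the rewritten system, which gives $\mLambda\vb\le\vg(\vb)$.

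\emph{Necessity of the rest.} For $\mLambda\ge0$ and the certificate $\mGam_{\vb}$, I would use the same family of single-column instances $\mA=\va\,\ve_1^\T$ with $\va\in\sR^m$ and coordinate $1$ integer in the worst case. For these, feasibility reduces to a one-dimensional statement: for every $\va$,
$$\{t\in\sZ : t\va\le\vb\}=\{t\in\sZ : t\,\mLambda\va\le\vg\}.$$
\begin{itemize}[nosep]
\item \textbf{Scaling to reach the reals.} Replacing $\va$ by $\va/N$ turns $t\in\sZ$ into $t/N$, so the identity holds over $\tfrac1N\sZ$ for every $N$. Both sides are closed intervals, and two closed intervals that agree on a dense set of points agree, up to care at degenerate endpoints. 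I would therefore obtain $\{t\in\sR : t\va\le\vb\}=\{t\in\sR : t\,\mLambda\va\le\vg\}$ for all $\va$.
\item \textbf{Sign condition.} Comparing the two intervals as $\va$ varies should force $\mLambda\ge0$. For instance, choose $\va$ so that the old system is unbounded in the positive direction, for example $\va=-\ve_q$ in the case $b_q\ge0$; if $\lambda_{iq}<0$, the new system would bound $t$, a contradiction.
\item \textbf{Copies of constraints.} Next choose $\va$ so that constraint $p$ is the unique binding upper bound, that is, $a_p=1$ and the other $a_j\le 0$ with slack. The new system must reproduce exactly the endpoint $t=b_p$. Using $\mLambda\ge0$ and $\vg\ge\mLambda\vb$, the only way to obtain this for all such $\va$ simultaneously is to have a row $i$ with $\vlambda_i^\T=\theta\ve_p^\T$, $\theta>0$, and $g_i(\vb)=\theta b_p$.
\item \textbf{Assembling $\mGam_{\vb}$.} Given these rows, $\mGam_{\vb}$ is the matrix whose $p$-th row is $\theta^{-1}\ve_i^\T$ for the selected $i=i(p)$. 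It is nonnegative and satisfies $\mGam_{\vb}\mLambda=\mI_m$ and $\mGam_{\vb}\vg=\vb$.
\end{itemize}

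\emph{Main obstacle.} The hard step is the last necessity argument: extracting exact proportional copies of rows from a condition that, when $n$ may equal $1$, only constrains one-dimensional intervals. A naive Farkas/LP-duality certificate would need $\mA$ of rank $m$, which is unavailable. I would handle this by choosing $\va$ generically in the cone $\{a_p>0,\ a_j<0\ (j\ne p)\}$ and letting the ratios $a_j/a_p$ vary independently. The endpoint of the new interval is $\min_{i:(\mLambda\va)_i>0} g_i/(\mLambda\va)_i$. Matching it to $b_p/a_p$ identically in $\va$ forces some row $i$ whose expression $g_i/(\mLambda\va)_i$ does not depend on the other $a_j$. That is exactly the condition $\vlambda_i\propto\ve_p$, and matching the value then gives $g_i(\vb)=\theta b_p$.
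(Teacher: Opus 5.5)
Your sufficiency argument and your proof of $\vg(\vb)\ge\mLambda\vb$ (the instance $\vb\,\ve_1^\T$ tested at $\ve_1$) are the paper's. The gap is in the step that produces copy rows, and hence $\mGam_{\vb}$.

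Your inference ``the ratio $g_i(\vb)/(\mLambda\va)_i$ does not depend on the other $a_j$, hence $\vlambda_i\propto\ve_p$'' fails when $b_p=0$. In that case every row with $g_i(\vb)=0$ has ratio identically $0$ wherever $(\mLambda\va)_i>0$, whatever $\vlambda_i$ is. For instance, take $m=m'=2$, $\vb=\vzero$, $\vlambda_1^\T=(1,1)$, $\vlambda_2^\T=(0,1)$ and $\vg(\vzero)=\vzero$. Every $\va=(1,-\beta)$ with $0<\beta<1$ gives the interval $\{0\}$ on both sides, and the new endpoint is attained by row~1 with constant ratio $0$. Yet constraint~1 has no copy, and the rewriting is not a re-description. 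Along this family the defect only appears for $\beta\ge1$, where no row has $(\mLambda\va)_i>0$; your endpoint-matching argument never uses that case.

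Your cone is also wrong for some $\vb$. If $b_p<0$ and some $b_j\le0$, constraint $j$ reads $t\ge -b_j/|a_j|\ge0$ while constraint $p$ reads $t\le b_p/a_p<0$. The original interval is then empty, so there is no endpoint to match. A smaller slip: in the sign step, what makes $\va=-\ve_q$ work is $b_j\ge0$ for $j\ne q$, not $b_q\ge0$. Since $\mLambda$ does not depend on $\vb$, just take $\vb=\vzero$, as the paper does.

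The missing idea is the one you set aside: LP duality does not require $\mA$ of rank $m$. For $\mA=\vu\,\ve_1^\T$ and the single point $\ve_1$ (your $t=1$), $\mA\ve_1=\vu$ ranges over all of $\sR^m$. So a re-description must satisfy $\{\vu\in\sR^m:\vu\le\vb\}=\{\vu\in\sR^m:\mLambda\vu\le\vg(\vb)\}$, with no scaling or density argument and whatever the value of $n$.

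The paper works in this $\vu$-space. Let $\gZ=\{i: g_i(\vb)=\vlambda_i^\T\vb\}$. Because $\vg(\vb)\ge\mLambda\vb$, row $p$ of $\mGam_{\vb}$ exists iff some $\vgamma\ge0$ supported on $\gZ$ satisfies $\mLambda^\T\vgamma=\ve_p$. If none does, Farkas' lemma gives $\vy$ with $\vlambda_i^\T\vy\le0$ for $i\in\gZ$ and $y_p>0$. Then $\vu=\vb+\varepsilon\vy$ satisfies the rewritten system but violates constraint $p$ for small $\varepsilon>0$. This handles every sign pattern of $\vb$ uniformly and produces $\mGam_{\vb}$ directly; the copy rows are derived from it afterwards, in the proof of \cref{thm:redesc}.

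If you want to keep your copy-row route, shift by $\vb$ and use an extreme rather than a generic direction. Put $\vs=\vg(\vb)-\mLambda\vb\ge\vzero$. The vector $\vw=\eta\bigl(\epsilon\ve_p-\sum_{j\ne p}\ve_j\bigr)$ violates $\vw\le\vzero$, so some row has $\vlambda_i^\T\vw>s_i\ge0$. Choosing $\epsilon$ small enough, then $\eta$ small enough, and using $\mLambda\ge0$, forces that row to be a copy of $p$ with $s_i=0$.
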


\begin{proof}
Write $F=\{\vx\in\sV \mid \mA\vx\le\vb\}$ and
$F'=\{\vx\in\sV \mid \mLambda\mA\vx\le\vg(\vb)\}$.

\paragraph{Sufficiency.} Fix $(\mA,\vb)$. If $\vx\in F$, then
$\mLambda\mA\vx\le\mLambda\vb\le\vg(\vb)$ because $\mLambda\ge0$, so
$\vx\in F'$. If $\vx\in F'$, then
$\mA\vx=\mGam_{\vb}\mLambda\mA\vx\le\mGam_{\vb}\vg(\vb)=\vb$ because
$\mGam_{\vb}\ge0$, so $\vx\in F$.

\paragraph{Necessity.} Assume $\Red(\mLambda,\vg)$ is a re-description,
so that $F=F'$ for every $(\mA,\vb)$. Each counterexample below exhibits an
instance on which the point $\ve_1\in\sV$ lies in exactly one of $F$ and
$F'$. All of them use a constraint matrix of the form $\mA=\vu\,\ve_1^\T$,
the matrix whose first column is $\vu$ and whose other columns vanish.
Notice that such a matrix satisfies $\mA\ve_1=\vu$.

\subparagraph{Nonnegativity of $\mLambda$.} Suppose $\Lambda_{i,l}<0$. Let $\vb=\vzero$ and
$\vu=-N\ve_l$ with $N>0$. Then $F=\{\vx\in\sV \mid x_1\ge0\}\ni\ve_1$, while row
$i$ of the new system at $\ve_1$ reads $|\Lambda_{i,l}|N\le g_i(\vzero)$,
which fails for large $N$.

\subparagraph{The bound $\vg(\vb)\ge\mLambda\vb$.} Suppose $g_i(\vb)<\vlambda_i^\T\vb$ and let
$\vu=\vb$. Every original constraint holds with equality at $\ve_1$, so
$\ve_1\in F$, but row $i$ of the new constraint system evaluated at $\ve_1$ reads
$\vlambda_i^\T\vb\le g_i(\vb)$, which is a contradiction.

\subparagraph{Existence of $\mGam_{\vb}$.} Fix $\vb$ and let $\gZ=\{i \mid
g_i(\vb)=\vlambda_i^\T\vb\}$ be the rows on which $\vg(\vb)$ is tight. Row
$p$ of the condition $\mGam_\vb \mLambda = \mI_m$ and $\mGam_{\vb}\vg(\vb)=\vb$ holds iff there exists a $\vgamma\ge0$ such that $\mLambda^\T\vgamma=\ve_p$ and $\vgamma^\T\vg(\vb)=b_p$. 
For any $\vgamma$ with the first two properties, $\vgamma^\T\mLambda\vb
=(\mLambda^\T\vgamma)^\T\vb=b_p$, so that
$$
  \vgamma^\T\vg(\vb)-b_p=\vgamma^\T\bigl(\vg(\vb)-\mLambda\vb\bigr) =\sum_i\gamma_i\bigl(g_i(\vb)-\vlambda_i^\T\vb\bigr)
$$
This is a sum of nonnegative terms. It vanishes if and only if $\gamma_i=0$ for every
$i\notin\gZ$. Row $p$ of the condition is therefore equivalent to the existence of $\vgamma\ge0$
supported on $\gZ$ with $\mLambda^\T\vgamma=\ve_p$.

Suppose that for
some $p\in[m]$ there is no such $\vgamma$. By Farkas' lemma, there is $\vy\in\sR^m$ with
$\vlambda_i^\T\vy\le0$ for all $i\in\gZ$ and $y_p>0$. Let
$\vu=\vb+\varepsilon\vy$ with $\varepsilon>0$. The original constraint $p$ evaluated at
$\ve_1$ reads $b_p+\varepsilon y_p\le b_p$, which does not hold, so $\ve_1\notin F$.
On the other hand, the new row $i$ evaluated at $\ve_1$ reads
$\vlambda_i^\T\vb+\varepsilon\vlambda_i^\T\vy\le g_i(\vb)$. This holds for
$i\in\gZ$ because there $g_i(\vb)=\vlambda_i^\T\vb$ and
$\vlambda_i^\T\vy\le0$, and for $i\notin\gZ$ once $\varepsilon$ is small
because $g_i(\vb)>\vlambda_i^\T\vb$. So $\ve_1\in F'\setminus F$, a
contradiction. Stacking the vectors $\vgamma$ obtained for $p=1,\dots,m$ as
rows gives $\mGam_{\vb}$.
\end{proof}

\thmredesc*
\begin{proof}
By \cref{lem:redesc-cert} it suffices to show that, for $\mLambda\ge0$ and
$\vg(\vb)\ge\mLambda\vb$, a certificate $\mGam_{\vb}$ exists if and only if
every constraint has a tight positive copy.

If row $i$ has $\vlambda_i^\T=\theta\ve_p^\T$ with $\theta>0$ and
$g_i(\vb)=\theta b_p$, put $1/\theta$ in position $(p,i)$ of $\mGam_{\vb}$. Doing this for every $p$ and filling the rest with zeros gives
$\mGam_{\vb}\ge0$ with $\mGam_{\vb}\mLambda=\mI$ and
$\mGam_{\vb}\vg(\vb)=\vb$.

Conversely, row $p$ of $\mGam_{\vb}$ is a vector $\vgamma\ge0$ with
$\ve_p^\T=\sum_i\gamma_i\vlambda_i^\T$ and $\vgamma^\T\vg(\vb)=b_p$. As in
the proof of \cref{lem:redesc-cert}, the latter says that $\vgamma$ is
supported on the rows with $g_i(\vb)=\vlambda_i^\T\vb$. For every coordinate $l\neq p$, the sum
$\sum_i\gamma_i\Lambda_{i,l}=0$ has nonnegative terms, so
$\gamma_i\Lambda_{i,l}=0$ for all $i$. Coordinate $p$ gives
$\sum_i\gamma_i\Lambda_{i,p}=1$, so some $i$ has $\gamma_i\Lambda_{i,p}>0$.
For that $i$, $\gamma_i>0$ forces $\Lambda_{i,l}=0$ for all $l\neq p$, so
$\vlambda_i^\T=\Lambda_{i,p}\ve_p^\T$ with $\Lambda_{i,p}>0$, and
$g_i(\vb)=\vlambda_i^\T\vb$ because $\vgamma$ is supported on such rows.

For affine $\vg(\vb)=\mLambda\vb+\vs$, the first condition is equivalent to
$\vs\ge0$, and a \emph{copy row} of $p$, that is, a row $i$ with
$\vlambda_i^\T=\theta\ve_p^\T$ and $\theta>0$, is tight if and only if
$s_i=0$, independently of $\vb$. Choosing one tight copy for each $p$ gives $m$ distinct rows, which after permuting output rows form $\mD\mPi$ with $\vs=\vzero$ on them. The
remaining rows form $\mN\ge0$ with $\vs'\ge0$. The converse is immediate.
\end{proof}

\subsection{Proof of \texorpdfstring{\cref{thm:complete}}{Theorem 4.3}}

\thmcomplete*

\paragraph{Sufficiency.}
Let $(\mT,\vq)$ and $(\mLambda,\vs)$ satisfy the conditions of \cref{thm:sub,thm:redesc}. The composition is affine. Next, observe that the pair $(\psi,-\vc^\T\vq)$ with $\psi(\vx)=\mT^{-1}(\vx-\vq)$ witnesses the equivalence of $M$ and $\Sub(\mT,\vq)\cdot M$, and that the identity witnesses the equivalence of $\Sub(\mT,\vq)\cdot M$ and $\Phi(M)$. Composing the two, $(\psi,-\vc^\T\vq)$ is a witness on every instance.

\paragraph{Necessity.}
Let $\Phi$ be affine and valid, and write
$\Phi(\mA,\vb,\vc)=(\mA',\vb',\vc')$, $F=F(\mA,\vb,\vc)$ and
$F'=F(\Phi(\mA,\vb,\vc))$. A composition of valid rules is
valid, since witnesses compose. Since $\Phi$ is affine, there are linear
maps $\Lmap,\Lmap_{\vb},\Lmap_{\vc},\hmap,\mK_{\mA}$, matrices
$\mG,\mG_{\vc},\mK,\mK_{\vb}$ and constants $\mA_0,\vs,\vc_0$ with
\begin{equation}\label{eq:general}
\begin{alignedat}{5}
  \mA' &= \mA'(\mA, \vb, \vc) &&= \Lmap(\mA)     &&+ \Lmap_{\vb}(\vb) &&+ \Lmap_{\vc}(\vc) &&+ \mA_0,\\
  \vb' &= \vb'(\mA, \vb, \vc) &&= \hmap(\mA)     &&+ \mG\vb           &&+ \mG_{\vc}\vc     &&+ \vs,\\
  \vc' &= \vc'(\mA, \vb, \vc) &&= \mK_{\mA}(\mA) &&+ \mK_{\vb}\vb     &&+ \mK\vc           &&+ \vc_0 .
\end{alignedat}
\end{equation}
The proof eliminates or pins down these terms one at a time until only the
form of \cref{thm:complete} is left.
\begin{lemma}\label{lem:shape}
In \cref{eq:general}, $\vc_0=\vzero$, $\mK_{\mA}=0$, $\mK_{\vb}=\bm{0}$ and
$\mK=\mT^\T$ for a matrix $\mT$ of the block form of \cref{thm:sub}. If
moreover $\vc'=\vc$ on every instance, then $\mA_0=\bm{0}$, $\Lmap_{\vb}=0$,
$\Lmap_{\vc}=0$, $\mG_{\vc}=\bm{0}$, $\mG\ge0$ and $\vs\ge0$.
\end{lemma}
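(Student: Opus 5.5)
The plan is to test validity of $\Phi$ on families of instances whose feasible set is all of $\sV$, or at least affinely spans $\sR^n$. On such families \cref{lem:rigidity} turns the abstract witness into a linear identity on $\vc'$.

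\textbf{First part.} Start with unconstrained-type instances $M=(\bm{0},\vb,\vc)$ with $\vb\ge0$, whose feasible set is $\sV$.
\begin{itemize}[nosep]
\item By \cref{lem:free}, $F(\Phi(M))=\sV$. Any witness $\varphi$ then satisfies $\varphi(\sV)=\sV$, so by \cref{lem:mixed} its linear part $\mS$ is block upper triangular with unimodular integer block.
\item By \cref{lem:rigidity}, $\mS^\T\vc'=\vc$, where $\vc'=\mK_{\vb}\vb+\mK\vc+\vc_0$ (the $\mK_{\mA}$ term vanishes at $\mA=\bm{0}$).
\item Scaling shows $\vc'=0$ iff $\vc=0$ for every such $\vb$, since $\mS$ is invertible. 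Taking $\vc=\vzero$ gives $\mK_{\vb}\vb+\vc_0=\vzero$ for all $\vb\ge0$. The nonnegative orthant spans $\sR^m$, so $\vc_0=\vzero$ and $\mK_{\vb}=\bm{0}$.
\item For each $\vc$ there is an admissible $\mS$ with $\mS^\T\mK\vc=\vc$. In particular $\mK$ is injective, hence invertible.
\item The measure-zero/countable-union argument from the proof of \cref{thm:sub} now applies verbatim with $\mT^{-\T}$ replaced by $\mK^{-1}$. It shows $\mK^{-1}$, hence $\mK$, is the transpose of a matrix of the block form, so $\mK=\mT^\T$.
\end{itemize}

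To kill $\mK_{\mA}$, use instances $(\mA,\vb,\vc)$ with $\vb$ large (e.g.\ $\vb=N\vone$) and arbitrary $\mA$. Then $F(M)$ contains a full-dimensional ball's worth of mixed-integer points around $0$, so it affinely spans $\sR^n$. By \cref{lem:rigidity}, $\mS^\T(\mK_{\mA}(\mA)+\mT^\T\vc)=\vc$.
\begin{itemize}[nosep]
\item Setting $\vc=0$ forces $\mK_{\mA}(\mA)=0$, because $\mS$ is invertible. Here $\mS$ comes from a witness, which is an affine bijection.
\item Since $\mK_{\mA}$ is linear, $\mK_{\mA}=0$ everywhere.
\end{itemize}

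\textbf{Second part ($\vc'=\vc$).} In this case $\mT=\mI$. Use \cref{lem:free} on instances with $F=\sV$, namely $\mA=\bm{0}$, $\vb\ge0$, any $\vc$. This gives
\[
\mA'=\Lmap_{\vb}(\vb)+\Lmap_{\vc}(\vc)+\mA_0=\bm{0}
\quad\text{and}\quad
\vb'=\mG\vb+\mG_{\vc}\vc+\vs\ge0 .
\]
\begin{itemize}[nosep]
\item Setting $\vb=\vzero$, $\vc=\vzero$ gives $\mA_0=\bm{0}$ and $\vs\ge0$.
\item Varying $\vc$ freely in a linear expression that must vanish gives $\Lmap_{\vc}=0$. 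Varying $\vb$ over the orthant gives $\Lmap_{\vb}=0$.
\item The condition $\mG\vb+\mG_{\vc}\vc+\vs\ge0$ for all $\vc$ forces $\mG_{\vc}=\bm{0}$, since otherwise scaling $\vc$ makes some entry negative. Then $\mG\vb+\vs\ge0$ for all $\vb\ge0$, and scaling forces $\mG\ge0$.
\end{itemize}

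\textbf{Main obstacle.} The delicate step is the first part's $\mK_{\mA}=0$ and the block form of $\mK$. The witness $\mS$ depends on the instance, so identities cannot simply be compared across instances. I would handle this by extracting only instance-wise consequences: triviality of the kernel, and existence of a unimodular $\mS$ for each $\vc$. I would then reuse the countable-union trick, which needs only "for every $\vd$ there exists some $\mS$".

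I also need to check that the chosen instances genuinely have affinely spanning feasible sets. This holds for $\vb\ge$ row-sums of $|\mA|$ times a constant, since then the points $\pm\ve_j$ and $0$ of $\sV$ are feasible.
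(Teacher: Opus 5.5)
Your proposal is correct and follows essentially the same route as the paper. In the first part, the unconstrained instances together with \cref{lem:free,lem:mixed,lem:rigidity} give $\vc_0=\vzero$ and invertibility of $\mK$, and the countable-union argument of \cref{thm:sub} gives the block form of $\mK^\T$. The second part is the same affine-span and scaling argument on $(\bm{0},\vb,\vc)$ with $\vb\ge0$. The only cosmetic difference is how you kill $\mK_{\vb}$ and $\mK_{\mA}$: you kill $\mK_{\vb}$ (together with $\vc_0$) on the orthant with $\mA=\bm{0}$, then $\mK_{\mA}$ pointwise using large $\vb$. The paper instead kills $\mK_{\mA}$ and $\mK_{\vb}$ together on the open set $\Omega$, where $\vzero,\ve_1,\dots,\ve_n$ are feasible.
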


\begin{proof}
The instance $(\bm{0},\vzero,\vc)$ has $F=\sV$, so $F'=\sV$ by
\cref{lem:free}. A witness maps $\sV$ onto $\sV$, so its linear part
$\mS_{\vc}$ has the block form by \cref{lem:mixed}, and
$\mS_{\vc}^\T\vc'=\vc$ by \cref{lem:rigidity}. On this instance
$\vc'=\mK\vc+\vc_0$, so $\mS_{\vc}^\T(\mK\vc+\vc_0)=\vc$. At $\vc=\vzero$
this gives $\vc_0=\vzero$, and then $\mS_{\vc}^\T\mK\vc=\vc$ for all $\vc$,
so $\mK$ is invertible. For $\mT:=\mK^\T$ we can apply the argument from
the necessity half of \cref{app:sub} verbatim in order to prove that $\mT$ has the block form of \cref{thm:sub}.

Next let $\vc=\vzero$ and $(\mA,\vb)$ in the open set
\[
  \Omega:=\left\{(\mA,\vb)\in\sR^{m\times n}\times\sR^m \relmiddle|
  \vb>\1,\ \max_{p,j}|A_{p,j}|<1\right\}
\]
Then $F$ contains $\vzero,\ve_1,\dots,\ve_n$ and affinely spans $\sR^n$, so
\cref{lem:rigidity} gives $\vc'=\vzero$, that is,
$\mK_{\mA}(\mA)+\mK_{\vb}\vb=\vzero$ on $\Omega$. A linear map vanishing on
an open set vanishes everywhere, so $\mK_{\mA}=0$ and $\mK_{\vb}=\bm{0}$.

Now consider the case $\vc'=\vc$. For $\vb\ge0$ the instance $(\bm{0},\vb,\vc)$ has
$F=\sV$, so \cref{lem:free} gives $\mA'(\bm{0},\vb,\vc)=\bm{0}$ and
$\vb'(\bm{0},\vb,\vc)\ge0$. The affine map $\mA'$ vanishes on
$\{\bm{0}\}\times\sR^m_{\ge0}\times\sR^n$, hence on its affine span
$\{\bm{0}\}\times\sR^m\times\sR^n$, that is,
$\Lmap_{\vb}(\vb)+\Lmap_{\vc}(\vc)+\mA_0=\bm{0}$ for all $\vb$ and $\vc$, so
$\mA_0=\bm{0}$, $\Lmap_{\vb}=0$ and $\Lmap_{\vc}=0$. Likewise
$\mG\vb+\mG_{\vc}\vc+\vs\ge0$ for all $\vb\ge0$ and all $\vc$. At
$\vb=\vzero$, scaling $\vc$ gives $\mG_{\vc}=\bm{0}$ and then $\vs\ge0$.
Taking $\vb=N\ve_p$ with $N\to\infty$ gives $\mG\ve_p\ge0$, so $\mG\ge0$.
\end{proof}

By \cref{lem:mixed,thm:sub}, $\Sub(\mT^{-1},\vzero)=\Sub(\mT,\vzero)^{-1}$
is valid. We replace $\Phi$ by $\Phi\circ\Sub(\mT^{-1},\vzero)$,
keeping the notation of \cref{eq:general} for the new rule. Its objective
component is $\mK\mT^{-\T}\vc=\vc$, so from now on $\vc'=\vc$, and
\cref{lem:shape} gives
\begin{equation}\label{eq:form1}
  \Phi(\mA,\vb,\vc)=\bigl(\Lmap(\mA),\ \hmap(\mA)+\mG\vb+\vs,\ \vc\bigr),
  \qquad \mG\ge0,\ \vs\ge0 .
\end{equation}
In particular $F'$ depends only on $(\mA,\vb)$.

\begin{lemma}\label{lem:normals}
There is $\mLambda\ge0$ with $\Lmap(\mA)=\mLambda\mA$ for all $\mA$.
\end{lemma}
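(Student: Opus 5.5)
The plan is to show first that, for instances whose right-hand side is large, the rows of $\Lmap(\mA)$ are valid inequalities for the polyhedron $\{\mA\vx\le\vb\}$ itself. Farkas' lemma then writes each row of $\Lmap(\mA)$ as a nonnegative combination of rows of $\mA$. Finally, linearity of $\Lmap$ glues these pointwise combinations into a single matrix $\mLambda\ge0$ independent of $\mA$.

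\textbf{Step 1: $F'=F$ on a rich family of instances.} Fix $(\mA,\vb)$ with $\vb$ large relative to $\mA$ (say $\vb>N\1$ for $N$ large). Then $P=\{\vx \mid \mA\vx\le\vb\}$ contains a large ball, $F$ affinely spans $\sR^n$, and $\conv F$ has the same recession cone $\{\mA\vx\le\vzero\}$ as $P$. By \cref{eq:form1}, $F'$ depends only on $(\mA,\vb)$, while every objective $\vc$ yields a witness $\varphi_{\vc}$ with $\varphi_{\vc}(F)=F'$. By \cref{lem:rigidity} its linear part satisfies $\mS_{\vc}^\T\vc=\vc$. I would try to exploit the freedom in $\vc$ to prove $F'=F$ at least after a translation; for instance, $F$ and $F'$ then have the same width along every direction $\vc$ where the instance is bounded. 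Using $\vb\mapsto t\vb$ together with the affine dependence $\vb'=\hmap(\mA)+\mG\vb+\vs$, I would then conclude that the inequalities $\Lmap(\mA)\vx\le\vb'$ are valid on $\conv F$, hence on $P$ up to an additive slack. This step is the main obstacle, because the witness may vary with $\vc$. If it resists, the fallback is to restrict to direct counterexamples as in the proof of \cref{lem:redesc-cert}: rank-one matrices $\mA=\vu\,\ve_j^\T$, for which $F$ is an explicit slab or half-space, so that recession cones and lineality spaces can be compared directly.

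\textbf{Step 2: Farkas.} If $\vl^\T\vx\le\beta$ holds on the full-dimensional polyhedron $P$, the affine Farkas lemma gives $\vl^\T=\vgamma^\T\mA$ for some $\vgamma\ge0$. Applying this to each row $i$ of $\Lmap(\mA)$, every row of $\Lmap(\mA)$ lies in the cone generated by the rows of $\mA$. This holds for every $\mA$ in an open set, and by linearity of $\Lmap$ it extends to the row-space statement for all $\mA$.

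\textbf{Step 3: a uniform $\mLambda$.} Take $\mA=\vu\,\ve_j^\T$. Its only nonzero column is column $j$, so Step 2 forces $\Lmap(\vu\ve_j^\T)=\vv_j(\vu)\,\ve_j^\T$ with $\vv_j$ linear, i.e.\ $\vv_j(\vu)=\mLambda_j\vu$. Next take $\mA=\vu(\ve_j+\ve_l)^\T$. Its rows are multiples of $\ve_j+\ve_l$, so $\mLambda_j\vu=\mLambda_l\vu$ for all $\vu$, hence $\mLambda_j=\mLambda_l=:\mLambda$. Summing over columns gives $\Lmap(\mA)=\mLambda\mA$. For nonnegativity, take $\mA=\ve_p\ve_1^\T$. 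The cone condition says that each row of $\mLambda\ve_p\ve_1^\T$ is a nonnegative multiple of $\ve_1^\T$, so every entry $\Lambda_{i,p}\ge0$.
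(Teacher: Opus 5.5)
Your Steps 2 and 3 are sound linear algebra once you know that every row of $\Lmap(\mA)$ lies in the cone generated by the rows of $\mA$. For the rank-one matrices you actually use, gluing column by column via $\vu\ve_j^\T$ and $\vu(\ve_j+\ve_l)^\T$ is a valid alternative to the paper's row-wise argument. The gap is Step 1. It carries the whole content of the lemma, and you leave it as an aspiration.

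The rows of $\Lmap(\mA)$ are only known to be valid on $F'=\varphi(F)$. So what you can feed into Farkas is that the rows of $\Lmap(\mA)\mS$ are valid on $\conv F$, where $\mS$ is the linear part of a witness you do not control. Your proposed way to remove $\mS$ does not work. You want to show that $F'$ is a translate of $F$ by comparing widths in every direction $\vc$. But each $\vc$ comes with a different witness, so these maps cannot be combined. Moreover, width data does not determine a set up to translation: a triangle and its point reflection have the same width in every direction.

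Separately, your Step 2 remark that a cone condition on an open set of matrices extends to all $\mA$ ``by linearity'' is not valid. Cone membership is not preserved under limits, and the rank-one matrices where you need it are non-generic. You must establish the condition directly on those instances.

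The missing idea is to use the freedom in $\vc$ once, on an instance governed by a single normal, choosing $\vc$ equal to that normal. Take $\mA=\ve_p\va^\T$, $\vb=\beta\ve_p$ and $\vc=\va$; this is legitimate because $\vc'=\vc$ after the normalization. Then \cref{lem:rigidity} gives $\mS^\T\va=\va$, i.e.\ $\va^\T\varphi(\vx)=\va^\T(\vx+\vr)$ on all of $\sR^n$.

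You also need $\conv F\supseteq H=\{\va^\T\vx\le\beta-\|\va\|_1\}$. This holds because every vertex of the cube $\lfloor\vx\rfloor+[0,1]^n$ around a point of $H$ is feasible. The same argument is what your asserted recession-cone claim needs, since with integer variables it is not automatic.

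It follows that $\varphi(H)$ is a parallel half-space contained in $\conv F'=\varphi(\conv F)$. Let $\vw^\T$ be row $i$ of $\Lmap(\mA)$. Validity on that half-space gives $\vw^\T\mS=\mu\va^\T$ with $\mu\ge0$. Since $\va^\T\mS^{-1}=\va^\T$, this yields $\vw^\T=\mu\va^\T$.

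Your Step 3 also uses $\mA=\vu(\ve_j+\ve_l)^\T$, so the fallback must handle general normals $\va$, not only $\ve_j$. When $\vu$ has entries of both signs, the same argument applied to a wide inner slab gives rows that are multiples of $\va^\T$ without a sign, which is all that step needs. The sign then comes from the half-space case $\ve_p\ve_1^\T$, as you say.
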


\begin{proof}
Fix $p \in [m]$, $\va\neq\vzero$ and $\beta\in\sR$, and let $\mA=\ve_p\va^\T$ and
$\vb=\beta\ve_p$, so that $F=\{\vx\in\sV \mid \va^\T\vx\le\beta\}$. Then
$\conv F$ contains the half-space
$H=\{\vx \mid \va^\T\vx\le\beta-\|\va\|_1\}$. Indeed, for any $\vx\in H$, every vertex $\vz$ of the unit cube $\lfloor\vx\rfloor+[0,1]^n$ is an integer vector with $\va^\T\vz\le\va^\T\vx+\|\va\|_1\le\beta$, so it lies in $F$. In particular
$F$ affinely spans $\sR^n$. Let $(\varphi,\gamma)$ be a witness of
$(\mA,\vb,\va)\sim\Phi(\mA,\vb,\va)$, whose objectives are both $\va$, and
let $\vr$ be the translation part of $\varphi$. By \cref{lem:rigidity},
$\va^\T\varphi(\vx)=\va^\T(\vx+\vr)$ for all $\vx\in\sR^n$, and comparing
with the value identity $\va^\T\varphi(\vx)=\va^\T\vx+\gamma$ on $F$ gives
$\gamma=\va^\T\vr$. Hence $\varphi(H)$ is the half-space
$H':=\{\vy \mid \va^\T\vy\le\beta-\|\va\|_1+\gamma\}$. As $\varphi$ is
affine, $\conv F'=\varphi(\conv F)\supseteq H'$.

Write $\va_i'^\T$ for row
$i$ of $\Lmap(\mA)$ and $b'_i$ for component $i$ of $\vb'$. The constraint
$\va_i'^\T\vy\le b'_i$ of the image holds on $F'$, hence on $\conv F'$,
thus on $H'$. We claim that $\va'_i=\mu_i\va$ for some $\mu_i\ge0$. Let $\vv\in\sR^n$
with $\va^\T\vv\le0$. For $\vy\in H'$ and $N\in\mathbb N$ the point
$\vy+N\vv$ lies in $H'$, so $\va_i'^\T\vy+N\va_i'^\T\vv\le b'_i$ for all
$N$, which forces $\va_i'^\T\vv\le0$. Applying this to $\vv$ and $-\vv$ for
every $\vv\perp\va$ gives $\va'_i\perp\va^\perp$, so $\va'_i=\mu_i\va$ for
some $\mu_i\in\sR$, and applying it to $\vv=-\va$ gives $\mu_i\ge0$.

The map $\va\mapsto\va'_i$ is linear, since $\Lmap$ is, and by the claim it
sends every nonzero vector to a nonnegative multiple of itself. Such a map
is $\Lambda_{i,p}\mI$ for a scalar $\Lambda_{i,p}\ge0$: if $n=1$ every
linear map of $\sR$ is a scalar, which the claim makes nonnegative, and if
$n\ge2$, for linearly
independent $\va,\vu$ with multipliers $\mu,\nu$, the image
$\mu\va+\nu\vu$ of $\va+\vu$ is a multiple of $\va+\vu$ only if
$\mu=\nu$. So row $i$ of $\Lmap(\ve_p\va^\T)$ is $\Lambda_{i,p}\va^\T$ for
every $\va$. As $\mA=\sum_p\ve_p\va_p^\T$, row $i$ of $\Lmap(\mA)$ is
$\sum_p\Lambda_{i,p}\va_p^\T=\vlambda_i^\T\mA$.
\end{proof}

By \cref{lem:normals}, \cref{eq:form1} becomes
\begin{equation}\label{eq:form2}
  \Phi(\mA,\vb,\vc)=\bigl(\mLambda\mA,\ \hmap(\mA)+\mG\vb+\vs,\ \vc\bigr),
  \qquad \mLambda\ge0 .
\end{equation}

\paragraph{Hyperplane instances.}
Let $\gT\subseteq\sR^m$ be the set of vectors with no zero entry and entries
of both signs. It is open since its complement is the union of closed sets, and nonempty since $m\ge2$. Fix $\vt\in\gT$ and
consider, for $\xi\in\sR$ and $\va\in\sR^n$, the instance
$(\vt\,\va^\T,\ \xi\vt)$. Its row $p$ reads $t_p\va^\T\vx\le\xi t_p$, and
both signs of $t_p$ occur, so $F=\{\vx\in\sV \mid \va^\T\vx=\xi\}$ is a
hyperplane intersected with $\sV$. By \cref{eq:form2}, the image has constraint matrix
$\mLambda\vt\va^\T$ and right-hand side $\hmap(\vt\va^\T)+\xi\mG\vt+\vs$, so
its row $i$ reads $\ell_i\,\va^\T\vy\le\rho_i(\xi,\va)$ with
\[
  \ell_i:=\vlambda_i^\T\vt,\qquad
  \rho_i(\xi,\va):=\xi\,\vg_i^\T\vt+h_i(\vt\,\va^\T)+s_i,
\]
where $\vg_i^\T$ is the $i$-th row of $\mG$ and $h_i$ the $i$-th entry of
$\hmap$. Each $\rho_i$ is an affine function of $(\xi,\va)$, and
$F'=\{\vy\in\sV \mid \ell_i\,\va^\T\vy\le\rho_i(\xi,\va)\ \forall i\}$. Let
$\Gamma_{\va}:=\va^\T\sV$, which contains $a_j\sZ$ for every $j$.

\begin{lemma}\label{lem:hyperplane}
Let $\vt\in\gT$. There are $\kappa(\vt)\in\sR$ and $\vq(\vt)\in\sR^n$ such that
\[
  \mG\vt=\kappa(\vt)\,\mLambda\vt\qquad\text{and}\qquad
  \hmap(\vt\,\va^\T)=-\mLambda\vt\;\vq(\vt)^\T\va\quad\text{for all }\va\in\sR^n .
\]
Moreover, some row $i$ has $\ell_i>0$ and $s_i=0$, and some row has
$\ell_i<0$.
\end{lemma}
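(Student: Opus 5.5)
The plan is to read off the structure of the image system from a single one-dimensional quantity. Fix $\vt\in\gT$ and put $u:=\va^\T\vy$. The image set $F'$ is $\{\vy\in\sV \mid u\in I(\xi,\va)\}$, where $I(\xi,\va)$ is an interval cut out by the rows of the image system:
\begin{itemize}[nosep]
\item its upper end is $U:=\min_{\ell_i>0}\rho_i/\ell_i$;
\item its lower end is $L:=\max_{\ell_i<0}\rho_i/\ell_i$;
\item rows with $\ell_i=0$ impose $\rho_i(\xi,\va)\ge0$.
\end{itemize}
Since $\ell_i$ does not depend on $(\xi,\va)$, every ratio $\rho_i/\ell_i$ is affine in $(\xi,\va)$. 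So $U$ is a concave piecewise-affine function of $(\xi,\va)$ and $L$ a convex one. Validity of $\Phi$ gives a witness $\varphi$ with $\varphi(F)=F'$, and the argument extracts constraints on $I$ from this.

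First step: the image is again a hyperplane section. Choose $\va$ in a nonempty open set on which $\Gamma_{\va}$ is dense in $\sR$ and on which $F$, whenever nonempty, affinely spans the hyperplane $\{\va^\T\vx=\xi\}$. The choice of $\va$ depends on the variable types:
\begin{itemize}[nosep]
\item if $n-k\ge1$, take $a_1\neq0$ on a continuous coordinate, so $\Gamma_{\va}=\sR$;
\item if $k\ge2$, take two integer coordinates with irrational ratio.
\end{itemize}
Then $\varphi(F)=F'$ forces the affine hull of $F'$ to be a hyperplane. If $I\cap\Gamma_{\va}$ contained two distinct levels, $F'$ would contain points of $\sV$ on two parallel level sets, and its affine hull would be all of $\sR^n$. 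Hence, for $\xi\in\Gamma_{\va}$, $I\cap\Gamma_{\va}$ is a single point. Density of $\Gamma_{\va}$ then forces $I$ itself to be that point, so $U=L<\infty$. In particular rows with $\ell_i>0$ and rows with $\ell_i<0$ both exist; this is the last claim of the lemma apart from $s_i=0$.

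Second step: compare slopes. Call $\psi:=U=L$ the common value. For every row with $\ell_i>0$, the difference $\rho_i/\ell_i-\psi\ge0$ holds for all $\xi\in\sR$. The same holds for $\psi-\rho_i/\ell_i\ge0$ on rows with $\ell_i<0$, and for $\rho_i\ge0$ on rows with $\ell_i=0$. For fixed $\va$, $\psi$ is the minimum of finitely many affine functions of $\xi$ and also the maximum of finitely many. Such a function is concave and convex at once, hence affine on the open domain. Moreover, it coincides with one of the $\rho_i/\ell_i$ on an open subset, and therefore everywhere.

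A nonnegative affine function of $\xi\in\sR$ has zero slope. So all ratios share the $\xi$-slope $\vg_i^\T\vt/\ell_i=:\kappa(\vt)$, and the rows with $\ell_i=0$ have $\vg_i^\T\vt=0$. This is exactly $\mG\vt=\kappa(\vt)\mLambda\vt$.

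The same argument applies in the variable $\va$, using the linear part $h_i(\vt\va^\T)$, which is linear in $\va$. It shows that $h_i(\vt\va^\T)/\ell_i$ is a common linear form $-\vq(\vt)^\T\va$, and that $h_i(\vt\va^\T)=0$ when $\ell_i=0$. Hence $\hmap(\vt\va^\T)=-\mLambda\vt\,\vq(\vt)^\T\va$. These identities first hold for $\va$ in the open set, and extend to all $\va$ because a linear map vanishing on an open set vanishes everywhere.

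Third step: the constant terms. With the common slopes in hand, $U-L$ is a constant, namely $\min_{\ell_i>0}s_i/\ell_i-\max_{\ell_i<0}s_i/\ell_i$, and it must vanish. Since $\vs\ge0$ by \cref{eq:form1}, the minimum is $\ge0$ and the maximum is $\le0$. So both are $0$, and some row with $\ell_i>0$ has $s_i=0$.

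The main obstacle is the pure integer case $n=k=1$, and more generally any case where no $\va$ makes $\Gamma_{\va}$ dense. There $F$ is at most a point, $\sV=\sZ$, and the affine-hull argument gives nothing. Instead I would use that $F'$ must be nonempty exactly when $\xi\in a\sZ$. As $\xi$ ranges over an open interval $(ja,(j+1)a)$, the interval $I(\xi,a)$ must avoid $a\sZ$, while at $\xi=ja$ it must meet $a\sZ$. Because the endpoints of $I$ are piecewise affine in $\xi$, I would try to show from this that $I(\xi,a)$ degenerates to a point $\psi(\xi,a)$ with $\psi(\xi,a)\in a\sZ$ exactly when $\xi\in a\sZ$. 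The slope and constant comparison of the second and third steps then goes through unchanged.
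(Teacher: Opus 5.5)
There is a genuine gap. It sits where you place your ``main obstacle'', but it is wider than you think.

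Your first step derives the single-level property from affine hulls plus density of $\Gamma_{\va}$, and this fails in the whole pure integer case $k=n$. For $n\ge2$ the two properties you require of $\va$ are incompatible:
\begin{itemize}
\item $\Gamma_{\va}=\va^\T\sZ^n$ is dense exactly when $\va$ is not a multiple of a rational vector;
\item the integer points of $\{\va^\T\vx=\xi\}$ affinely span that hyperplane only if $\{\vx\in\sZ^n \mid \va^\T\vx=0\}$ has rank $n-1$, i.e.\ only if $\va$ \emph{is} such a multiple.
\end{itemize}
For example, with $n=2$ and $a_1/a_2$ irrational, $F$ is a single point. Then ``two levels in $F'$ make its affine hull $\sR^n$'' contradicts nothing. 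Such $\va$ also do not form an open set. For $n=k=1$ you only state what you would try to show. The lemma must hold for every $0\le k\le n$, including pure ILPs, so the proposal does not establish it. You could rescue the single-level claim by comparing dimensions instead, using that all level sets of $\va^\T$ on $\sV$ are cosets of one subgroup. But when $\Gamma_{\va}$ is discrete, as it always is for $n=1$, you still cannot get $U=L$ from density.

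The missing idea is to use the objective. The paper gives the hyperplane instance the objective $\va$. Since the normalized rule keeps $\vc'=\vc$, a witness satisfies $\va^\T\varphi(\vx)=\va^\T\vx+\gamma=\xi+\gamma$ on $F$. Combined with $\varphi(F)=F'$, this gives $\va^\T F'=\{\xi+\gamma\}$ for every $\va\neq\vzero$, every $\xi\in\Gamma_{\va}$ and every $k$, with no density or affine-hull argument. Because $\Gamma_{\va}\supseteq a_j\sZ$, this yields $0\le U-L<2|a_j|$ on $\gS$. The paper then extends nonnegativity of the concave function $U-L$, and of $\rho_i$ for rows with $\ell_i=0$, to all of $\sR\times\sR^n$ by writing each point as the midpoint of two points of $\gS$. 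So $U-L$ is constant, and evaluating at $(0,\varepsilon\ve_1)$ gives $U\equiv L$.

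Having the inequalities on the whole space also fixes a smaller gap in your second step. An affine function that is nonnegative only for $\va$ in an open set need not be constant in $\va$ (think of a half-space). So your ``same argument in the variable $\va$'' needs the inequalities for all $\va$. In the mixed case, continuity from the dense set $\{a_1\neq0\}$ suffices, but you must say so. Once $U=L$ holds everywhere, your slope comparison and your use of $\vs\ge0$ for the constant terms match the paper's argument.
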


\begin{proof}
Let $\va\neq\vzero$ and $\xi\in\Gamma_{\va}$, so that $F\neq\emptyset$ and
$\va^\T F=\{\xi\}$. A witness $(\varphi,\gamma)$ of
$(\vt\va^\T,\xi\vt,\va)\sim\Phi(\vt\va^\T,\xi\vt,\va)$ satisfies
$\va^\T\varphi(\vx)=\va^\T\vx+\gamma$ on $F$ and maps $F$ onto $F'$, so
$\va^\T F'=\{\xi+\gamma\}$ is a single point. Suppose $\ell_i\le0$ for all
$i$. Pick $\vy\in F'$, an index $j$ with $a_j\neq0$, and $\vv:=\operatorname{sign}(a_j) \cdot \ve_j$ so that $\va^\T\vv=|a_j|$ is satisfied. Then $\vy+N\vv\in F'$ for every
$N\in\mathbb N$, and $\va^\T(\vy+N\vv)=\va^\T\vy+N|a_j|$ takes infinitely
many values, a contradiction. So some $\ell_i>0$, and choosing
$\vv = -\operatorname{sign}(a_j) \cdot \ve_j$ instead shows that some $\ell_i<0$. Hence
\[
  H_+(\xi,\va):=\min_{\ell_i>0}\frac{\rho_i(\xi,\va)}{\ell_i},\qquad
  H_-(\xi,\va):=\max_{\ell_i<0}\frac{\rho_i(\xi,\va)}{\ell_i}
\]
are finite, $H_+$ is concave, $H_-$ is convex, and
$F'=\{\vy\in\sV \mid H_-\le\va^\T\vy\le H_+\}$ provided $\rho_i\ge0$ for the
rows with $\ell_i=0$, and $F'=\emptyset$ otherwise. Since $F'\neq\emptyset$,
$\rho_i\ge0$ for $\ell_i=0$ and $\Delta:=H_+-H_-\ge0$. Since
$\va^\T F'=\Gamma_{\va}\cap[H_-,H_+]$ is a single point and
$\Gamma_{\va}\supseteq a_j\sZ$, we also have $\Delta<2|a_j|$ for every $j$ with
$a_j\neq0$.

These inequalities hold on
$\gS:=\{(\xi,\va) \mid \va\neq\vzero,\ \xi\in\Gamma_{\va}\}$. In order to extend them to $\sR \times \sR^{n}$, notice that every $(\xi,\va)$
is the midpoint of $(2\xi,\va_+)$ and $(0,\va_-)$, where
$\va_\pm:=\va\pm(2\xi-a_1)\ve_1$, and both points lie in $\gS$ whenever
$\va_+,\va_-\neq\vzero$, because $\va_+^\T\ve_1=2\xi$ by construction and
$0\in\Gamma_{\va_-}$. By concavity of $\Delta$ and affineness of $\rho_i$,
$\Delta\ge0$ and $\rho_i\ge0$ (for $\ell_i=0$) hold at all such points. The
excluded points, where $\va_+=\vzero$ or $\va_-=\vzero$, have
$\va=\alpha\ve_1$ and $\xi\in\{0,\alpha\}$ for some
$\alpha\in\sR$, so they lie
on two lines, and the inequalities hold everywhere by continuity. A concave
function bounded below on $\sR^{1+n}$ is constant, and so is an affine one.
Evaluating $\Delta$ at $(0,\varepsilon\ve_1)\in\gS$ gives $\Delta<2\varepsilon$
for every $\varepsilon>0$, so $\Delta\equiv0$, and evaluating $\rho_i$ at
$(0,\vzero)$ gives $\rho_i\equiv s_i$ for $\ell_i=0$.

So $H:=H_+=H_-$ is concave and convex, hence affine. At $(0,\vzero)$, where
$\rho_i=s_i$, it equals $\min_{\ell_i>0}s_i/\ell_i\ge0$ and
$\max_{\ell_i<0}s_i/\ell_i\le0$. Hence $H(0,\vzero)=0$, the row attaining the
minimum has $\ell_i>0$ and $s_i=0$, and $H$ is linear, say
$H(\xi,\va)=\kappa(\vt)\xi-\vq(\vt)^\T\va$. For $\ell_i>0$ the affine
function $\rho_i/\ell_i-H$ is nonnegative, and for $\ell_i<0$ it is
nonpositive. In both cases it is constant, equal to its value $s_i/\ell_i$
at $(0,\vzero)$. So $\rho_i-s_i=\ell_iH$ for every row, including those with
$\ell_i=0$, and comparing the coefficients of $\xi$ and of $\va$ gives
$\vg_i^\T\vt=\kappa(\vt)\ell_i$ and $h_i(\vt\va^\T)=-\ell_i\,\vq(\vt)^\T\va$,
which is the claim row by row.
\end{proof}

\begin{lemma}\label{lem:structure}
Every $p\in[m]$ has a copy row, that is, a row with
$\vlambda_i^\T=\theta_i\ve_p^\T$ and $\theta_i>0$, with $s_i=0$. There are
$\kappa\ge0$ and $\vq\in\sR^n$ with $\mG=\kappa\mLambda$ and
$\hmap(\mA)=-\mLambda\mA\vq$ for all $\mA$.
\end{lemma}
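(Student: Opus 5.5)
The plan is to first upgrade the pointwise statements of \cref{lem:hyperplane}, which hold for each $\vt\in\gT$ with $\vt$-dependent $\kappa(\vt)$ and $\vq(\vt)$, to global identities. Then we reduce $\Phi$ to a pure constraint rewriting and read off the copy rows from \cref{thm:redesc}.

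\emph{Step 1 (global $\mG$ and $\hmap$).} For fixed $\va$, both sides of $\hmap(\vt\va^\T)=-\mLambda\vt\,\vq(\vt)^\T\va$ are defined on the open set $\gT$, and the left side is linear in $\vt$. Writing $\hmap(\vt\va^\T)=\mB(\va)\vt$ with $\mB(\va)$ linear in $\va$, we get that $\mB(\va)\vt$ and $\mG\vt$ are both multiples of $\mLambda\vt$ for all $\vt\in\gT$. Note that $\mLambda\vt\neq\vzero$ there, since some $\ell_i>0$. I would then use the following elementary fact. Suppose a linear map $\mX$ satisfies $\mX\vt\in\operatorname{span}(\mLambda\vt)$ on an open set, and $\operatorname{rank}\mLambda\ge2$. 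Then $\mX=\kappa\mLambda$ for a constant $\kappa$.

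The argument for this fact goes as follows.
\begin{itemize}[nosep]
\item For $\vv\in\ker\mLambda$ and small $\varepsilon$, we have $\mX(\vt+\varepsilon\vv)\in\operatorname{span}(\mLambda\vt)$, so $\mX\vv$ lies in $\bigcap_{\vt\in\gT}\operatorname{span}(\mLambda\vt)=\{\vzero\}$, because $\gT$ spans and $\operatorname{rank}\mLambda\ge2$.
\item Hence $\mX$ factors through the image of $\mLambda$.
\item A linear map sending an open set of vectors to multiples of themselves is a scalar, by the same two-independent-vectors argument as in \cref{lem:normals}.
\end{itemize}
Applying this to $\mG$ gives $\mG=\kappa\mLambda$. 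Applying it to $\mB(\va)$ gives $\mB(\va)=-\mLambda\,(\vq^\T\va)$, with $\va\mapsto\vq^\T\va$ linear because $\mB$ is. Finally, $\hmap(\mA)=-\mLambda\mA\vq$ for all $\mA$ follows because matrices $\vt\va^\T$ with $\vt\in\gT$ span $\sR^{m\times n}$ and $\hmap$ is linear. The sign $\kappa\ge0$ follows from $\mG\ge0$ (\cref{lem:shape}) together with the fact that $\mLambda\vt$ has a positive entry for $\vt\in\gT$.

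\emph{Step 2 (rank of $\mLambda$, the main obstacle).} Step 1 needs $\operatorname{rank}\mLambda\ge2$, and ideally injectivity. I would obtain injectivity before Step 1 by a separate argument. Suppose $\mLambda\vv=\vzero$ with $\vv\neq\vzero$. Take an instance $(\mA,\vb)$ that is feasible, and consider the perturbation $(\mA+\vv\va^\T,\vb+\xi\vv)$. By \cref{eq:form2}, the image constraint matrix is unchanged, and the right-hand side changes only by a term of order $\xi\mG\vv+\hmap(\vv\va^\T)$. The original feasible set, however, can be made empty or changed arbitrarily by choosing the hyperplane data along $\vv$. This is done as in \cref{lem:hyperplane}, using a $\vv$ with entries of both signs after adding a small multiple of $\vt$.

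If this direct argument is too delicate, the fallback is to run \cref{lem:hyperplane} on $\vt+\varepsilon\vv$. Since $\mLambda(\vt+\varepsilon\vv)=\mLambda\vt$, the image of the hyperplane instance is a fixed set of constraints. We then compare the required single-value property $\va^\T F'=\{\xi+\gamma\}$ across different $\xi$ to derive a contradiction.

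\emph{Step 3 (copy rows).} With Step 1 in hand, $\Phi(\mA,\vb,\vc)=\bigl(\mLambda\mA,\ \mLambda(\kappa\vb-\mA\vq)+\vs,\ \vc\bigr)$. First reduce the translation: composing with a substitution of the form $\Sub(\mI,-\vq)$ where legitimate, or directly repeating the Farkas argument of \cref{lem:redesc-cert} on instances $\mA=\vu\ve_1^\T$ while tracking the shift $\vq$. Then apply the necessity direction of the proof of \cref{thm:redesc}. On an instance with $\vb$ in the interior and a chosen tight constraint, the argument produces, for each $p$, a row $\vlambda_i^\T=\theta_i\ve_p^\T$ with $\theta_i>0$ whose right-hand side is tight. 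The tightness of that row forces $s_i=0$, once the $\vb$-dependent part $\kappa\theta_i b_p$ is matched by varying $\vb$.
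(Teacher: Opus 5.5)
Your Step 1 is correct once $\operatorname{rank}\mLambda\ge2$ is known: a linear map for which an open set of vectors are eigenvectors must be a scalar. This is a legitimate alternative to the paper's argument, which instead makes $\kappa(\vt)$ and $\vq(\vt)$ constant by comparing copy rows of two distinct constraints $p\neq p'$ and using divisibility of linear forms.

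The genuine gap is Step 3, which is where the copy-row half of the statement is supposed to come from. The necessity direction of \cref{thm:redesc} (through \cref{lem:redesc-cert}) checks whether the particular point $\ve_1$ lies in $F$ or in $F'$, so it only applies to rules with $F(\Phi(M))=F(M)$ for every $M$. At this stage you only know that $\Phi$ is valid, i.e.\ $F(\Phi(M))$ is the image of $F(M)$ under some unknown witness. Three further obstacles block this route:
\begin{itemize}[nosep]
\item You do not know $\kappa=1$. In the paper that is \cref{lem:G}, and its proof relies on the very copy rows you are trying to produce.
\item $\Sub(\mI,-\vq)$ is a valid rule only if $\vq\in\sV$, which is established only at the very end of the completeness proof.
\item The fact that a valid affine constraint rewriting is a re-description is itself derived from \cref{thm:complete}, so appealing to it here is circular.
\end{itemize}
Finally, the remark about matching $\kappa\theta_ib_p$ by varying $\vb$ does not replace an argument for $s_i=0$.

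The missing idea is that copy rows come directly out of \cref{lem:hyperplane} with a well-chosen $\vt$. Take $\vt=\tau\ve_p-\sum_{r\neq p}\ve_r$, which lies in $\gT$ because $m\ge2$, so that $\ell_i=\tau\Lambda_{i,p}-\sum_{r\neq p}\Lambda_{i,r}$. Since $\mLambda\ge0$, choose $\tau>0$ smaller than every positive ratio $\sum_{r\neq p}\Lambda_{i,r}/\Lambda_{i,p}$. Then the rows with $\ell_i>0$ are exactly the copy rows of $p$, and \cref{lem:hyperplane} supplies one of them with $s_i=0$.

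Doing this first also dissolves your Step 2. The copy rows for $p=1,\dots,m$ form a positive generalized permutation submatrix, so $\operatorname{rank}\mLambda=m\ge2$. Even without copy rows, $\operatorname{rank}\mLambda\ge2$ is immediate. A nonnegative $\mLambda$ of rank at most one has rows $w_i\vz^\T$ with $w_i\ge0$ and $\vz\ge0$. Then all $\ell_i=w_i\vz^\T\vt$ share a sign, contradicting the rows with $\ell_i>0$ and $\ell_i<0$ given by \cref{lem:hyperplane}.

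Your injectivity sketches are not worked out, and injectivity is not needed anyway. In the fallback, for instance, the image of the hyperplane instance at $\vt+\varepsilon\vv$ is not a fixed system. Its right-hand side $\xi\mG(\vt+\varepsilon\vv)+\hmap((\vt+\varepsilon\vv)\va^\T)+\vs$ depends on $\varepsilon$, and \cref{lem:hyperplane} applies equally to both instances, so no contradiction is exhibited.

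With copy rows and $\operatorname{rank}\mLambda\ge2$ in hand, your Step 1 completes the proof. The sign $\kappa\ge0$ then follows from $\mG=\kappa\mLambda$, $\mG\ge0$, $\mLambda\ge0$ and $\mLambda\neq\bm{0}$.
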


\begin{proof}
Fix $p$ and let $\vt=\tau\ve_p-\sum_{r\neq p}\ve_r\in\gT$ with $\tau>0$, so
that $\ell_i=\tau\Lambda_{i,p}-\sum_{r\neq p}\Lambda_{i,r}$. As
$\mLambda\ge0$, if $\tau$ is smaller than every positive number among the
ratios $\sum_{r\neq p}\Lambda_{i,r}/\Lambda_{i,p}$ over the rows with
$\Lambda_{i,p}>0$, then the rows with $\ell_i>0$ are exactly the copy rows
of $p$. By \cref{lem:hyperplane} such a row exists
with $s_i=0$.

Let $i$ and $i'$ be copy rows of $p\neq p'$. By \cref{lem:hyperplane},
$\kappa(\vt)=\vg_i^\T\vt/(\theta_it_p)=\vg_{i'}^\T\vt/(\theta_{i'}t_{p'})$
for $\vt\in\gT$, so
$\theta_{i'}t_{p'}\,\vg_i^\T\vt=\theta_it_p\,\vg_{i'}^\T\vt$ on the open set
$\gT$, hence as polynomials. As $t_p$ divides the right-hand side and not
$t_{p'}$, it divides the linear form $\vg_i^\T\vt$, so $\vg_i^\T\vt=\mu t_p$
for some $\mu\in\sR$, and $\kappa(\vt)=\mu/\theta_i$ is a constant $\kappa$. The same argument
applied to $-\vq(\vt)^\T\va=h_i(\vt\va^\T)/(\theta_it_p)$, for fixed $\va$,
shows that $\vq(\vt)=:\vq$ is constant. Now \cref{lem:hyperplane} gives
$\vg_i^\T\vt=\kappa\,\vlambda_i^\T\vt$ and
$h_i(\vt\va^\T)=-\vlambda_i^\T(\vt\va^\T)\vq$ for all rows, all $\vt$ in the
open set $\gT$ and all $\va$. Both sides are linear in $\vt$, so the
identities hold for all $\vt\in\sR^m$. The first gives $\mG=\kappa\mLambda$,
and the second gives $\hmap(\mA)=-\mLambda\mA\vq$, since the matrices
$\vt\va^\T$ span $\sR^{m\times n}$. Finally a copy row has
$\vg_i=\kappa\theta_i\ve_p\ge0$, so $\kappa\ge0$.
\end{proof}

By \cref{lem:structure}, \cref{eq:form2} becomes
\begin{equation}\label{eq:form3}
  \Phi(\mA,\vb,\vc)=\bigl(\mLambda\mA,\ \mLambda(\kappa\vb-\mA\vq)+\vs,\ \vc\bigr),
  \qquad \kappa\ge0 .
\end{equation}

\begin{lemma}\label{lem:G}
$\kappa=1$.
\end{lemma}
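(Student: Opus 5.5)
The plan is to test \cref{eq:form3} on a one-dimensional ``interval'' instance and compare the ranges of the first coordinate over $F$ and over $F'$. First I will simplify the image feasible set in general. Write $\mLambda$, up to row permutation, as copy rows stacked on top of the remaining rows. For each $p$, fix a copy row $i$ of $p$ with $s_i=0$, which exists by \cref{lem:structure}. That row reads $\theta_i\va_p^\T\vy\le\theta_i(\kappa b_p-\va_p^\T\vq)$, that is, $\va_p^\T(\vy+\vq)\le\kappa b_p$. Any other row $i'$ reads $\vlambda_{i'}^\T\mA(\vy+\vq)\le\kappa\vlambda_{i'}^\T\vb+s_{i'}$. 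Since $\mLambda\ge0$, $\vs\ge0$ and $\kappa\ge0$, it is a nonnegative combination of the chosen copy rows with a loosened right-hand side, so it is implied by them. Hence
\[
  F'=\{\vy\in\sV \mid \mA(\vy+\vq)\le\kappa\vb\}=\bigl(\{\vx \mid \mA\vx\le\kappa\vb\}\cap(\sV+\vq)\bigr)-\vq .
\]

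Next I will choose the instance. Fix $L>0$, let row $1$ be $x_1\le L$ and row $2$ be $-x_1\le0$; this uses $m\ge2$. Let every further row be $\vzero^\T\vx\le1$, which is harmless in the image because $\kappa\ge0$. Take the objective $\vc=\ve_1$. Then $F=\{\vx\in\sV \mid 0\le x_1\le L\}$, and for $L\ge1$ it affinely spans $\sR^n$ whether coordinate $1$ is continuous or integer. The image is $F'=\{\vy\in\sV \mid -q_1\le y_1\le\kappa L-q_1\}$, and its objective is again $\ve_1$ because $\vc'=\vc$.

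Now let $(\varphi,\gamma)$ be a witness with translation part $\vr$. By \cref{lem:rigidity}, $\ve_1^\T\varphi(\vx)=x_1+r_1$ for all $\vx$. Since $\varphi(F)=F'$, the set of first coordinates $X':=\{y_1 \mid \vy\in F'\}$ is the translate by $r_1$ of $X:=\{x_1 \mid \vx\in F\}$. If coordinate $1$ is continuous, then $X=[0,L]$ and $X'=[-q_1,\kappa L-q_1]$, the latter empty if $\kappa L<0$, which cannot happen. Translation preserves length, so $\kappa L=L$ and $\kappa=1$. If coordinate $1$ is integer, then $X=\sZ\cap[0,L]$ has $L+1$ elements for integer $L$. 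The set $X'$ is the integer points of an interval of length $\kappa L$, so its cardinality lies in $[\kappa L-1,\kappa L+1]$. Equality of cardinalities for all integers $L\ge1$ gives $|(\kappa-1)L+1|\le1$, and letting $L\to\infty$ forces $\kappa=1$. The case $\kappa=0$ is included: then $X'$ has at most one point, while $X$ has $L+1\ge2$.

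The main obstacle I expect is the first step, showing that the non-copy rows of the image are redundant so that $F'$ has the clean form above. This needs $\vs\ge0$, $\mLambda\ge0$ and $\kappa\ge0$, together with the existence of tight copy rows ($s_i=0$) from \cref{lem:structure}. The remaining care point is that the dummy rows $\vzero^\T\vx\le1$ stay feasible in the image, which holds since their copies read $0\le\theta\kappa$. After that, the translation argument via \cref{lem:rigidity} is routine.
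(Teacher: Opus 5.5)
Your proof is correct. It shares the paper's overall strategy: test on an interval instance in $x_1$, use the objective $\ve_1$ to see that the first-coordinate projections of $F$ and $F'$ are translates, and let the interval grow. Where you differ is in how you describe $F'$, and your version is cleaner.

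The paper never simplifies $F'$. It takes the instance $(\vt\,\ve_1^\T,N\vb)$ with $\vt\in\gT$ and keeps every row of the image. It then sandwiches the endpoints $U'_N,L'_N$ within the additive constant $C=\max_{\ell_i\neq0}|s_i/\ell_i|$ of $N\kappa U-q_1$ and $N\kappa L-q_1$. It finishes by using that the diameter of a nonempty $\Gamma\cap[L,U]$ is within $2$ of $U-L$ for $\Gamma\in\{\sR,\sZ\}$, and letting $N\to\infty$.

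You instead use the tight copy rows ($s_i=0$) from \cref{lem:structure}, together with $\mLambda\ge0$ and $\vs\ge0$, to show that every other row is implied. This gives exactly $F'=\{\vy\in\sV\mid\mA(\vy+\vq)\le\kappa\vb\}$ on every instance. That makes the role of $\kappa$ transparent: the image is the original system with its right-hand side scaled by $\kappa$, then shifted by $\vq$. It also lets you use the simplest instance with dummy rows instead of a vector in $\gT$. The continuous case becomes an exact length comparison on a single instance, and only the integer case needs a limit. The paper's estimate needs only the existence of copy rows, not their tightness, at the cost of more bookkeeping. Since \cref{lem:structure} already supplies tight copies, your use of them is legitimate.

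Some small remarks:
\begin{itemize}
\item Equal cardinalities give $L+1\in[\kappa L-1,\kappa L+1]$, i.e.\ $|(1-\kappa)L+1|\le1$, equivalently $0\le(\kappa-1)L\le2$. Your $|(\kappa-1)L+1|\le1$ has the sign flipped, though either version forces $\kappa=1$.
\item The redundancy step does not need $\kappa\ge0$. You only need it to keep the dummy rows satisfied and $X'$ nonempty.
\item You can bypass \cref{lem:rigidity}: the value identity $\ve_1^\T\varphi(\vx)=x_1+\gamma$ on $F$ already gives $X'=X+\gamma$.
\end{itemize}
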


\begin{proof}
Fix $\vt\in\gT$ and $\vb\in\sR^m$ with $U>L$, where $U:=\min_{t_p>0}b_p/t_p$
and $L:=\max_{t_p<0}b_p/t_p$, for instance $b_p=\max(t_p,0)$. For
$N\in\mathbb N$ the instance $(\vt\,\ve_1^\T,\ N\vb)$ has feasible set
$\{\vx\in\sV \mid NL\le x_1\le NU\}$. By \cref{eq:form3}, row $i$ of its image
reads $\ell_iy_1\le N\kappa\,\vlambda_i^\T\vb-\ell_iq_1+s_i$, so the image
has feasible set $\{\vy\in\sV \mid L'_N\le y_1\le U'_N\}$ with
\[
  U'_N=\min_{\ell_i>0}\frac{N\kappa\,\vlambda_i^\T\vb+s_i}{\ell_i}-q_1,
  \qquad
  L'_N=\max_{\ell_i<0}\frac{N\kappa\,\vlambda_i^\T\vb+s_i}{\ell_i}-q_1,
\]
provided the rows with $\ell_i=0$, which read
$0\le N\kappa\,\vlambda_i^\T\vb+s_i$, hold.

For every row $i$,
$\vlambda_i^\T\vb-\ell_iU=\sum_p\Lambda_{i,p}t_p(b_p/t_p-U)$ is a sum of
nonnegative terms, because $b_p/t_p\ge U$ when $t_p>0$ and
$b_p/t_p\le L<U$ when $t_p<0$, and it vanishes for a copy row of a
constraint $p$ with $b_p/t_p=U$. Likewise $\vlambda_i^\T\vb-\ell_iL\ge0$,
with equality for a copy row of a constraint $p$ with $b_p/t_p=L$. In
particular, rows with $\ell_i=0$ have $\vlambda_i^\T\vb\ge0$, so their
constraints hold. Let
$C:=\max_{\ell_i\neq0}|s_i/\ell_i|$. For $\ell_i>0$,
\[
  \frac{N\kappa\,\vlambda_i^\T\vb+s_i}{\ell_i}
  =N\kappa U+N\kappa\,\frac{\vlambda_i^\T\vb-\ell_iU}{\ell_i}+\frac{s_i}{\ell_i}
  \ \ge\ N\kappa U,
\]
with the middle term vanishing for the copy row of a constraint $p$ with
$b_p/t_p=U$, so $N\kappa U-q_1\le U'_N\le N\kappa U-q_1+C$. Similarly
$N\kappa L-q_1-C\le L'_N\le N\kappa L-q_1$, and together
$|(U'_N-L'_N)-N\kappa(U-L)|\le2C$.

Now take $N$ with $N(U-L)\ge2$, so that the feasible set is nonempty, and
let $\Gamma:=\ve_1^\T\sV\in\{\sR,\sZ\}$. A witness $(\varphi,\gamma)$ of
$(\vt\ve_1^\T,N\vb,\ve_1)\sim\Phi(\vt\ve_1^\T,N\vb,\ve_1)$ satisfies
$\ve_1^\T\varphi(\vx)=x_1+\gamma$ on $F$ and maps $F$ onto $F'$, so the set
$\ve_1^\T F'=\Gamma\cap[L'_N,U'_N]$ is the translate of
$\ve_1^\T F=\Gamma\cap[NL,NU]$ by $\gamma$, and their diameters agree. The diameter of a nonempty set
$\Gamma\cap[L,U]$ lies in $(U-L-2,\,U-L]$, so
$|(U'_N-L'_N)-N(U-L)|<2$. With the bound above,
$|N\kappa(U-L)-N(U-L)|<2+2C$, and dividing by $N$ and letting $N\to\infty$
gives $\kappa=1$.
\end{proof}

By \cref{lem:G}, \cref{eq:form3} reads
$\Phi(\mA,\vb,\vc)=\bigl(\mLambda\mA,\ \mLambda(\vb-\mA\vq)+\vs,\ \vc\bigr)
=\Red(\mLambda,\mLambda\vb+\vs)\circ\Sub(\mI,\vq)\cdot(\mA,\vb,\vc)$, since
$\Sub(\mI,\vq)\cdot(\mA,\vb,\vc)=(\mA,\vb-\mA\vq,\vc)$. Here $\mLambda\ge0$,
$\vs\ge0$, and by \cref{lem:structure} every $p$ has a copy row $i$ with
$s_i=0$, that is, with $g_i(\vb)=\theta_ib_p$. So $(\mLambda,\vs)$ satisfies
the conditions of \cref{thm:redesc}, and by the sufficiency half of
\cref{app:redesc} the re-description preserves feasible sets and objectives.
Hence $\Sub(\mI,\vq)\cdot M$ has the same feasible set and objective as
$\Phi(M)$ for every $M$, so a witness of $M\sim\Phi(M)$ is one of
$M\sim\Sub(\mI,\vq)\cdot M$. Thus $\Sub(\mI,\vq)$ is valid and $\vq\in\sV$
by \cref{thm:sub}. Undoing the normalization,
$\Phi=\Red(\mLambda,\mLambda\vb+\vs)\circ\Sub(\mI,\vq)\circ\Sub(\mT,\vzero)
=\Red(\mLambda,\mLambda\vb+\vs)\circ\Sub(\mT,\mT\vq)$, as
$\Sub(\mI,\vq)\circ\Sub(\mT,\vzero)$ maps $(\mA,\vb,\vc)$ to
$(\mA\mT,\vb-\mA\mT\vq,\mT^\T\vc)$, and $\mT\vq\in\sV$ because $\mT$ has the
block form. \qed

In particular, if a constraint rewriting $\Red(\mLambda,\mLambda\vb+\vs)$
with affine $\vg$ is valid, then \cref{thm:complete} writes it as
$\Red(\mLambda',\mLambda'\vb+\vs')\circ\Sub(\mT,\vq)$ with $(\mLambda',\vs')$
as in \cref{thm:redesc}. Comparing the two rules on all instances gives
$\mT=\mI$, $\mLambda'=\mLambda$, $\vs'=\vs$ and $\vq=\vzero$, so the
rewriting is a re-description. This is the converse claimed after
\cref{eq:redesc}.

\section{Contribution of the substitutions}
\label{sec:ablation}

The two views of ReMILP differ by independent re-descriptions and by a substitution applied to the second one. To isolate the impact of the substitutions, as well as the equivariance machinery that we introduced to make use of them, we pre-train the same encoder without them in a purely contrastive way, so that the views differ by redundant constraints only. Each variable's embedding must then match its own embedding in the other view, and the negatives are embeddings of other variables. Node-level heads are trained for 20\,000 steps and the integrality gap head for 100\,000 steps. The results are reported in \Cref{fig:ablation-ft,fig:ablation-gap}.

Without substitutions the pre-trained encoder still converges faster than the supervised model, but it converges to the same value on every group, contrary to the original ReMILP which converges to $0.854 \pm 0.020$ of the supervised model's KL on solution prediction and $0.918 \pm 0.050$ on constraint activity. Redundant constraints alone thus account for the speed-up of \cref{sec:finetuned}, but the lasting advantage on mixed-integer families comes from the substitutions.

The integrality gap shows the same split between speed and final error. Without substitutions the encoder descends as fast as ReMILP and far ahead of the supervised model, but it settles at $0.042 \pm 0.002$, above ReMILP's $0.038 \pm 0.001$ and within one standard deviation of the supervised model's $0.045 \pm 0.004$.

\begin{figure}[t]
  \centering
  \begin{subfigure}[t]{0.24\linewidth}
    \centering
    \includegraphics[width=\linewidth]{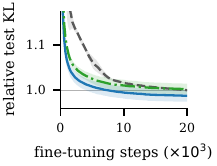}
    \caption{Solution, pure ILP}
    \label{fig:abl-solution-ilp}
  \end{subfigure}\hfill
  \begin{subfigure}[t]{0.24\linewidth}
    \centering
    \includegraphics[width=\linewidth]{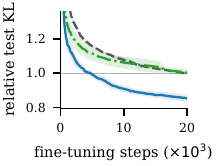}
    \caption{Solution, MILP}
    \label{fig:abl-solution-milp}
  \end{subfigure}\hfill
  \begin{subfigure}[t]{0.24\linewidth}
    \centering
    \includegraphics[width=\linewidth]{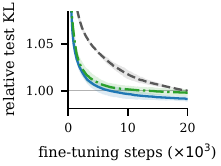}
    \caption{Activity, pure ILP}
    \label{fig:abl-activity-ilp}
  \end{subfigure}\hfill
  \begin{subfigure}[t]{0.24\linewidth}
    \centering
    \includegraphics[width=\linewidth]{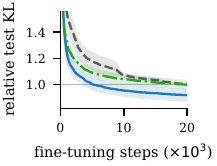}
    \caption{Activity, MILP}
    \label{fig:abl-activity-milp}
  \end{subfigure}
  \par\vspace{0.2em}
  \includegraphics[width=0.62\linewidth]{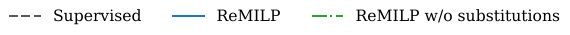}
  \caption{Fine-tuning ReMILP with and without substitutions on the node-level tasks, 20\,000 steps. Each panel shows the geometric mean over the pairs of a group of the test KL at the checkpoint selected on validation, divided by the value the supervised model of the same seed reaches at the end of its budget. Shaded bands are   one standard deviation over three seeds, and the horizontal line marks parity with the best supervised model.}
  \label{fig:ablation-ft}
\end{figure}

\begin{figure}[t]
  \centering
  \includegraphics[width=0.5\linewidth]{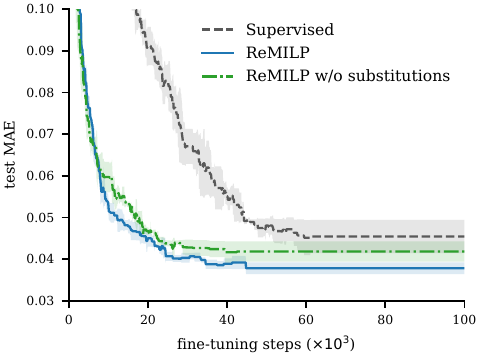}
  \caption{Fine-tuning ReMILP with and without substitutions on the integrality gap: test MAE, mean and one standard deviation over three
  seeds, 100\,000-step budget on the full training set.}
  \label{fig:ablation-gap}
\end{figure}

\section{Datasets}
\label{app:datasets}
In this work, we make use of two datasets for training and evaluating our models:
\subsection{MILP-Evolve}
We made use of the integrality gap prediction dataset of MILP-Evolve \citep{Li2025-iclr} for both pre-training and integrality gap prediction. MILP instance families were filtered by feasibility, objective boundedness, and primal gap convergence. Instance families with SOS or indicator constraints were eliminated. Instance families that allowed instances with zero variables or zero constraints were also filtered out. After the filtering step, only 42k instances remained, which were randomly split into a pre-training set (70\%) and an evaluation set (30\%). The pre-training set was further split into a 90\% training subset, and a 10\% subset for validation. Similarly, the evaluation set was split into training (70\%), validation (10\%) and test (20\%) subsets, which were all used for integrality gap prediction. We additionally recomputed the integrality gap targets using the well-conditioned formula
$$
    \frac{|\mathrm{LP}-\mathrm{IP}|}{ \max(|\mathrm{LP}|,|\mathrm{IP}|) + 10^{-5}}
$$
rather than $\frac{\left| \mathrm{LP}-\mathrm{IP}\right|}{ |\mathrm{LP}|}$ which was used by MILP-Evolve as the latter led to exploding gradients in our setup.

\subsection{Distributional MIPLIB}
\paragraph{Instance family selection} We use all instance families with a difficulty of Easy, Medium or Hard and containing at least 300 instances from the Distributional MIPLIB dataset \citep{Huang2024-arxiv}. This leaves us with 23 instance families across 10 problem classes, summarized in \cref{tab:dmiplib}.

\begin{table}[htbp]
    \centering
    \caption{Instance families used from Distributional MIPLIB.}
    \label{tab:dmiplib}
    \begin{tabular}{llccc}
        \toprule
        Problem class & Abbreviation & Easy & Medium & Hard \\
        \midrule
        Combinatorial Auctions & CA & \checkmark & \checkmark & \\
        Maximum Independent Set & IS & \checkmark & \checkmark & \\
        Set Covering & SC & \checkmark & \checkmark & \checkmark \\
        Minimum Vertex Cover & VC & \checkmark & \checkmark & \checkmark \\
        Generalized Independent Set Problem & GISP & \checkmark & \checkmark & \checkmark \\
        Capacitated Facility Location Problem & CFLP & \checkmark & \checkmark & \\
        Load Balancing & LB & & & \checkmark \\
        Neural Network Verification & NNV & \checkmark & & \\
        Optimal Transmission Switching & OTS & \checkmark & \checkmark & \checkmark \\
        Middle-Mile Consolidation Network & MMCN & & BI, BC & BI \\
        \bottomrule
    \end{tabular}
\end{table}

For each family, we separate our instances into 200 for training, 50 for validation and 50 for testing, which are solved in the manner described below. CFLP, LB, NNV, OTS and MMCN medium-BC contain both integer and continuous variables. The other families are pure ILPs.
\paragraph{Node-level targets}
In order to compute the binary solution and constraint activity target values, we solve each family's 300 instances using the Gurobi solver \citep{gurobi}. For each instance, we use a 1-hour time limit and retain the best 300 solutions during that time. Let $\vx^{(1)},\ldots,\vx^{(S)}$ be the solution pool of an instance and $z_s = \vc^\top \vx^{(s)}$ their objective values. Following \citet{Nair2020-arxiv}, we weight the pool by objective quality:
$$ w_s \propto \exp(-\, z_s), $$
and define both our variable-level and constraint-level targets as marginals of this distribution:
$$\hat y_j = \mathbb{E}_{w}\big[x_j\big]$$
for binary solution prediction (the weighted frequency with which a binary variable $j$ takes value $1$), and
$$\hat d_i = \mathbb{P}_{w}\big[\,|\va_i^\top \vx - b_i| \le \varepsilon \,\big]$$
for constraint activity prediction (the weighted frequency with which constraint $i$ is binding, up to a small tolerance $\varepsilon = 10^{-6}$).

\section{Implementation details}
\label{app:details}
\subsection{Preprocessing}
Before encoding, we normalize each constraint row and its bound by the same positive factor, such that every single coefficient of $\mA$ and component of $\vb$ is bounded by $1$. This normalization is principled insofar as it preserves a MILP's feasible region. Objective coefficients are scaled in the same way, which preserves optimal solutions and integrality gap. Our implementation of the various augmentations described in this paper ensures that instances stay row-normalized, which prevents distribution shift.
\subsection{Training}
\label{app:training}
Across pre-training and downstream tasks, all optimization is done with Adam \citep{kingma2017adammethodstochasticoptimization}. We use dynamic batches with a maximum size of 200\,000 nodes and 1\,200\,000 edges. Any instance exceeding either of the two limits is excluded from the training. All pre-training is done for 20\,000 steps at a learning rate of $10^{-3}$. For all downstream tasks, we use a fixed learning rate of $10^{-4}$. Binary solution and constraint activity prediction are run for 50\,000 gradient steps, and integrality gap prediction is run for 100\,000 gradient steps. All three tasks have early stopping if the best validation loss does not improve for 5\,000 consecutive steps.

\subsection{Instance pooling}
\label{app:pooling}
The integrality gap is predicted from a single embedding of the instance, obtained by pooling the node embeddings. Variables and constraints are pooled separately, with a coordinate-wise attention: every output coordinate has its own softmax over the nodes. For the variables, a two-layer MLP $s$ maps each embedding $\vh_i(M)\in\sR^d$ to a score vector $s(\vh_i(M))\in\sR^{p}$, and a linear map $\mW$ maps it to a value vector $\mW\vh_i(M)\in\sR^{p}$. The pooled variable vector is
$$
\vz_V(M)=\sum_{i=1}^{n}\va_i\odot\mW\vh_i(M),\qquad
a_{i,k}=\frac{\exp\bigl(s_k(\vh_i(M))\bigr)}{\sum_{i'=1}^{n}\exp\bigl(s_k(\vh_{i'}(M))\bigr)},
$$
so each coordinate is a convex combination of the corresponding value coordinates of the variables, weighted by that coordinate's own attention distribution. The constraint embeddings $\vg_j(M)$ are pooled in the same way by separate parameters into $\vz_C(M)$. The instance embedding is the output of a two-layer MLP applied to $[\vz_V(M);\vz_C(M)]$. We use $d=128$, $p=256$ for each node type, and an instance embedding of dimension 256. The pooling and the MLP are trained together with the gap head, including when the encoder is frozen. FORGE\textsuperscript{\textdagger} instead represents an instance by the histogram of its nodes' codebook assignments, and the variant marked "+ attn." replaces that histogram with the pooling described here.

\subsection{Hypernetwork}
\label{app:hypernetwork}

The hypernetwork $\varrho_\phi$ of \cref{eq:local_pred} is a single affine layer. For a
transformed variable $i\in\gI$, it reads the descriptor
\[
  t_i=\bigl(\log|u_i|,\ \1[u_i<0],\ \lambda_i,\ \mu_i,\ \sg(\vh_{j(i)}(M))\bigr)\in\sR^{4+d},
\]
in which the scaling enters through its log-magnitude and its sign so that complements and
scalings of either sign are represented on a common scale. The partner embedding is
replaced by $\bm0$ when $x_i$ has no partner, that is, for binary variables and for
variables that are only scaled. The layer outputs the $d^2$ entries of the operator,
\[
  \varrho_\phi(t_i)=\operatorname{mat}\bigl(\mW_\phi t_i+\vb_\phi\bigr)\in\sR^{d\times d},
  \qquad \mW_\phi\in\sR^{d^2\times(4+d)},\ \vb_\phi\in\sR^{d^2},
\]
where $\operatorname{mat}$ reshapes a vector of length $d^2$ into a $d\times d$ matrix.
The weights $\mW_\phi$ are initialised from $\gN(0,0.02^2)$ and the bias to
$\vb_\phi=\operatorname{vec}(\mI_d)$, so that every operator starts close to the identity
and training begins from the invariant prediction $\vh_i(M')\approx\vh_i(M)$. With $d=128$ the hypernetwork has about $2.2$ million parameters. Among the action negatives ($1\le k\le K$ in \cref{eq:loss}), those drawn from a variable with the same descriptor parameters and the same partner as $i$ would produce the positive itself, and they are removed from the softmax. The hypernetwork is used only during pre-training, and downstream heads read the encoder's embeddings directly.

\subsection{Sampling augmentations}
\label{app:aug}

Both views of \cref{eq:reformulations} are sampled afresh for every instance at every
step. \Cref{alg:sub} draws the substitution $\Sub(\mT',\vq')$ of the second view, and
\cref{alg:redesc} draws each of the two re-descriptions. The fraction $|\gI|/n$, the bounds
$b_u$ and $b_\lambda$, the fraction $m'/m-1$ and the number $r$ of constraints combined per
redundant row are the hyperparameters of \cref{tab:hparams}.

Every transformed variable $i\in\gI$ receives the local substitution
\cref{eq:affine_substitution}. A binary variable must remain binary, and the only
substitution with this property other than the identity is the complement
$x_i\leftarrow1-x_i$, that is $u_i=-1$, $\lambda_i=0$ and $\mu_i=1$, which involves no
partner. A transformed binary variable is therefore complemented with probability
$\frac12$ and otherwise left unchanged, and it stays in $\gI$ in both cases, with the
identity as its substitution in the second case. Every other transformed variable takes a
partner $j(i)$ drawn from $\gJ$, so no partner is itself transformed and $\mT'$ has the
structure required by \cref{thm:sub}. An integer variable takes an integer partner, which
may be binary, while a continuous variable may take a partner of any type. Binary
variables in $\gJ$ are therefore eligible partners of both integer and continuous
variables. We write $\sZ_b\coloneq\{z\in\sZ:1\le|z|\le\lfloor b\rfloor\}$ for the
nonzero integers of absolute value at most $b$.

\begin{algorithm}[H]
  \caption{Sampling the substitution $\Sub(\mT',\vq')$}
  \label{alg:sub}
  \begin{algorithmic}[1]
    \Require instance $M_0$ over $\sV$, fraction $\kappa=|\gI|/n$, bounds $b_u,b_\lambda$
    \State $\gC\gets$ variables with a nonzero coefficient in some constraint of $M_0$
    \State draw $\gI\subseteq\gC$ uniformly without replacement, with
      $|\gI|=\operatorname{round}(\kappa|\gC|)$
    \State $\gJ\gets[n]\setminus\gI$
    \For{$i\in\gI$}
      \If{$x_i$ is binary}
        \State $(u_i,\lambda_i,\mu_i)\gets(-1,0,1)$ with probability $\frac12$,
          else $(1,0,0)$
      \ElsIf{$x_i$ is integer}
        \State draw $j(i)$ uniformly among the integer variables of $\gJ$
        \State draw $u_i$ uniformly from $\{-1,1\}$, and $\lambda_i,\mu_i$ uniformly from
          $\sZ_{b_\lambda}$
      \Else
        \State draw $j(i)$ uniformly from $\gJ$
        \State draw $\log|u_i|$ uniformly from $[-\log b_u,\log b_u]$ and the sign of $u_i$
          uniformly
        \State draw $\lambda_i,\mu_i$ uniformly from $[-b_\lambda,b_\lambda]$
      \EndIf
    \EndFor
    \State \Return $\mT',\vq'$ assembled from $(u_i,\lambda_i,\mu_i,j(i))_{i\in\gI}$ as
      in \cref{eq:affine_substitution}
  \end{algorithmic}
\end{algorithm}

A variable left without an eligible partner, which only happens when $\gJ$ holds no
variable of the required type, is scaled but not transvected ($\lambda_i=\mu_i=0$).

Each re-description keeps the $m$ existing constraints and appends $m'-m$ redundant ones.
A redundant constraint is a convex combination of $r$ existing constraints, drawn with
replacement, whose right-hand side is then relaxed (\cref{alg:redesc}). Every constraint
of $M_0$ is scaled by a positive factor so that the largest absolute value among its
coefficients and its right-hand side is $1$, and the constraints of
$\Sub(\mT',\vq')\cdot M_0$ are rescaled in the same way before its re-description is
drawn. A positive scaling of the constraints is itself a re-description, and it gives
$\vlambda_i^\T\vb\le1$ for every convex combination $\vlambda_i$.

\begin{algorithm}[H]
  \caption{Sampling the re-description $\sRed(\mLambda,\mLambda\vb+\vs)$}
  \label{alg:redesc}
  \begin{algorithmic}[1]
    \Require instance $M=(\mA,\vb,\vc)\in\Inst_m$ with scaled constraints, fraction
      $\rho=m'/m-1$, number of combined constraints $r$
    \State $m'\gets m+\operatorname{round}(\rho m)$
    \For{$i=m+1,\ldots,m'$}
      \State draw $p_1,\ldots,p_r$ uniformly from $[m]$ with replacement
      \State draw $(w_1,\ldots,w_r)$ uniformly from the probability simplex
      \State $\vlambda_i\gets\sum_{t=1}^{r}w_t\ve_{p_t}$
      \State draw $\nu_i$ uniformly from $[0,1]$ and set $s_i\gets\nu_i\,(1-\vlambda_i^\T\vb)$
    \EndFor
    \State $\mLambda\gets$ the rows of $\mI_m$ followed by
      $\vlambda_{m+1}^\T,\ldots,\vlambda_{m'}^\T$, and
      $\vs\gets(0,\ldots,0,s_{m+1},\ldots,s_{m'})$
    \State \Return $\sRed(\mLambda,\mLambda\vb+\vs)$
  \end{algorithmic}
\end{algorithm}

The relaxed right-hand side $\vlambda_i^\T\vb+s_i$ thus lies uniformly between the
implied value $\vlambda_i^\T\vb$ and $1$. The identity block of $\mLambda$ keeps a copy of every constraint of $M$, and
$\mLambda\ge0$ and $\vs\ge0$, so the re-description satisfies the conditions of
\cref{thm:redesc}.

\subsection{Hyperparameter tuning}
\label{app:hparams}

We tuned the pre-training hyperparameters of ReMILP and FORGE\textsuperscript{\textdagger}
separately with the multivariate TPE sampler of Optuna \citep{optuna_2019}, starting with
15 random warmup trials, for a total of 56 trials per model. The learning rates and numbers of steps were fixed. Each trial pre-trains the
encoder with three seeds and scores it on 14 cells, which are binary solution and constraint activity prediction on seven instance families: CA medium, SC hard, CFLP medium, OTS easy, MMCN medium-BC, NNV easy, and LB hard. On each cell the encoder is frozen and a head is trained for 5\,000 steps on training instances, and the cell is scored by its best validation KL.

Let $r_{s,c}$ be that validation KL for seed $s$ and cell $c$, divided by the value the random encoder reaches on the same cell and seed. Denote by $G_s$ the geometric mean over all cells $c$ for a fixed seed $s$. A trial minimizes
\[
  \operatorname{mean}_s G_s+\frac{1}{2}\operatorname{std}_s G_s
  +\frac{2}{|S|\,|C|}\sum_{s,c}\max(r_{s,c}-1,0)
\]
The second term penalizes variation across seeds and the third penalizes cells on which the encoder does worse than the random encoder. \Cref{tab:hparams} lists the search space and the selected
configurations.

\begin{table}[htbp]
  \centering
  \small
  \caption{\textbf{Pre-training search space and selected values.} Ranges marked log are sampled on a logarithmic scale.}
  \label{tab:hparams}
  \begin{tabular}{@{}llll@{}}
    \toprule
    Model & Hyperparameter & Search space & Selected \\
    \midrule
    \multirow{7}{*}{ReMILP}
      & Temperature $\tau$ & $[0.05, 0.5]$, log & $0.090$ \\
      & Negatives of each type $K$ & $\{7, 15, 31, 63\}$ & $63$ \\
      & Fraction of transformed variables $\kappa = |\gI|/n$ & $\{0.01, 0.02, 0.05\}$ & $0.05$ \\
      & Fraction of redundant constraints $\rho = \frac{m'}{m} - 1$ & $\{0, 0.05, 0.1, 0.2\}$ & $0.2$ \\
      & Constraints combined per redundant row & $\{2, 3\}$ & $2$ \\
      & Bound $b_\lambda$ on $\lambda_i,\mu_i$ & $[1.5, 6]$, log & $4.01$ \\
      & Bound $b_u$ on $u_i$ & $[1, 4]$, log & $1.12$ \\
    \midrule
    \multirow{3}{*}{FORGE\textsuperscript{\textdagger}}
      & Codebook size & $\{64, 128, 256, 1024\}$ & $256$ \\
      & Codebook EMA decay & $\{0.8, 0.9, 0.95, 0.99\}$ & $0.99$ \\
      & Commitment weight & $[0.01, 2]$, log & $0.046$ \\
    \bottomrule
  \end{tabular}
\end{table}

\clearpage

\section{Per-family results}

\label{app:families}
\Cref{tab:app-solution,tab:app-activity} give the frozen measurements behind \cref{tab:frozen} without aggregation, as the test KL of each instance family rather than a ratio to the random encoder, and
\cref{fig:app-solution-1,fig:app-solution-2,fig:app-activity-1,fig:app-activity-2} do the same for the fine-tuned runs of \cref{fig:ft-nodes}. The ratios of the main text are geometric means of these normalized values over instance families.

\begin{table}[t]
  \centering
  \small
  \setlength{\tabcolsep}{6pt}
  \caption{\textbf{Binary solution prediction.} Frozen encoders, test KL per instance family, mean $\pm$ one standard deviation over three seeds. A value is bold when it is not within one standard deviation of the runner up.}
  \label{tab:app-solution}
  \begin{tabular}{@{}ll ccc@{}}
    \toprule
    Class & Difficulty & Random & ReMILP & FORGE\textsuperscript{\textdagger} \\
    \midrule
    IS & easy & $0.588 \pm 0.000$ & $0.588 \pm 0.000$ & $0.588 \pm 0.000$ \\
     & medium & $0.599 \pm 0.000$ & $0.599 \pm 0.000$ & $0.599 \pm 0.000$ \\
    CA & easy & $0.242 \pm 0.001$ & $0.212 \pm 0.001$ & $0.217 \pm 0.005$ \\
     & medium & $0.243 \pm 0.001$ & $0.210 \pm 0.001$ & $0.216 \pm 0.005$ \\
    SC & easy & $0.057 \pm 0.000$ & $0.044 \pm 0.002$ & $0.047 \pm 0.002$ \\
     & medium & $0.060 \pm 0.000$ & $0.047 \pm 0.002$ & $0.052 \pm 0.003$ \\
     & hard & $0.065 \pm 0.000$ & $\mathbf{0.051 \pm 0.002}$ & $0.058 \pm 0.004$ \\
    VC & easy & $0.574 \pm 0.006$ & $0.505 \pm 0.008$ & $0.484 \pm 0.017$ \\
     & medium & $0.582 \pm 0.006$ & $0.512 \pm 0.010$ & $0.496 \pm 0.016$ \\
     & hard & $0.370 \pm 0.000$ & $0.371 \pm 0.000$ & $0.370 \pm 0.000$ \\
    GISP & easy & $0.172 \pm 0.001$ & $0.170 \pm 0.000$ & $0.170 \pm 0.000$ \\
     & medium & $0.136 \pm 0.000$ & $0.135 \pm 0.000$ & $0.135 \pm 0.000$ \\
     & hard & $0.117 \pm 0.000$ & $0.115 \pm 0.000$ & $0.116 \pm 0.000$ \\
    MMCN & medium-BI & $0.310 \pm 0.004$ & $0.252 \pm 0.013$ & $0.285 \pm 0.020$ \\
     & hard-BI & $0.336 \pm 0.013$ & $\mathbf{0.250 \pm 0.011}$ & $0.283 \pm 0.021$ \\
    \midrule
    CFLP & easy & $0.245 \pm 0.003$ & $0.209 \pm 0.023$ & $0.228 \pm 0.016$ \\
     & medium & $0.213 \pm 0.000$ & $0.178 \pm 0.043$ & $0.196 \pm 0.015$ \\
    OTS & easy & $0.293 \pm 0.018$ & $\mathbf{0.167 \pm 0.016}$ & $0.210 \pm 0.025$ \\
     & medium & $0.339 \pm 0.023$ & $\mathbf{0.188 \pm 0.011}$ & $0.242 \pm 0.036$ \\
     & hard & $0.268 \pm 0.002$ & $\mathbf{0.245 \pm 0.007}$ & $0.261 \pm 0.004$ \\
    MMCN & medium-BC & $0.071 \pm 0.004$ & $\mathbf{0.022 \pm 0.000}$ & $0.025 \pm 0.000$ \\
    NNV & easy & $0.585 \pm 0.007$ & $0.553 \pm 0.006$ & $0.555 \pm 0.000$ \\
    LB & hard & $0.512 \pm 0.039$ & $0.436 \pm 0.045$ & $0.385 \pm 0.055$ \\
    \bottomrule
  \end{tabular}
\end{table}

\begin{table}[t]
  \centering
  \small
  \setlength{\tabcolsep}{6pt}
  \caption{Constraint activity prediction. Frozen encoders, test KL per instance family,
  mean $\pm$ one standard deviation over three seeds. Heads trained for 50\,000 steps. The upper block holds the pairs with no continuous
  variables. A value is bold when not within one standard deviation of the runner up.}
  \label{tab:app-activity}
  \begin{tabular}{@{}ll ccc@{}}
    \toprule
    Class & Difficulty & Random & ReMILP & FORGE\textsuperscript{\textdagger} \\
    \midrule
    IS & easy & $0.583 \pm 0.000$ & $0.583 \pm 0.000$ & $0.583 \pm 0.000$ \\
     & medium & $0.595 \pm 0.000$ & $0.595 \pm 0.000$ & $0.595 \pm 0.000$ \\
    CA & easy & $0.384 \pm 0.004$ & $0.312 \pm 0.002$ & $0.318 \pm 0.008$ \\
     & medium & $0.394 \pm 0.003$ & $0.313 \pm 0.002$ & $0.321 \pm 0.008$ \\
    SC & easy & $0.389 \pm 0.012$ & $0.286 \pm 0.003$ & $0.287 \pm 0.003$ \\
     & medium & $0.322 \pm 0.008$ & $0.227 \pm 0.004$ & $0.229 \pm 0.005$ \\
     & hard & $0.257 \pm 0.002$ & $0.175 \pm 0.003$ & $0.176 \pm 0.005$ \\
    VC & easy & $0.631 \pm 0.001$ & $0.568 \pm 0.007$ & $0.571 \pm 0.035$ \\
     & medium & $0.639 \pm 0.001$ & $0.577 \pm 0.008$ & $0.580 \pm 0.035$ \\
     & hard & $0.520 \pm 0.000$ & $0.520 \pm 0.000$ & $0.520 \pm 0.000$ \\
    GISP & easy & $0.577 \pm 0.000$ & $0.576 \pm 0.000$ & $0.576 \pm 0.000$ \\
     & medium & $0.552 \pm 0.000$ & $0.553 \pm 0.000$ & $0.553 \pm 0.000$ \\
     & hard & $0.523 \pm 0.003$ & $0.521 \pm 0.000$ & $0.522 \pm 0.000$ \\
    MMCN & medium-BI & $0.183 \pm 0.002$ & $0.154 \pm 0.003$ & $0.161 \pm 0.006$ \\
     & hard-BI & $0.158 \pm 0.003$ & $\mathbf{0.128 \pm 0.002}$ & $0.136 \pm 0.003$ \\
    \midrule
    CFLP & easy & $0.235 \pm 0.003$ & $0.214 \pm 0.014$ & $0.221 \pm 0.005$ \\
     & medium & $0.208 \pm 0.001$ & $0.182 \pm 0.039$ & $0.189 \pm 0.023$ \\
    OTS & easy & $0.150 \pm 0.006$ & $0.134 \pm 0.004$ & $0.130 \pm 0.008$ \\
     & medium & $0.156 \pm 0.005$ & $0.135 \pm 0.006$ & $0.133 \pm 0.006$ \\
     & hard & $0.125 \pm 0.003$ & $\mathbf{0.109 \pm 0.002}$ & $0.122 \pm 0.004$ \\
    MMCN & medium-BC & $0.311 \pm 0.002$ & $0.219 \pm 0.007$ & $0.225 \pm 0.011$ \\
    NNV & easy & $0.049 \pm 0.001$ & $0.046 \pm 0.000$ & $0.046 \pm 0.000$ \\
    LB & hard & $0.554 \pm 0.003$ & $0.445 \pm 0.035$ & $0.410 \pm 0.020$ \\
    \bottomrule
  \end{tabular}
\end{table}

\begin{figure}[p]
  \centering
  \begin{subfigure}[t]{0.32\linewidth}
    \centering
    \includegraphics[width=\linewidth]{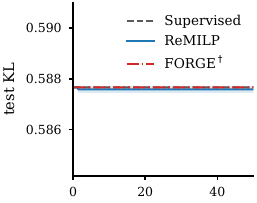}
    \caption{IS, easy}
  \end{subfigure}\hfill
  \begin{subfigure}[t]{0.32\linewidth}
    \centering
    \includegraphics[width=\linewidth]{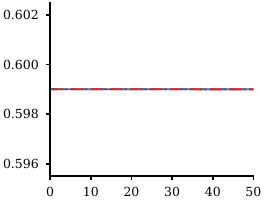}
    \caption{IS, medium}
  \end{subfigure}\hfill
  \begin{subfigure}[t]{0.32\linewidth}
    \centering
    \includegraphics[width=\linewidth]{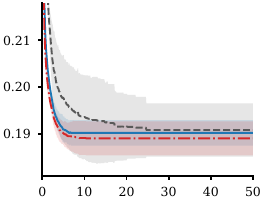}
    \caption{CA, easy}
  \end{subfigure}\hfill
  \par\vspace{0.4em}
  \begin{subfigure}[t]{0.32\linewidth}
    \centering
    \includegraphics[width=\linewidth]{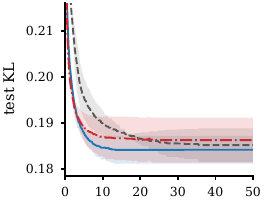}
    \caption{CA, medium}
  \end{subfigure}\hfill
  \begin{subfigure}[t]{0.32\linewidth}
    \centering
    \includegraphics[width=\linewidth]{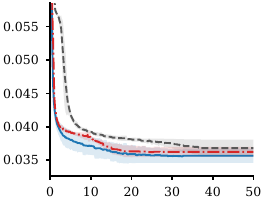}
    \caption{SC, easy}
  \end{subfigure}\hfill
  \begin{subfigure}[t]{0.32\linewidth}
    \centering
    \includegraphics[width=\linewidth]{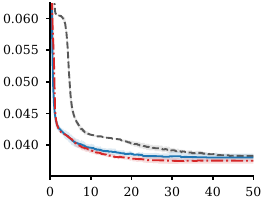}
    \caption{SC, medium}
  \end{subfigure}\hfill
  \par\vspace{0.4em}
  \begin{subfigure}[t]{0.32\linewidth}
    \centering
    \includegraphics[width=\linewidth]{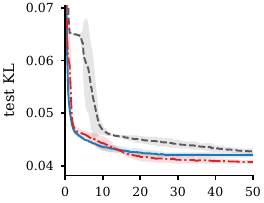}
    \caption{SC, hard}
  \end{subfigure}\hfill
  \begin{subfigure}[t]{0.32\linewidth}
    \centering
    \includegraphics[width=\linewidth]{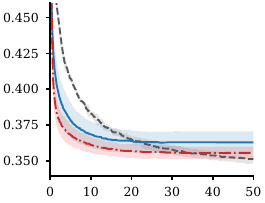}
    \caption{VC, easy}
  \end{subfigure}\hfill
  \begin{subfigure}[t]{0.32\linewidth}
    \centering
    \includegraphics[width=\linewidth]{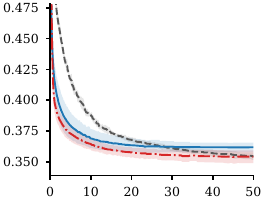}
    \caption{VC, medium}
  \end{subfigure}\hfill
  \par\vspace{0.4em}
  \begin{subfigure}[t]{0.32\linewidth}
    \centering
    \includegraphics[width=\linewidth]{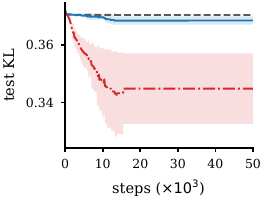}
    \caption{VC, hard}
  \end{subfigure}\hfill
  \begin{subfigure}[t]{0.32\linewidth}
    \centering
    \includegraphics[width=\linewidth]{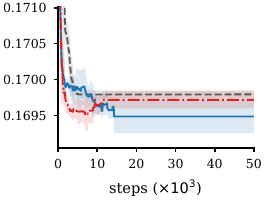}
    \caption{GISP, easy}
  \end{subfigure}\hfill
  \begin{subfigure}[t]{0.32\linewidth}
    \centering
    \includegraphics[width=\linewidth]{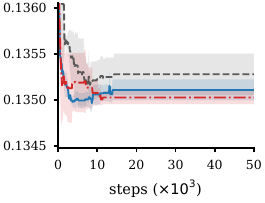}
    \caption{GISP, medium}
  \end{subfigure}\hfill
  \par\vspace{0.4em}
  \caption{\textbf{Fine-tuning on binary solution prediction.} Part 1 of 2. Test KL of the checkpoint selected on validation,
  one panel per instance family, mean and one standard deviation over three seeds. Each panel carries its own vertical scale.}
  \label{fig:app-solution-1}
\end{figure}

\begin{figure}[p]
  \centering
  \begin{subfigure}[t]{0.32\linewidth}
    \centering
    \includegraphics[width=\linewidth]{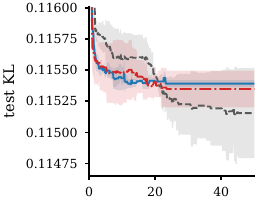}
    \caption{GISP, hard}
  \end{subfigure}\hfill
  \begin{subfigure}[t]{0.32\linewidth}
    \centering
    \includegraphics[width=\linewidth]{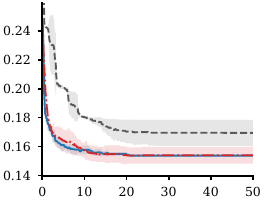}
    \caption{CFLP, easy}
  \end{subfigure}\hfill
  \begin{subfigure}[t]{0.32\linewidth}
    \centering
    \includegraphics[width=\linewidth]{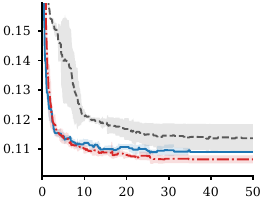}
    \caption{CFLP, medium}
  \end{subfigure}\hfill
  \par\vspace{0.4em}
  \begin{subfigure}[t]{0.32\linewidth}
    \centering
    \includegraphics[width=\linewidth]{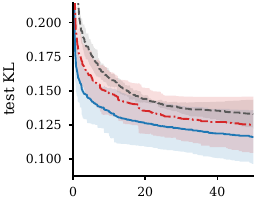}
    \caption{OTS, easy}
  \end{subfigure}\hfill
  \begin{subfigure}[t]{0.32\linewidth}
    \centering
    \includegraphics[width=\linewidth]{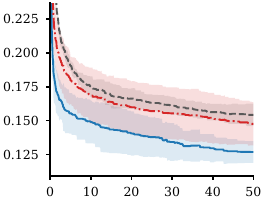}
    \caption{OTS, medium}
  \end{subfigure}\hfill
  \begin{subfigure}[t]{0.32\linewidth}
    \centering
    \includegraphics[width=\linewidth]{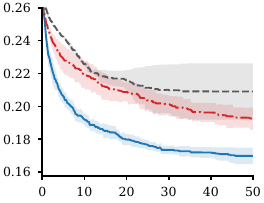}
    \caption{OTS, hard}
  \end{subfigure}\hfill
  \par\vspace{0.4em}
  \begin{subfigure}[t]{0.32\linewidth}
    \centering
    \includegraphics[width=\linewidth]{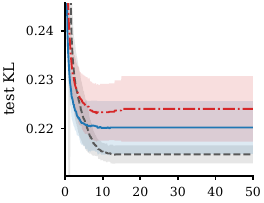}
    \caption{MMCN, medium-BI}
  \end{subfigure}\hfill
  \begin{subfigure}[t]{0.32\linewidth}
    \centering
    \includegraphics[width=\linewidth]{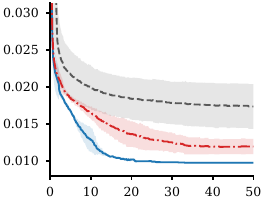}
    \caption{MMCN, medium-BC}
  \end{subfigure}\hfill
  \begin{subfigure}[t]{0.32\linewidth}
    \centering
    \includegraphics[width=\linewidth]{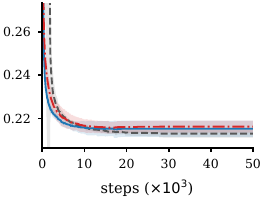}
    \caption{MMCN, hard-BI}
  \end{subfigure}\hfill
  \par\vspace{0.4em}
  \begin{subfigure}[t]{0.32\linewidth}
    \centering
    \includegraphics[width=\linewidth]{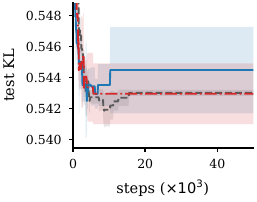}
    \caption{NNV, easy}
  \end{subfigure}\hfill
  \begin{subfigure}[t]{0.32\linewidth}
    \centering
    \includegraphics[width=\linewidth]{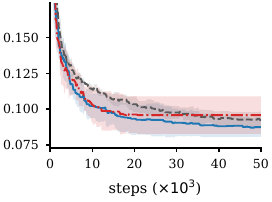}
    \caption{LB, hard}
  \end{subfigure}\hfill
  \caption{\textbf{Fine-tuning on binary solution prediction.} Part 2 of 2. Test KL of the checkpoint selected on validation,
  one panel per instance family, mean and one standard deviation over
  three seeds. Each panel carries its own vertical scale.}
  \label{fig:app-solution-2}
\end{figure}

\begin{figure}[p]
  \centering
  \begin{subfigure}[t]{0.32\linewidth}
    \centering
    \includegraphics[width=\linewidth]{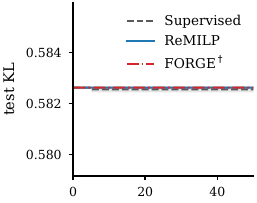}
    \caption{IS, easy}
  \end{subfigure}\hfill
  \begin{subfigure}[t]{0.32\linewidth}
    \centering
    \includegraphics[width=\linewidth]{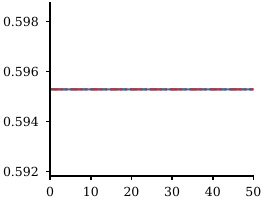}
    \caption{IS, medium}
  \end{subfigure}\hfill
  \begin{subfigure}[t]{0.32\linewidth}
    \centering
    \includegraphics[width=\linewidth]{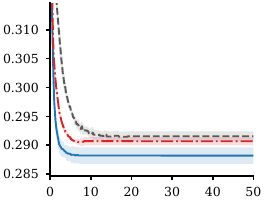}
    \caption{CA, easy}
  \end{subfigure}\hfill
  \par\vspace{0.4em}
  \begin{subfigure}[t]{0.32\linewidth}
    \centering
    \includegraphics[width=\linewidth]{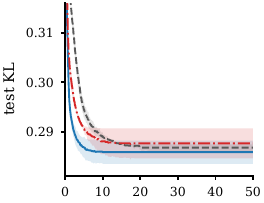}
    \caption{CA, medium}
  \end{subfigure}\hfill
  \begin{subfigure}[t]{0.32\linewidth}
    \centering
    \includegraphics[width=\linewidth]{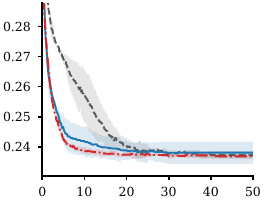}
    \caption{SC, easy}
  \end{subfigure}\hfill
  \begin{subfigure}[t]{0.32\linewidth}
    \centering
    \includegraphics[width=\linewidth]{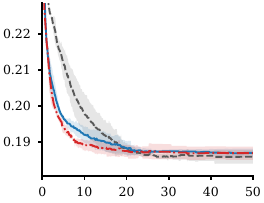}
    \caption{SC, medium}
  \end{subfigure}\hfill
  \par\vspace{0.4em}
  \begin{subfigure}[t]{0.32\linewidth}
    \centering
    \includegraphics[width=\linewidth]{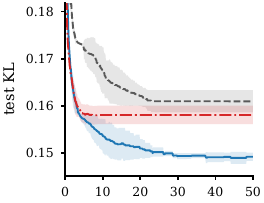}
    \caption{SC, hard}
  \end{subfigure}\hfill
  \begin{subfigure}[t]{0.32\linewidth}
    \centering
    \includegraphics[width=\linewidth]{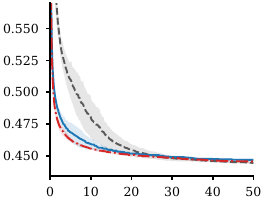}
    \caption{VC, easy}
  \end{subfigure}\hfill
  \begin{subfigure}[t]{0.32\linewidth}
    \centering
    \includegraphics[width=\linewidth]{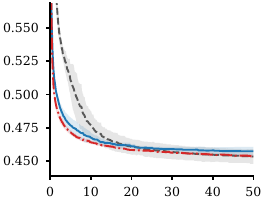}
    \caption{VC, medium}
  \end{subfigure}\hfill
  \par\vspace{0.4em}
  \begin{subfigure}[t]{0.32\linewidth}
    \centering
    \includegraphics[width=\linewidth]{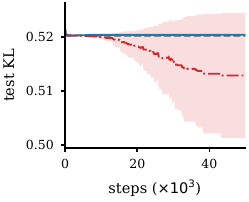}
    \caption{VC, hard}
  \end{subfigure}\hfill
  \begin{subfigure}[t]{0.32\linewidth}
    \centering
    \includegraphics[width=\linewidth]{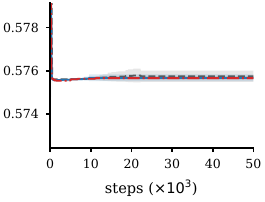}
    \caption{GISP, easy}
  \end{subfigure}\hfill
  \begin{subfigure}[t]{0.32\linewidth}
    \centering
    \includegraphics[width=\linewidth]{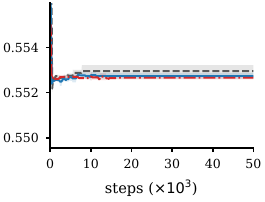}
    \caption{GISP, medium}
  \end{subfigure}\hfill
  \par\vspace{0.4em}
  \caption{\textbf{Fine-tuning on constraint activity prediction.} Part 1 of 2. Test KL of the checkpoint selected on validation,
  one panel per instance family, mean and one standard deviation over
  three seeds. Each panel carries its own vertical scale.}
  \label{fig:app-activity-1}
\end{figure}

\begin{figure}[p]
  \centering
  \begin{subfigure}[t]{0.32\linewidth}
    \centering
    \includegraphics[width=\linewidth]{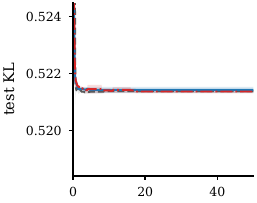}
    \caption{GISP, hard}
  \end{subfigure}\hfill
  \begin{subfigure}[t]{0.32\linewidth}
    \centering
    \includegraphics[width=\linewidth]{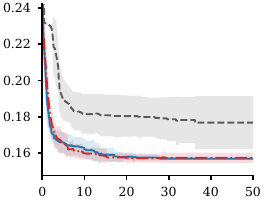}
    \caption{CFLP, easy}
  \end{subfigure}\hfill
  \begin{subfigure}[t]{0.32\linewidth}
    \centering
    \includegraphics[width=\linewidth]{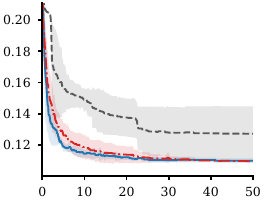}
    \caption{CFLP, medium}
  \end{subfigure}\hfill
  \par\vspace{0.4em}
  \begin{subfigure}[t]{0.32\linewidth}
    \centering
    \includegraphics[width=\linewidth]{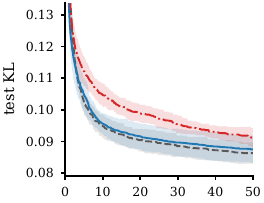}
    \caption{OTS, easy}
  \end{subfigure}\hfill
  \begin{subfigure}[t]{0.32\linewidth}
    \centering
    \includegraphics[width=\linewidth]{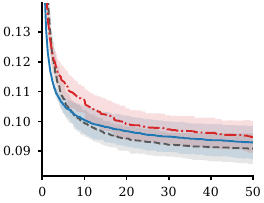}
    \caption{OTS, medium}
  \end{subfigure}\hfill
  \begin{subfigure}[t]{0.32\linewidth}
    \centering
    \includegraphics[width=\linewidth]{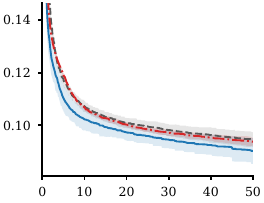}
    \caption{OTS, hard}
  \end{subfigure}\hfill
  \par\vspace{0.4em}
  \begin{subfigure}[t]{0.32\linewidth}
    \centering
    \includegraphics[width=\linewidth]{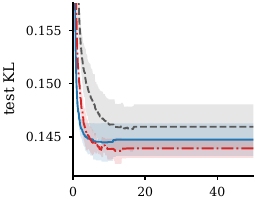}
    \caption{MMCN, medium-BI}
  \end{subfigure}\hfill
  \begin{subfigure}[t]{0.32\linewidth}
    \centering
    \includegraphics[width=\linewidth]{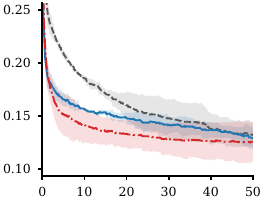}
    \caption{MMCN, medium-BC}
  \end{subfigure}\hfill
  \begin{subfigure}[t]{0.32\linewidth}
    \centering
    \includegraphics[width=\linewidth]{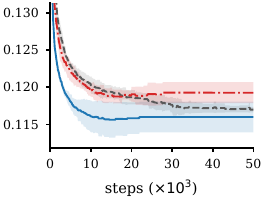}
    \caption{MMCN, hard-BI}
  \end{subfigure}\hfill
  \par\vspace{0.4em}
  \begin{subfigure}[t]{0.32\linewidth}
    \centering
    \includegraphics[width=\linewidth]{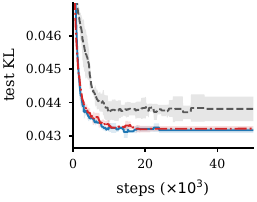}
    \caption{NNV, easy}
  \end{subfigure}\hfill
  \begin{subfigure}[t]{0.32\linewidth}
    \centering
    \includegraphics[width=\linewidth]{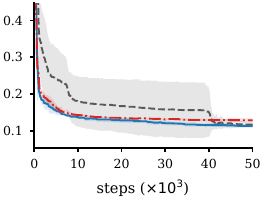}
    \caption{LB, hard}
  \end{subfigure}\hfill
  \caption{\textbf{Fine-tuning on constraint activity prediction.} Part 2 of 2. Test KL of the checkpoint selected on validation,
  one panel per instance family, mean and one standard deviation over
  three seeds. Each panel carries its own vertical scale.}
  \label{fig:app-activity-2}
\end{figure}

\end{document}

%% file: math_commands.tex
\usepackage{amsmath,amsfonts,bm}

\def\eqref#1{equation~\ref{#1}}

\def\1{\bm{1}}

\def\vzero{{\bm{0}}}

\def\vtheta{{\bm{\theta}}}
\def\va{{\bm{a}}}
\def\vb{{\bm{b}}}
\def\vc{{\bm{c}}}
\def\vd{{\bm{d}}}
\def\ve{{\bm{e}}}

\def\vg{{\bm{g}}}
\def\vh{{\bm{h}}}

\def\vq{{\bm{q}}}
\def\vr{{\bm{r}}}
\def\vs{{\bm{s}}}
\def\vt{{\bm{t}}}
\def\vu{{\bm{u}}}
\def\vv{{\bm{v}}}
\def\vw{{\bm{w}}}
\def\vx{{\bm{x}}}
\def\vy{{\bm{y}}}
\def\vz{{\bm{z}}}

\def\evc{{c}}

\def\mA{{\bm{A}}}

\def\mD{{\bm{D}}}

\def\mG{{\bm{G}}}
\def\mH{{\bm{H}}}
\def\mI{{\bm{I}}}

\def\mK{{\bm{K}}}
\def\mL{{\bm{L}}}

\def\mN{{\bm{N}}}

\def\mS{{\bm{S}}}
\def\mT{{\bm{T}}}
\def\mU{{\bm{U}}}

\def\mW{{\bm{W}}}
\def\mX{{\bm{X}}}
\def\mY{{\bm{Y}}}
\def\mZ{{\bm{Z}}}

\def\mLambda{{\bm{\Lambda}}}

\DeclareMathAlphabet{\mathsfit}{\encodingdefault}{\sfdefault}{m}{sl}
\SetMathAlphabet{\mathsfit}{bold}{\encodingdefault}{\sfdefault}{bx}{n}

\def\gC{{\mathcal{C}}}

\def\gI{{\mathcal{I}}}
\def\gJ{{\mathcal{J}}}

\def\gM{{\mathcal{M}}}
\def\gN{{\mathcal{N}}}

\def\gS{{\mathcal{S}}}
\def\gT{{\mathcal{T}}}

\def\gZ{{\mathcal{Z}}}

\def\sR{{\mathbb{R}}}

\def\sV{{\mathbb{V}}}

\def\sZ{{\mathbb{Z}}}

\def\emA{{A}}



%% file: fig_remilp_views.tex
\definecolor{viewink}{HTML}{34434B}
\definecolor{viewblue}{HTML}{326D91}
\definecolor{viewbluepale}{HTML}{EAF3F8}
\definecolor{vieworange}{HTML}{C27328}
\definecolor{vieworangepale}{HTML}{FFF1E1}
\definecolor{viewpurple}{HTML}{7652A0}
\definecolor{viewpurplepale}{HTML}{F2ECF9}
\definecolor{viewcoral}{HTML}{C96561}
\definecolor{viewcoralpale}{HTML}{FBEDEC}
\definecolor{viewteal}{HTML}{357E78}
\definecolor{viewtealpale}{HTML}{E9F5F2}
\begin{tikzpicture}[
  x=1cm,y=1cm,>=Latex,font=\sffamily\small,
  flow/.style={-{Latex[length=2.5mm]},line width=.95pt,draw=viewink!75},
  stepbadge/.style={circle,draw=viewink,fill=viewink,text=white,
    minimum size=5.6mm,inner sep=0pt,font=\sffamily\scriptsize\bfseries},
  cnode/.style={circle,draw=viewcoral,fill=viewcoralpale,minimum size=4.7mm,inner sep=0pt,font=\sffamily\scriptsize},
  newcnode/.style={cnode,dashed,fill=white},
  vnode/.style={circle,draw=viewblue,fill=viewbluepale,
    minimum size=4.7mm,inner sep=0pt,font=\sffamily\scriptsize}
]
\node[draw=vieworange!75,fill=vieworangepale,rounded corners=7pt,
  minimum width=2.80cm,minimum height=2.25cm] (base) at (1.62,3.52) {};
\node[font=\sffamily\bfseries,text=viewink] at (1.62,4.25) {Input MILP $M_0$};
\node[align=left] at (1.62,3.41)
  {$\begin{aligned}
    \min_{\vx\in\sV}\quad &\vc^{\T}\vx\\[-1pt]
    \text{s.t.}\quad &\mA\vx\le\vb
  \end{aligned}$};

\node[draw=viewblue!65,fill=viewbluepale,rounded corners=7pt,
  minimum width=5.80cm,minimum height=2.40cm] (ref) at (7.06,5.22) {};
\node[font=\sffamily\bfseries,text=viewblue] at (7.06,6.04)
  {Reference view $M$};
\node[align=left] at (7.06,5.07)
  {$\begin{aligned}
    \min_{\vx\in\sV}\quad &\vc^{\T}\vx\\[-1pt]
    \text{s.t.}\quad &(\mLambda\mA)\vx\le\mLambda\vb+\vs
  \end{aligned}$};

\node[draw=viewpurple!65,fill=viewpurplepale,rounded corners=7pt,
  minimum width=5.80cm,minimum height=2.40cm] (trans) at (7.06,1.82) {};
\node[font=\sffamily\bfseries,text=viewpurple] at (7.06,2.64)
  {Transformed view $M'$};
\node[align=left] at (7.06,1.68)
  {$\begin{aligned}
    \min_{\vy\in\sV}\quad &(\textcolor{vieworange}{\mT'}^{\T}\vc)^{\T}\vy\\[-1pt]
    \text{s.t.}\quad &(\mLambda'\mA\textcolor{vieworange}{\mT'})\vy\\[-1pt]
      &\le\mLambda'(\vb-\mA\textcolor{vieworange}{\vq'})+\vs'
  \end{aligned}$};

\draw[draw=viewink!75,line width=.95pt]
  (base.east)--(3.59,3.52) (3.59,1.82)--(3.59,5.22);
\draw[flow] (3.59,5.22)--(ref.west);
\draw[flow] (3.59,1.82)--(trans.west);

\node[draw=viewblue!45,fill=white,rounded corners=7pt,
  minimum width=4.19cm,minimum height=2.70cm] (gref) at (12.74,5.22) {};
\node[draw=viewpurple!45,fill=white,rounded corners=7pt,
  minimum width=4.19cm,minimum height=2.70cm] (gtrans) at (12.74,1.82) {};
\draw[flow] (ref.east)--(gref.west);
\draw[flow] (trans.east)--(gtrans.west);

\node[font=\sffamily\bfseries,text=viewblue] at (12.74,6.09)
  {Bipartite graph $\mathcal G(M)$};
\node[font=\sffamily\scriptsize,text=viewink!75] at (11.67,5.74)
  {constraints};
\node[font=\sffamily\scriptsize,text=viewink!75] at (13.82,5.74)
  {variables};
\coordinate (rc1) at (11.67,5.35);
\coordinate (rc2) at (11.67,4.81);
\coordinate (rr) at (11.67,4.27);
\coordinate (ri) at (13.82,5.35);
\coordinate (rj) at (13.82,4.81);
\coordinate (rk) at (13.82,4.27);
\draw[viewink!52,line width=.68pt]
  (rc1)--(ri) (rc1)--(rj) (rc2)--(ri) (rc2)--(rk)
  (rr)--(ri) (rr)--(rj) (rr)--(rk);
\node[cnode] at (rc1) {$c_1$};
\node[cnode] at (rc2) {$c_2$};
\node[newcnode] at (rr) {$r$};
\node[vnode] at (ri) {$i$};
\node[vnode] at (rj) {$j$};
\node[vnode] at (rk) {$k$};

\node[font=\sffamily\bfseries,text=viewpurple] at (12.74,2.69)
  {Bipartite graph $\mathcal G(M')$};
\node[font=\sffamily\scriptsize,text=viewink!75] at (11.67,2.34)
  {constraints};
\node[font=\sffamily\scriptsize,text=viewink!75] at (13.82,2.34)
  {variables};
\coordinate (tc1) at (11.67,1.95);
\coordinate (tc2) at (11.67,1.41);
\coordinate (tr) at (11.67,.87);
\coordinate (ti) at (13.82,1.95);
\coordinate (tj) at (13.82,1.41);
\coordinate (tk) at (13.82,.87);
\draw[viewink!52,line width=.68pt]
  (tc1)--(ti) (tc1)--(tj) (tc2)--(ti) (tc2)--(tk)
  (tr)--(ti) (tr)--(tj);
\draw[vieworange,line width=1.65pt] (tc2)--(tj);
\node[cnode] at (tc1) {$c'_1$};
\node[cnode] at (tc2) {$c'_2$};
\node[newcnode] at (tr) {$r'$};
\node[vnode] at (ti) {$i$};
\node[vnode] at (tj) {$j$};
\node[vnode] at (tk) {$k$};

\node[draw=viewteal!75,fill=viewtealpale,rounded corners=7pt,
  minimum width=2.30cm,minimum height=2.15cm] (encoder) at (16.66,3.52) {};
\node[font=\sffamily\bfseries,text=viewteal] at (16.66,4.23)
  {GAT encoder};
\draw[-{Latex[length=1.7mm]},draw=viewteal,line width=.9pt]
  (16.91,3.77)--(16.41,3.77);
\draw[-{Latex[length=1.7mm]},draw=viewteal,line width=.9pt]
  (16.41,3.26)--(16.91,3.26);
\foreach \yy in {3.77,3.26} {
  \node[cnode,minimum size=4.5mm,font=\sffamily\scriptsize] at (16.19,\yy) {$C$};
  \node[vnode,minimum size=4.5mm,font=\sffamily\scriptsize] at (17.13,\yy) {$V$};
}
\node[font=\sffamily\scriptsize,text=viewteal] at (16.66,2.80)
  {$\times 2$ layers, shared $f_\vtheta$};
\draw[flow] (gref.east)--(15.15,5.22)--(15.15,3.94)--(15.51,3.94);
\draw[flow] (gtrans.east)--(15.15,1.82)--(15.15,3.10)--(15.51,3.10);

\node[font=\sffamily\bfseries,text=viewink] at (20.18,6.35)
  {Variable embeddings};
\node[draw=viewblue!65,fill=viewbluepale,rounded corners=7pt,
  minimum width=2.70cm,minimum height=1.55cm] (href) at (20.18,5.22) {};
\node[draw=viewpurple!65,fill=viewpurplepale,rounded corners=7pt,
  minimum width=2.70cm,minimum height=1.55cm] (htrans) at (20.18,1.82) {};
\draw[flow] (17.81,3.94)--(18.18,3.94)--(18.18,5.22)--(href.west);
\draw[flow] (17.81,3.10)--(18.18,3.10)--(18.18,1.82)--(htrans.west);
\node[font=\sffamily\bfseries,text=viewblue] at (20.18,5.68)
  {$\mH_\vtheta(M)$};
\node[font=\sffamily\bfseries,text=viewpurple] at (20.18,2.28)
  {$\mH_\vtheta(M')$};
\foreach \rowindex/\yy/\lab in {0/5.31/i,1/5.07/j,2/4.83/k} {
  \node[font=\sffamily\scriptsize,text=viewink] at (19.43,\yy) {$\lab$};
  \foreach \colindex in {0,...,4} {
    \pgfmathtruncatemacro{\tone}{22+mod((\rowindex+2)*(\colindex+3),5)*12}
    \draw[fill=viewblue!\tone,draw=viewblue!75,line width=.2pt]
      ({19.66+.29*\colindex},{\yy-.09}) rectangle ++(.25,.18);
  }
}
\foreach \rowindex/\yy/\lab in {0/1.91/i,1/1.67/j,2/1.43/k} {
  \node[font=\sffamily\scriptsize,text=viewink] at (19.43,\yy) {$\lab$};
  \foreach \colindex in {0,...,4} {
    \pgfmathtruncatemacro{\tone}{22+mod((\rowindex+3)*(\colindex+2)+1,5)*12}
    \draw[fill=viewpurple!\tone,draw=viewpurple!75,line width=.2pt]
      ({19.66+.29*\colindex},{\yy-.09}) rectangle ++(.25,.18);
  }
}
\draw[draw=viewblue,line width=.75pt,rounded corners=.5pt]
  (19.62,5.20) rectangle (21.11,5.42);
\draw[draw=viewblue,line width=.75pt,rounded corners=.5pt]
  (19.62,4.96) rectangle (21.11,5.18);
\draw[draw=viewpurple,line width=.75pt,rounded corners=.5pt]
  (19.62,1.80) rectangle (21.11,2.02);
\node[font=\sffamily\scriptsize,text=viewblue] at (20.18,4.59) {$n\times d$};
\node[font=\sffamily\scriptsize,text=viewpurple] at (20.18,1.19) {$n\times d$};

\node[draw=viewink!30,fill=viewink!3,rounded corners=9pt,
  minimum width=21.12cm,minimum height=5.37cm] at (10.87,-3.115) {};
\draw[draw=viewink!45,dashed,line width=.75pt]
  (href.east)--(21.77,5.22)
  (htrans.east)--(21.77,1.82);
\draw[-{Latex[length=1.7mm]},draw=viewink!45,dashed,line width=.75pt]
  (21.77,5.22)--(21.77,-.14)--(20.94,-.14)--(20.94,-.43);
\node[font=\sffamily\scriptsize,text=viewink!65] at (19.64,-.14)
  {selected rows};
\node[anchor=west,font=\sffamily\bfseries,text=viewink]
  at (1.02,-.70) {Local prediction for variable $i\in\gI$};

\node[draw=vieworange!70,fill=vieworangepale,rounded corners=5pt,
  minimum width=6.05cm,minimum height=.59cm] (params) at (3.51,-1.42)
  {$T'_{ii}=u_i,\quad T'_{i,j(i)}=\lambda_i,\quad q'_i=\mu_i$};
\node[draw=viewblue!65,fill=viewbluepale,rounded corners=5pt,
  minimum width=3.66cm,minimum height=.59cm] (partner) at (3.51,-2.30)
  {$\sg\!\left(\vh_{j(i)}(M)\right)$};
\node[font=\sffamily\scriptsize,text=viewink!65] at (3.51,-2.80)
  {row $j(i)$ of $\mH_\vtheta(M)$, stop-gradient};
\node[inner sep=0pt,minimum width=2.86cm,minimum height=1.26cm]
  (hyper) at (9.13,-1.86) {};
\path[draw=viewteal!75,fill=viewtealpale,line width=.8pt]
  (7.70,-2.49)--(9.93,-2.49)
  arc[start angle=-90,end angle=90,radius=.63cm]
  --(7.70,-1.23)--cycle;
\node[font=\sffamily\bfseries,text=viewteal] at (9.13,-1.61)
  {Hypernetwork};
\node[font=\sffamily\large,text=viewteal] at (9.13,-2.03)
  {$\varrho_\phi$};
\draw[flow] (params.east)--(7.70,-1.42);
\draw[flow] (partner.east)--(7.70,-2.30);

\node[draw=viewteal!70,fill=white,rounded corners=5pt,
  minimum width=2.70cm,minimum height=1.28cm] (operator) at (12.45,-1.86) {};
\node[font=\sffamily\scriptsize,text=viewink] at (12.45,-.97)
  {$t_i\coloneq\bigl(u_i,\lambda_i,\mu_i,\vh_{j(i)}(M)\bigr)$};
\node[font=\sffamily\small,text=viewteal] at (12.45,-1.46)
  {$\varrho_\phi(t_i)$};
\foreach \rowindex in {0,...,2} {
  \foreach \colindex in {0,...,2} {
    \pgfmathtruncatemacro{\tone}{24+mod((\rowindex+1)*(\colindex+2),3)*17}
    \draw[fill=viewteal!\tone,draw=viewteal!75,line width=.2pt]
      ({11.60+.20*\colindex},{-1.87-.20*\rowindex}) rectangle ++(.16,-.16);
  }
}
\draw[draw=viewteal!80,line width=.6pt]
  (11.55,-1.85)--(11.48,-1.85)--(11.48,-2.45)--(11.55,-2.45)
  (12.18,-1.85)--(12.25,-1.85)--(12.25,-2.45)--(12.18,-2.45);
\node[font=\sffamily\scriptsize,text=viewteal] at (13.07,-2.16)
  {$\in\sR^{d\times d}$};
\draw[flow] (hyper.east)--(operator.west);

\node[draw=viewblue!65,fill=viewbluepale,rounded corners=5pt,
  minimum width=2.75cm,minimum height=.70cm] (sourceemb) at (8.83,-3.52)
  {$\vh_i(M)$};
\node[font=\sffamily\scriptsize,text=viewblue] at (8.83,-4.06)
  {row $i$ of $\mH_\vtheta(M)$};
\node[draw=viewteal!70,fill=viewtealpale,circle,
  minimum size=.73cm] (applyop) at (12.45,-3.52) {$\cdot$};
\node[draw=viewteal!70,fill=white,rounded corners=5pt,
  minimum width=3.42cm,minimum height=.70cm] (prediction) at (15.13,-3.52)
  {$\vz_{i,0}=\varrho_\phi(t_i)\cdot\vh_i(M)$};
\node[draw=viewpurple!65,fill=viewpurplepale,rounded corners=5pt,
  minimum width=2.44cm,minimum height=.70cm] (observed) at (19.35,-3.52)
  {$\vz_i=\vh_i(M')$};
\node[font=\sffamily\scriptsize,text=viewpurple] at (19.35,-4.06)
  {row $i$ of $\mH_\vtheta(M')$};
\draw[flow] (sourceemb.east)--(applyop.west);
\draw[flow] (operator.south)--(applyop.north);
\draw[flow] (applyop.east)--(prediction.west);
\draw[<->,draw=viewpurple!80,line width=.9pt]
  (prediction.east)--(observed.west)
  node[midway,above,font=\sffamily\scriptsize,text=viewink] {align};

\node[font=\sffamily\bfseries,text=viewink] at (17.58,-.77)
  {Contrastive candidates};
\node[draw=vieworange!70,fill=vieworangepale,rounded corners=5pt,
  minimum width=6.50cm,minimum height=.80cm] (actionneg) at (17.58,-1.55) {};
\node[draw=viewpurple!65,fill=viewpurplepale,rounded corners=5pt,
  minimum width=6.50cm,minimum height=.72cm] (variableneg) at (17.58,-2.42) {};
\node[font=\sffamily\scriptsize\bfseries,text=vieworange] at (17.58,-1.32)
  {Other descriptors ($K$)};
\node[font=\sffamily\scriptsize,text=viewink] at (17.58,-1.68)
  {$\bigl\{\varrho_\phi(t_{\alpha_k(i)})\cdot\vh_i(M)\bigr\}_{k=1}^{K}$};
\node[font=\sffamily\scriptsize\bfseries,text=viewpurple] at (17.58,-2.26)
  {Other variables ($K$)};
\node[font=\sffamily\scriptsize,text=viewink] at (17.58,-2.54)
  {$\bigl\{\vh_{\beta_k(i)}(M')\bigr\}_{k=1}^{K}$};
\node[draw=vieworange,fill=vieworange,circle,minimum size=.31cm,
  inner sep=0pt,font=\sffamily\scriptsize,text=white] at (14.65,-1.40) {$-$};
\node[draw=viewpurple,fill=viewpurple,circle,minimum size=.31cm,
  inner sep=0pt,font=\sffamily\scriptsize,text=white] at (14.65,-2.27) {$-$};

\node[draw=viewink!65,fill=white,rounded corners=5pt,
  minimum width=3.82cm,minimum height=.60cm] (loss) at (17.58,-4.90)
  {Contrastive loss $\mathcal L(M,M')$};
\draw[draw=viewink!55,line width=.8pt]
  (actionneg.east)--(21.08,-1.55)
  (variableneg.east)--(21.08,-2.42)
  (observed.east)--(21.08,-3.52)
  (21.08,-1.55)--(21.08,-4.90);
\draw[flow] (21.08,-4.90)--(loss.east);
\draw[flow] ([xshift=7.5mm]prediction.south)--([xshift=-17mm]loss.north);

\node[stepbadge] at (1.62,5.25) {1};
\node[stepbadge] at (16.66,5.03) {2};
\node[stepbadge] at (.67,-.70) {3};
\node[stepbadge] at (14.73,-.77) {4};

\end{tikzpicture}